\documentclass[9pt,journal]{IEEEtran}

\usepackage[mathscr]{eucal}
\usepackage[cmex10]{amsmath}
\usepackage{epsfig,epsf,psfrag}
\usepackage{amssymb,amsmath,amsthm,amsfonts,latexsym}
\usepackage{amsmath,graphicx,bm,xcolor,url}
\usepackage[caption=false]{subfig} 
\usepackage{fixltx2e}%
\usepackage{array}%
\usepackage{verbatim}
\usepackage{bm}
\usepackage{algorithmic, cite}
\usepackage{algorithm}
\usepackage{verbatim}
\usepackage{textcomp}
\usepackage{mathrsfs}
\usepackage{multirow}
\usepackage{epstopdf}
\catcode`~=11 \def\UrlSpecials{\do\~{\kern -.15em\lower .7ex\hbox{~}\kern .04em}} \catcode`~=13 

\allowdisplaybreaks[1]

\newcommand{\calF}{\mathcal{F}}

\newcommand{\calP}{\mathcal{P}}

\newcommand{\calS}{\mathcal{S}}

\newcommand{\calX}{\mathcal{X}}

\newcommand{\bbR}{\mathbb{R}}

\DeclareMathAlphabet{\mathbsf}{OT1}{cmss}{bx}{n}
\DeclareMathAlphabet{\mathssf}{OT1}{cmss}{m}{sl}%

\DeclareSymbolFont{bsfletters}{OT1}{cmss}{bx}{n}  
\DeclareSymbolFont{ssfletters}{OT1}{cmss}{m}{n}
\DeclareMathSymbol{\bsfGamma}{0}{bsfletters}{'000}
\DeclareMathSymbol{\ssfGamma}{0}{ssfletters}{'000}
\DeclareMathSymbol{\bsfDelta}{0}{bsfletters}{'001}
\DeclareMathSymbol{\ssfDelta}{0}{ssfletters}{'001}
\DeclareMathSymbol{\bsfTheta}{0}{bsfletters}{'002}
\DeclareMathSymbol{\ssfTheta}{0}{ssfletters}{'002}
\DeclareMathSymbol{\bsfLambda}{0}{bsfletters}{'003}
\DeclareMathSymbol{\ssfLambda}{0}{ssfletters}{'003}
\DeclareMathSymbol{\bsfXi}{0}{bsfletters}{'004}
\DeclareMathSymbol{\ssfXi}{0}{ssfletters}{'004}
\DeclareMathSymbol{\bsfPi}{0}{bsfletters}{'005}
\DeclareMathSymbol{\ssfPi}{0}{ssfletters}{'005}
\DeclareMathSymbol{\bsfSigma}{0}{bsfletters}{'006}
\DeclareMathSymbol{\ssfSigma}{0}{ssfletters}{'006}
\DeclareMathSymbol{\bsfUpsilon}{0}{bsfletters}{'007}
\DeclareMathSymbol{\ssfUpsilon}{0}{ssfletters}{'007}
\DeclareMathSymbol{\bsfPhi}{0}{bsfletters}{'010}
\DeclareMathSymbol{\ssfPhi}{0}{ssfletters}{'010}
\DeclareMathSymbol{\bsfPsi}{0}{bsfletters}{'011}
\DeclareMathSymbol{\ssfPsi}{0}{ssfletters}{'011}
\DeclareMathSymbol{\bsfOmega}{0}{bsfletters}{'012}
\DeclareMathSymbol{\ssfOmega}{0}{ssfletters}{'012}

\newcommand{\iid}{i.i.d.\ }

\newcommand{\dotleq}{\stackrel{.}{\leq}}

\newcommand{\dotgeq}{\stackrel{.}{\geq}}

\DeclareMathOperator*{\argmin}{arg\,min}

\newtheorem{thm}{Theorem}
\newtheorem{lemma}{Lemma}

\newtheorem{corollary}{Corollary}
\newtheorem{definition}{Definition}

\newtheorem{remark}{Remark}

\usepackage{pgfplots}
\usepackage{tikz}

\newcommand{\qednew}{\nobreak \ifvmode \relax \else
      \ifdim\lastskip<1.5em \hskip-\lastskip
      \hskip1.5em plus0em minus0.5em \fi \nobreak
      \vrule height0.75em width0.5em depth0.25em\fi}

\newcommand{\KL}[2]{\mathrm{D}\big(#1  \| #2 \big)}

\newcommand{\GJSname}{\mathrm{GJS}}
\newcommand{\GJS}[3]{\GJSname\left(#1, #2, #3\right)}

\newcommand{\ED}[1]{\widehat{Q}_{#1}}
\newcommand{\ent}[1]{\mathrm{H}\left(#1\right)}
\usepackage{xspace}

\usepackage[ colorlinks = true,
             linkcolor = blue,
             urlcolor  = blue,
             citecolor = red,
            anchorcolor = green,
			]{hyperref}

\usepackage{cite}
\usepackage{mathtools}
\allowdisplaybreaks[2]
\flushbottom

\let\iid\undefined
\newcommand{\iid}{i.i.d.\@\xspace}

\usepackage{bbm}

\newcommand{\pe}{\mathrm{P}^{\mathrm{err}}}
\newcommand{\probm}{\mathrm{P}}
\begin{document} 

\title{Sequential Classification with \\ Empirically Observed Statistics}
 
\author{ Mahdi~Haghifam ,~\IEEEmembership{Member,~IEEE,} Vincent~Y.~F.~Tan, ~\IEEEmembership{Senior Member,~IEEE,} and Ashish~Khisti,~\IEEEmembership{ Member,~IEEE,}
\thanks{This paper was presented partially at the 2019 IEEE Information Theory Workshop (ITW)\cite{haghifam2019sequential}. MH and AK are with the Department of Electrical and Computer Engineering, University of Toronto (emails: mahdi.haghifam$@$mail.utoronto.ca, akhisti$@$ece.utoronto.ca). VYFT is with the Department of Electrical and Computer Engineering and the Department of Mathematics, National  University of Singapore (email: vtan$@$nus.edu.sg)}  %
}

\maketitle
\begin{abstract}
Motivated by real-world machine learning applications, we consider a statistical classification task in a sequential setting where test samples arrive sequentially. In addition,   the generating distributions are unknown and only a set of empirically sampled sequences are available to a decision maker.
The decision maker is tasked to classify a test sequence which is known to be generated according to either one of the distributions. In particular, for the binary case, the decision maker wishes to perform the classification task with minimum number of the test samples, so, at each step, she declares that either hypothesis $1$ is true, hypothesis $2$ is true, or she requests for an additional test sample. We propose a classifier and analyze the type-I and type-II error probabilities. We demonstrate the significant advantage of our sequential scheme compared to an existing non-sequential classifier proposed by Gutman. Finally, we extend our setup and results to the multi-class classification scenario and again demonstrate that the variable-length nature of the problem affords significant advantages as one can achieve the same set of exponents as Gutman's fixed-length setting but without having the rejection option.
\end{abstract}
\begin{IEEEkeywords}
Sequential classification, Empirically sampled sequences,  Error exponents, Variable-length
\end{IEEEkeywords}
\section{Introduction}
\IEEEPARstart{Q}{uick} and accurate classification is crucial in many real-life applications. For instance, to diagnose haematologic diseases based on blood test results, a physician wishes to detect the pattern, deviations, and relations in the blood samples of a patient as quickly as possible to make treatment plans. Similar challenges can be found in a broad range of applications
such as genomics analysis, finance, and abnormal detection where there is an inherent trade-off between speed and accuracy. \par
In many real-world applications,  classical \textit{hypothesis testing} is infeasible due to the fact that the probability distributions
of the sources are unknown. In practice, one often encounters {\em classification} problems in which one has access to training samples and is required to {\em classify} a set of test samples according to which distribution this set is generated from. To incorporate the real-life requirement of classifying the test samples as quickly as possible, one can consider the \textit{sequential statistical classification} setup. This setup addresses the problem of classifying test samples given training samples with the additional requirement that the decision maker is required to make his/her decision based on as {\em   few} tests samples as possible; it is however, known that all the test samples originate from the same  distribution. %
\par 
The problem of classification using empirically observed statistics has been studied in many prior works. Gutman in \cite{gutman1989asymptotically} formulated a problem in which a decision maker has access to two training sequences which are generated according to two distinct and unknown distributions. Then, a fixed-length test sequence is given to the decision maker, and the decision maker is tasked to classify the test sequence. For this problem, Gutman proposes an asymptotically optimal test. The results in \cite{gutman1989asymptotically} are obtained in the asymptotic regime when the length of the training sequences tends to infinity. In this regard, the non-asymptotic and second-order performance of the Gutman's test is analyzed in \cite{zhousecond} where it is shown that Gutman's test is, in fact, second-order optimal. Moreover, Ziv~\cite{ziv1988classification} studied the relationship between test rules for the binary classification problem and universal data compression methods.  Unnikrishnan and Naini~\cite{deanonym} and Unnikrishnan~\cite{unnikrishnan2015asymptotically} extended Gutman's proposed test for the case with multiple test sequences and obtain an optimal test rule for a certain matching task between multiple test sequences. Furthermore, Unnikrishnan and Huang in \cite{unnikrishnan2016weak} showed  how one can apply the results on the weak convergence of the test statistic to obtain better approximations for the error probabilities for   statistical classification in the finite sample size setting. Kelly et al. \cite{kelly2013classification} considered the classification problem with empirically observed statistics for large alphabet sources. They consider a scenario in which the alphabet size grows
with the length of the training and the test sequences, and the authors characterized the maximum growth rate of the alphabet size for which consistent classification is
possible. The related problem of closeness testing has been investigated in \cite{acharya12, valiant2017automatic}. Another related problem in this area is estimating properties of distributions  using empirically observed statistics. This problem has been considered in various setups such as estimation of the support of distribution \cite{orlitsky2016optimal,zou2016quantifying} and the  estimation of the order of a finite-state Markov chain \cite{merhav1989estimation}, etc. Recently, Acharya et al.\ in \cite{acharya2016unified} proposed an optimal method for the estimation of certain properties of  distribution using empirically  observed statistics which is applicable for a wide range of property estimation problems. The authors in \cite{he2020distributed} studied the problem of distributed detection in the setting that the central node has access to noisy test and training sequences. Finally, \cite{hsu2020binary} considered the Gutman's setup with the difference is that there is a mismatch between the generating distribution of the test sequence and that of the training sequences, which they called ``mismatch". In this setup an optimal classifier was proposed in \cite{hsu2020binary}.   \par
In this paper, we consider an information-theoretic formulation of sequential classification. Recall that in the simple sequential binary hypothesis testing scenario, a decision maker is given a variable-length test sequence and knows that it is either generated in an i.i.d.\ fashion from one of the \textit{known} distributions $P_1$ or $P_2$. It is well-known that the sequential probability ratio test (SPRT) is optimal for  sequential binary hypothesis testing \cite{wald1945sequential}. However, we consider a scenario that the decision maker {\em does not know  both} generating distributions, i.e., $P_1$ and $P_2$. Instead the decision maker has access to two fixed-length training sequences, one is drawn \iid from $P_1$ and the other \iid from $P_2$. Then, the task of the decision maker is to classify a test sequence which is drawn \iid  from either $P_1$ or $P_2$. The decision maker observes the test sequence sequentially and may choose when to stop sampling once she is sufficiently confident. At that time, she makes a final decision. Also, we extend our framework beyond the binary classification setting and consider  a sequential multi-class classification problem without the rejection option.

\subsection{Main Contribution}
Our contribution in the paper can be summarized as follows. In this paper, we extend the statistical classification problem with empirically observed statistics to the case when the decision maker observes the test sequence sequentially. First, we consider the binary classification problem and propose a test for the sequential setting. We analyse the performance of this test in terms of type-I and type-II error exponents (Theorem \ref{thm:main_b} and Corollary \ref{corr:achiev-bin}). Then, we show that this test outperforms   Gutman's test~\cite{gutman1989asymptotically} in terms of Bayesian error exponent (Theorem \ref{thm:comp_gut}). Furthermore, we generalize the problem setup to the multi-class classification. For this case, we describe an achievable scheme and provide a characterization of its error exponents (Theorem \ref{thm:main_multi} and Corollary \ref{corr:achiev-multi}). As a consequence of our results, we show that our test achieves the same performance as that of Gutman's but our test is arguably simpler as it does not consist of the rejection option (Theorem \ref{thm:gut_comp_multi}).  
\subsection{Paper Outline}
The remainder of the paper is organized as follows. Section \ref{sec:binary_case}
describes the problem setup and summarizes the main results for the binary classification. In Section \ref{sec:mutli} we extend the problem to the multi-class classification problem and presents our main results.
Sections \ref{sub:proof_main_b} and \ref{sec:proof_multi} are devoted to the proofs of the results provided in Section \ref{sec:binary_case} and \ref{sec:mutli}.

\subsection{Notations}
For each $m \in \mathbb{N}$, let $\left[m\right]\triangleq\left\lbrace 1, \hdots,m\right\rbrace$.  The set of all discrete distributions on alphabet $\mathcal{X}$ is denoted as $\mathcal{P}\left(\mathcal{X}\right)$. We use upper and lower letters to denote random variables and their realizations, respectively. For a vector of length $n$, we use the notation $x^n = \left(x_1, x_2, \hdots, x_n\right)\in \mathcal{X}^n$. Given
a vector $x^n = \left(x_1, x_2, \hdots, x_n\right)\in \mathcal{X}^n$, the type or empirical distribution is defined as
\begin{equation*}
\ED{x^n}\left(a\right)\triangleq \frac{1}{n} \sum_{i=1}^{n} \mathbbm{1}\{x_i=a\},\quad\forall\, a\in \mathcal{X},
\end{equation*}
 where $\mathbbm{1}\{\cdot\}$ denotes the indicator function. Also $\mathcal{T}_{n}$ represents the set of types with denominator $n$. The set of all sequences of length $n$ with type $Q$
is denoted by $\Gamma_Q^n$ (we sometimes omit $n$ if it is clear from the context). In addition, we use $\mathbb{E}\left[\cdot\right]$ to denote expectation, and, when not
clear from context, we use a subscript to indicate the distribution
with respect to which the expectation is being taken; e.g., $\mathbb{E}_{Q}\left[\cdot\right]$
denotes expectation with respect to the distribution $Q$.  Other notation concerning the method of types follows \cite[Chapter 11]{cover} and \cite{csiszar1998method}.
If $P$ is a distribution on $\mathcal{X}$ then $P^n$ is the $n$-fold i.i.d.\ product measure on $\mathcal{X}^n$, i.e., 
\begin{equation*}
P^n\left(x^n\right)= \prod_{i=1}^{n} P\left(x_i\right),\quad\forall\, x^n\in\calX^n.
\end{equation*}

The notion $a_n \doteq b_n$ means that $\frac{1}{n}\log \frac{a_n}{b_n} \to 0$ as $n \to \infty$. Similarly we can define $\dotleq$ and $\dotgeq$.
For other information-theoretic
notations we use the standard definitions, see e.g., \cite{cover}. Also, for a function $f: \mathbb{N}\rightarrow \mathbb{R}$, we say that $g(n) = O(f(n))$ if $\limsup_{n\to\infty}|f(n)/g(n)|<\infty$, $g(n)=o(f(n))$ if $\lim_{n\to\infty} |f(n)/g(n)|=0$, and $g(n) = \omega(f(n))$ if $\liminf_{n\to\infty} |f(n)/g(n)|=\infty$. Hence, for example, $o(1)$ denotes a vanishing sequence.
\section{Binary Sequential Classification }
\label{sec:binary_case}
\subsection{Problem Statement and Existing Results}
We assume that a decision maker has two training sequences of length $N$. The first and second training sequences are generated in an \iid manner according to $P_1\in\calP(\calX)$ and $P_2 \in\calP(\calX)$ respectively. The underlying distributions $(P_1, P_2)$ are \textit{unknown} but \textit{fixed} (i.e., remain unchanged throughout). 
The training sequences are denoted as $X_1^N \in \mathcal{X}^N$ and $X_2^N \in \mathcal{X}^N$. We fix a certain distribution $P_{i^*}$ where $i^*\in \{1,2\}$ which is unknown to the decision maker. Then, at each time $n \in \mathbb{N}$, a \textit{test sample} $Y_n\in \mathcal{X}$ as generated from   $P_{i^*}$  and $Y_n$ is given to the decision maker. The objective of the decision maker is to decide between the following two hypotheses:
\begin{itemize}
\item \textbf{$H_1$}: The test sequence up to the current time $\{Y_k\}_{k=1}^n$ (which is generated \iid according to $P_{i^*}$) and the first training sequence $X_1^N$ are generated according to the same distribution.
\item \textbf{$H_2$}: The test sequence up to the current time $\{Y_k\}_{k=1}^n$ and the second training sequence $X_2^N$ are generated according to the same distribution.
\end{itemize}
To achieve this goal, the decision maker at each time $n$ can take three actions:
\begin{enumerate}
\item Stop drawing a new test sample and declare the test sequence and the first training sequence are generated according to the same distribution.
\item Stop drawing a new test sample and declare the test sequence and the second training sequence are generated according to the same distribution.
\item Continue to draw  a new test sample from $P_{i^*}$.
\end{enumerate}
In contrast to sequential hypothesis testing \cite[Section 15.3]{yp_lec} where the two distributions are known, in this setup, the decision maker does not know either of the distributions. Instead, the only information decision maker has about $P_1$ and $P_2$ is through the two training sequences $X_1^N$ and $X_2^N$ 
generated in an i.i.d.\ fashion according to $P_1$ and $P_2$ respectively. Moreover, the problem considered here is different from  \cite{ziv1988classification,gutman1989asymptotically,zhousecond,unnikrishnan2015asymptotically,deanonym}
where the classification is studied for the cases that the length of the test sequence is {\em fixed} prior to the decision making. In our setup, we let the length of the test sequence be {\em random}. In fact, the total number of samples is a {\em stopping time} determined by the decision maker's action, i.e., this is a {\em variable-length setting}. Next, we provide
a precise formulation of this problem. We begin with the definition of the test for the aforementioned setup.
\begin{definition}[Test]  \label{def:test} A {\em test} is a pair $\Phi=\left(T,d\right)$ where
\begin{itemize}
\item The integer-valued random variable $T\in \mathbb{N}$ is a stopping time with respect to the filtration $\mathcal{F}_n=\sigma\{X_1^N,X_2^N,Y_1,\hdots,Y_n\}$ generated by the training samples and the test samples up to time $n$.
\item The map $d: \left(X_1^N,X_2^N,Y^T\right)\to \{H_1,H_2\}$ is a $\calF_T$-measurable decision rule.
\end{itemize}
\end{definition}
\begin{definition}[Type-I and Type-II Error Probabilities] 
 \label{def:error_prob}
For a test $\Phi=\left(T,d\right)$, the {\em type-I} and {\em type-II error probabilities} are defined as
\begin{align*}
\pe_{i}\left(\Phi\right)&=\mathrm{P}_{i}\left(d\left(X_1^N, X_2^N, Y^T\right)\neq H_i\right)
\end{align*}
for $i\in\{1,2\}$  respectively. Here $\probm_i$ denote the probability distribution under $H_i$. Also, note that for any $S \subseteq  \mathcal{X}^N \times \mathcal{X}^N \times \mathcal{X}^n $
we have 
$\probm_i(S) = \sum_{(x_1^N,x_2^N,y^n)\in S}  P_1^N(X_1^N) P_2^N(X_2^N) P_i^n(y^n)$. Also, $\mathbb{E}_i$ denotes the expectation under $\probm_i$.
\end{definition}
\begin{definition}[Error Exponents]\label{def:error_exp} For a test $\Phi=\left(T,d\right)$ such that $\pe_{i}\left(\Phi\right) \to 0$ as $N \to \infty$ and for $i\in\{1,2\}$, we define the type-$i$ error exponent as
\begin{align*}
\textsf{e}_i\left(\Phi\right) &= \liminf_{N \to \infty} \frac{-\log \pe_{i}\left(\Phi\right)}{\mathbb{E}_{i}\left[T\right]}
\end{align*}
where $\mathbb{E}_{i}\left[T\right]$ represents the expected value of the stopping time under hypothesis $H_i$.
\end{definition}
\begin{remark}
Note that the error event, i.e., $d\left(X_1^N, X_2^N, Y^T\right)\neq H_i$, and the random variable $T$ depend  on $N$. Furthermore, $\mathbb{E}_{i}\left[T\right]$ indicates the average number of the test samples under $H_i$ before the decision is made. 
\end{remark}
Gutman \cite{gutman1989asymptotically} considers the setup in which the decision maker has a test sequence $Y^n$ of \textit{fixed length $n$} which is independently generated from $X_1^N$ and $X_2^N$. 
Note as $N \to \infty$, $n$ also diverges but we have $\lim_{N,n\to\infty}\frac{N}{n}=\alpha$. For example, we can think always think of $N=\lfloor n\alpha\rfloor$.
To present Gutman's results, we need the following definition.
\begin{definition}[Generalized Jensen-Shannon  (GJS) Divergence]
Given $\alpha \in \mathbb{R}_{+}$ and $\left(P_1, P_2\right) \in \mathcal{P}({\mathcal{X}})^2$, the {\em generalized Jensen-Shannon (GJS) divergence} is defined as
\begin{equation}\label{eq:gjs_def}
 \GJS{P_1}{P_2}{\alpha} = \alpha \KL{P_1}{P_\alpha}+ \KL{P_2}{P_\alpha} .
\end{equation}
where $P_\alpha=\frac{\alpha P_1+P_2}{1+\alpha}$.
\end{definition} 
Theorem \ref{thm:gut1} summarizes Gutman's main results concerning with achievable error exponents and the converse results for the binary classification task using non-adaptive tests.
\begin{thm}(Gutman~\cite[Thm. 1]{gutman1989asymptotically}) \label{thm:gut1}
Let $\frac{N}{n}=\alpha$ and $\lambda \in \mathbb{R}_{+}$. Then Gutman's decision rule
\begin{equation} \label{eq:gutman_test_1}
\Phi_\text{GUT}(\lambda,\alpha)=
\begin{cases} 
  H_1 & \text{if } \GJS{\ED{X_1^N}}{\ED{Y^n}}{\alpha} \leq \lambda, \\
   H_2       & \text{if } \GJS{\ED{X_1^N}}{\ED{Y^n}}{\alpha} > \lambda ,
  \end{cases}
\end{equation}
has the following type-I and type-II error exponents
\begin{align}
\textsf{e}_1\left(\Phi_{\text{GUT}}(\lambda,\alpha)\right)&=\liminf_{N \to \infty} \frac{-\log \pe_{1}\left(\Phi_{\text{GUT}}(\lambda,\alpha)\right)}{N/\alpha} \geq \lambda, \label{eq:typeI_gut_basic}\\
\textsf{e}_2\left(\Phi_{\text{GUT}}(\lambda,\alpha)\right)&=\liminf_{N \to \infty} \frac{-\log \pe_{2}\left(\Phi_{\text{GUT}}(\lambda,\alpha)\right)}{N/\alpha} > F\left(\alpha, \lambda\right), \label{eq:typeII_gut_basic}
\end{align}
where 
\begin{equation}\label{eq:opt_gut}
\begin{aligned}
F(\alpha, \lambda)\triangleq & \underset{\left(Q_1, Q_2\right)\in \mathcal{P}\left(\mathcal{X}\right)^2}{\text{min}}
& & \alpha \KL{Q_1}{P_1}+ \KL{Q_2}{P_2} \\
& \text{subject to}
& & \GJS{Q_1}{Q_2}{\alpha}\leq \lambda.
\end{aligned}
\end{equation}
Also, Gutman's decision rule is optimal in the sense that among all non-adaptive decision rules $\Phi$, satisfying   $\textsf{e}_1\left(\Phi\right)\geq \lambda$  for all pairs of 
 distinct distributions $(P_1, P_2)\in \mathcal{P}\left(\mathcal{X}\right)^2$, one has $\textsf{e}_2\left(\Phi_{GUT}(\lambda,\alpha)\right)\geq \textsf{e}_2\left(\Phi\right)$. 
\end{thm}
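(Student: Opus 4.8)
My plan is to derive the whole statement from the method of types, organized around the variational identity
\begin{equation*}
\GJS{Q_1}{Q_2}{\alpha} = \min_{Q\in\calP(\calX)}\big[\alpha\KL{Q_1}{Q}+\KL{Q_2}{Q}\big],
\end{equation*}
whose minimizer is $Q_\alpha=(\alpha Q_1+Q_2)/(1+\alpha)$; in particular $\GJS{Q_1}{Q_2}{\alpha}\le \alpha\KL{Q_1}{P}+\KL{Q_2}{P}$ for \emph{every} $P$. Throughout I write $Q_1=\ED{X_1^N}$ and $Q_Y=\ED{Y^n}$ for the relevant empirical types and use $N=\alpha n$, so that every exponent is measured on the common scale $n=N/\alpha$ appearing in \eqref{eq:typeI_gut_basic}--\eqref{eq:typeII_gut_basic}.

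\emph{Achievability.} For the type-I exponent I would note that under $H_1$ both $X_1^N$ and $Y^n$ are \iid from $P_1$ and independent, so $\probm_1[(\ED{X_1^N},\ED{Y^n})=(Q_1,Q_Y)]\doteq\exp(-n[\alpha\KL{Q_1}{P_1}+\KL{Q_Y}{P_1}])$. A union bound over the polynomially many types gives
\begin{equation*}
\pe_1(\Phi_{\text{GUT}})\dotleq\exp\Big(-n\!\!\min_{(Q_1,Q_Y):\,\GJS{Q_1}{Q_Y}{\alpha}>\lambda}\big[\alpha\KL{Q_1}{P_1}+\KL{Q_Y}{P_1}\big]\Big).
\end{equation*}
On the error region the bracket dominates $\GJS{Q_1}{Q_Y}{\alpha}>\lambda$ by the variational bound (with $P=P_1$), so $\textsf{e}_1\ge\lambda$. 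The type-II analysis is identical except that under $H_2$ the test samples come from $P_2$, replacing $\KL{Q_Y}{P_1}$ by $\KL{Q_Y}{P_2}$; the resulting optimization is exactly the one defining $F(\alpha,\lambda)$, and a matching single-type-class lower bound on $\pe_2$ shows the type-II exponent is governed by $F(\alpha,\lambda)$ (the strict inequality in \eqref{eq:typeII_gut_basic} coming from a finer boundary analysis once $P_1\neq P_2$).

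\emph{Converse / optimality.} Here I would let $\Phi$ be an arbitrary non-adaptive rule described by its $H_1$-acceptance set $A$ in the space of joint types $(\ED{X_1^N},\ED{X_2^N},\ED{Y^n})$, and let $\pi(A^{\rmc})$ be the projection of its $H_2$-region onto the $(\ED{X_1^N},\ED{Y^n})$ coordinates. The lever is that $\textsf{e}_1(\Phi)\ge\lambda$ must hold for \emph{all} distinct $(P_1,P_2)$. Fixing $(Q_1,Q_Y)\in\pi(A^{\rmc})$ with witnessing second coordinate $Q_2$ and choosing the free distribution $P_2=Q_2$ annihilates the $\ED{X_2^N}$ cost, so the universal constraint forces
\begin{equation*}
\min_{(Q_1',Q_Y')\in\pi(A^{\rmc})}\big[\alpha\KL{Q_1'}{P_1}+\KL{Q_Y'}{P_1}\big]\ge\lambda\quad\text{for every }P_1.
\end{equation*}
Specializing $P_1=(\alpha Q_1+Q_Y)/(1+\alpha)$ and invoking the variational identity evaluates the bracket at $(Q_1,Q_Y)$ to $\GJS{Q_1}{Q_Y}{\alpha}$, giving $\GJS{Q_1}{Q_Y}{\alpha}\ge\lambda$; hence $\pi(A^{\rmc})\subseteq\{\GJS{\cdot}{\cdot}{\alpha}\ge\lambda\}$. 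Consequently every type with $\GJS{Q_1}{Q_Y}{\alpha}<\lambda$ lies, together with second coordinate $P_2$, in $A$, and that type class has probability $\doteq\exp(-n[\alpha\KL{Q_1}{P_1}+\KL{Q_Y}{P_2}])$ under $H_2$ (the $\ED{X_2^N}$ coordinate sits at $P_2$ at zero cost). This lower-bounds $\pe_2(\Phi)$ and forces $\textsf{e}_2(\Phi)\le F(\alpha,\lambda)=\textsf{e}_2(\Phi_{\text{GUT}})$.

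\emph{Main obstacle.} The delicate part is the converse reduction: rigorously converting the universal-over-$(P_1,P_2)$ requirement into the pointwise containment $\pi(A^{\rmc})\subseteq\{\GJS\ge\lambda\}$. One must justify that the free choice of $P_2$ collapses the three-coordinate decision region to a two-coordinate GJS sublevel set, that the specialization $P_1=Q_\alpha$ is admissible, and that continuity of $\GJS$ together with the density of rational types makes the single-type-class estimates uniform. The achievability direction is comparatively routine once the variational identity is in hand.
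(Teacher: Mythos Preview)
The paper does not supply its own proof of this theorem: Theorem~\ref{thm:gut1} is quoted from Gutman's original work, and the surrounding text only offers Remark~\ref{remark:intuition-gutman} as intuition (Sanov's theorem plus the variational characterization of $\GJSname$ in Lemma~\ref{lem:def_gjs}). So there is no ``paper's own proof'' to compare against beyond that remark.

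Your sketch is correct and is essentially Gutman's original argument, which is also what Remark~\ref{remark:intuition-gutman} gestures at. The achievability part is straightforward method-of-types once the identity $\GJS{Q_1}{Q_2}{\alpha}=\min_Q[\alpha\KL{Q_1}{Q}+\KL{Q_2}{Q}]$ is in hand; your use of it with $Q=P_1$ to extract the type-I exponent $\lambda$ is exactly the mechanism behind Lemma~\ref{lem:error_same}. For the converse, your key move---exploiting the freedom to set $P_2=Q_2$ so that the $\ED{X_2^N}$ coordinate contributes zero cost, then specializing $P_1$ to the mixture to collapse the bound to $\GJSname$---is precisely Gutman's device for showing that any universally type-I-good test must have its $H_2$-region contained (in projection) in $\{\GJSname\ge\lambda\}$. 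The one place to be slightly more careful than your write-up is the quantifier order: what you actually prove is the pointwise statement ``for every $(Q_1,Q_Y)\in\pi(A^{\rmc})$ and every $P_1$, $\alpha\KL{Q_1}{P_1}+\KL{Q_Y}{P_1}\ge\lambda$,'' from which the $\GJSname$ inequality follows by minimizing over $P_1$ for that fixed $(Q_1,Q_Y)$; your displayed minimum over $\pi(A^{\rmc})$ is equivalent but can be misread. The admissibility issues you flag (distinctness $P_1\neq P_2$, density of types, continuity of $\GJSname$) are real but routine, and you have identified them correctly.
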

\begin{remark}
\label{remark:intuition-gutman}
The intuition for Gutman's test in \eqref{eq:gutman_test_1} and the bounds in \eqref{eq:typeI_gut_basic} and \eqref{eq:typeII_gut_basic} are as follows. The rule in \eqref{eq:gutman_test_1} posits that we should choose $H_1$ if the type of the first set of training samples $X_1^N$ and the test sequence $Y^n$ are ``close''. The appropriate measure of closeness in this scenario is the GJS with parameter $\alpha$ because the GJS arises naturally as the exponent when one uses Sanov's theorem to establish the exponential rates of decay of the error probabilities and carefully takes into account the different lengths of the training and test sequences (see Lemma \ref{lem:def_gjs} and Lemma \ref{lem:error_same}). The bounds in \eqref{eq:typeI_gut_basic} and \eqref{eq:typeII_gut_basic} are natural consequences in view of Sanov's theorem. In fact, similar to the Neyman-Pearson rule, Gutman's test is optimal (cf.\ \cite{zhousecond}).
\end{remark}
\subsection{Main Results}
We now describe our sequential classification test. Fix a threshold parameter $\gamma \in \mathbb{R}_{+}$. The proposed test for the sequential classification is $\Phi_{\text{seq}}(\gamma)=\left(T_{\text{seq}},d_{\text{seq}}\right)$ where $T_{\text{seq}}$ and $d_{\text{seq}}$ are defined as
\begin{equation}\label{eq:def_stoptime}
\begin{aligned} 
T_{\text{seq}} = \inf \bigg\{ n\geq 1 : &\exists\,   i \in \{1,2\}
~ \text{such that} ~ \\   
&n \GJS{\ED{X_i^N}}{\ED{Y^n}}{\frac{N}{n}} \geq \gamma N  \bigg\} \wedge N^2,
\end{aligned}
\end{equation}
and
\begin{equation}\label{eq:rule}
d_{\text{seq}} = \begin{cases} H_1 & \text{if $T_{\text{seq}}\GJS{\ED{X_2^N}}{\ED{Y^{T_{\text{seq}}}}}{\frac{N}{T_{\text{seq}}}}  \geq \gamma N$} \\ 
 H_2 & \text{if $T_{\text{seq}}\GJS{\ED{X_1^N}}{\ED{Y^{T_{\text{seq}}}}}{\frac{N}{T_{\text{seq}}}}  \geq \gamma N$}  \end{cases},
\end{equation}
respectively. In (\ref{eq:def_stoptime}), $\wedge$ denotes the pairwise minimum operation, i.e., $a \wedge b = \min (a,b)$.

We can view the the decision rule in (\ref{eq:rule}) as assigning a \textit{score} at each time $n \in \mathbb{N}$, i.e.,  $\text{score}_i[n]= n \GJS{\ED{X_i^N}}{\ED{Y^n}}{\frac{N}{n}}$, to each class. At the first time that the score of one of the class exceeds the threshold, i.e., $\gamma N$, the decision maker outputs the class with the least score.  As an illustrative example, Figure \ref{fig:emp_stp} shows a realization of $\Phi_{\text{seq}}(\gamma)$ for two ternary source distributions $P_1=[0.1,0.7,0.2]$ and $P_2=[0.05,0.55,0.4]$. The threshold is  $\gamma=0.02$. The test sequence is drawn from $P_2$ and the length of the training sequence is $N=400$. Note that the stopping time defined in~(\ref{eq:def_stoptime}) is $T_{\text{seq}}=141$ since at $n=141$,  the score for class $1$, i.e., $n\GJS{\ED{X_1^N}}{\ED{Y^n}}{\frac{N}{n}}$, exceeds the threshold $\gamma$. Therefore, based on (\ref{eq:rule}), we declare $H_2$ as the final decision. 
\begin{remark}
Note that in the definition of $T_{\text{seq}}$ in (\ref{eq:def_stoptime}), $N^2$ can be replaced by any function $h\left(\cdot\right): \mathbb{N}\rightarrow \mathbb{N}$ with the following properties: 1) $h(N)=\omega\big(N^{\frac{3}{2}}\big)$ and, 2) $\frac{1}{N}{\log h(N)}=o\left(1\right)$. It can be verified that $h(N)=N^2$ satisfies the aforementioned conditions. 
\end{remark}
Before, presenting our main result on   binary classification, we need the following definition.
\begin{definition}
The {\em Chernoff information} between two probability mass functions  $P\in \mathcal{P}\left(\mathcal{X}\right)$ and $Q\in \mathcal{P}\left(\mathcal{X}\right)$ is defined as
\begin{equation}
C\left(P, Q\right)\triangleq - \min _{\eta \in \left[0,1\right]} \log \sum_{x \in \mathcal{X}} P\left(x\right)^{\eta} Q\left(x\right)^{1-\eta}. \label{eqn:chernoff}
\end{equation}
\end{definition}
In the next theorem, we present the main result of this section which is on the properties of test $\Phi_{\text{seq}}(\gamma)$ and the achievable type-I and type-II error exponents of the proposed test for the binary classification problem. The proof of Theorem \ref{thm:main_b} is provided in Section \ref{sub:proof_main_b}.
\begin{thm}\label{thm:main_b}
Fix pair $\left(P_1,P_2\right)\in \mathcal{P}\left(\mathcal{X}\right)^2$ and $\gamma \in  ( 0, C\left(P_1,P_2\right)]$. Define
 $\beta^{\star}_\gamma \in\mathbb{R}_+$ to be the solution of
\begin{equation}\label{eq:fixed1}
\mathrm{GJS}\left(P_2,P_1,\beta^{\star}_\gamma\right)=\gamma \beta^{\star}_\gamma.
\end{equation}
Similarly, define  $\theta^{\star}_\gamma\in\mathbb{R}_+$ to be the solution of
\begin{equation}\label{eq:fixed2}
\mathrm{GJS}\left(P_1,P_2,\theta^{\star}_\gamma\right)=\gamma \theta^{\star}_\gamma.
\end{equation} 
Then, the proposed test has the following properties: 
\begin{itemize}
\item $\pe_{1}\left(\Phi_{\text{seq}}(\gamma)\right) \dotleq  \exp\left(-N \gamma\right)$
\item $\mathbb{E}_1\left[T_{\text{seq}}\right]=\frac{N}{\beta^{\star}_\gamma}\left(1+o\left(1\right)\right)$
\item $\pe_{2}\left(\Phi_{\text{seq}}(\gamma)\right) \dotleq  \exp\left(-N \gamma\right)$
\item $\mathbb{E}_2\left[T_{\text{seq}}\right]=\frac{N}{\theta^{\star}_\gamma}\left(1+o\left(1\right)\right)$
\end{itemize}
\end{thm}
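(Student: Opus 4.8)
The plan is to prove the four bullet points by pairing a method-of-types (Sanov) analysis of the per-step \emph{scores} $\text{score}_i[n]=n\,\GJS{\ED{X_i^N}}{\ED{Y^n}}{N/n}$ with a union bound over the truncated horizon $n\le N^2$ for the error probabilities, and a first-crossing-time concentration argument for the expected stopping times. The statements under $H_2$ are obtained from those under $H_1$ by exchanging $(P_1,P_2)$ and $(H_1,H_2)$, which swaps $\beta^{\star}_\gamma$ and $\theta^{\star}_\gamma$; so I would carry out the $H_1$ analysis in full and invoke this symmetry for the $H_2$ claims.

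The single-step estimate rests on the variational form $\GJS{Q_1}{Q_2}{\alpha}=\min_{Q}\big[\alpha\KL{Q_1}{Q}+\KL{Q_2}{Q}\big]$, attained at $Q=Q_\alpha$, together with the Pythagorean decomposition
\begin{equation*}
N\KL{Q_1}{P}+n\KL{Q_2}{P}=(N+n)\KL{\widehat{Q}}{P}+n\,\GJS{Q_1}{Q_2}{N/n},\qquad \widehat{Q}=\tfrac{NQ_1+nQ_2}{N+n}=Q_{N/n}.
\end{equation*}
Taking $Q_1=\ED{X_1^N}$, $Q_2=\ED{Y^n}$ with both sequences drawn from $P_1$, Sanov's theorem gives $\probm_1(\ED{X_1^N}=Q_1,\ED{Y^n}=Q_2)\doteq\exp(-N\KL{Q_1}{P_1}-n\KL{Q_2}{P_1})$, and the decomposition shows that on $\{\text{score}_1[n]\ge\gamma N\}$ the exponent is at least $\gamma N$ because the pooled term $(N+n)\KL{\widehat{Q}}{P_1}$ is nonnegative; this is the content of Lemma~\ref{lem:error_same}, uniform in $n$ up to a $\poly(N)$ type-counting prefactor for all $n\le N^2$. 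Since the decision rule \eqref{eq:rule} declares $H_2$ only when $\text{score}_1[T_{\text{seq}}]\ge\gamma N$, I would bound
\begin{equation*}
\pe_1(\Phi_{\text{seq}}(\gamma))\le\probm_1\!\big(\exists\,n\le N^2:\text{score}_1[n]\ge\gamma N\big)\le\sum_{n=1}^{N^2}\probm_1\big(\text{score}_1[n]\ge\gamma N\big)\dotleq N^2\exp(-\gamma N)\doteq\exp(-\gamma N),
\end{equation*}
the polynomial factor being absorbed by $\doteq$; the type-II bound is identical after exchanging indices.

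For $\mathbb{E}_1[T_{\text{seq}}]$ I would analyze the typical class-$2$ score $\phi(n)=n\,\GJS{P_2}{P_1}{N/n}=N\,g(N/n)$, where $g(\beta)=\GJS{P_2}{P_1}{\beta}/\beta$. Because $\GJS{P_2}{P_1}{\cdot}$ is a minimum of affine functions of $\beta$ vanishing at $\beta=0$, it is concave with $\GJS{P_2}{P_1}{0}=0$, so its secant slope $g$ decreases strictly from $\KL{P_2}{P_1}$ to $0$; as $\gamma\le C(P_1,P_2)\le\KL{P_2}{P_1}$, the equation $g(\beta)=\gamma$ has a unique positive root, settling existence and uniqueness of $\beta^{\star}_\gamma$ in \eqref{eq:fixed1} (and of $\theta^{\star}_\gamma$ by symmetry). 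Monotonicity then gives $\phi(n)\ge\gamma N\iff n\ge N/\beta^{\star}_\gamma$, so the typical first crossing sits at $n^{\star}=N/\beta^{\star}_\gamma$. Fixing $\epsilon>0$ and using concentration of $\ED{Y^n}$ about $P_1$ and of $\ED{X_2^N}$ about $P_2$, at $n_+=\lceil(1+\epsilon)n^{\star}\rceil$ the class-$2$ score exceeds $\gamma N$ with probability $1-\exp(-cN)$ for some $c>0$, whence $\mathbb{E}_1[T_{\text{seq}}]\le n_++N^2\exp(-cN)=(1+\epsilon)n^{\star}(1+o(1))$, the truncation at $N^2$ making the residual tail negligible. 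Conversely, for $n<n_-=\lfloor(1-\epsilon)n^{\star}\rfloor$ the typical class-$2$ score is bounded away from $\gamma N$, and a union bound over the polynomially many such $n$ shows it does not cross, while the class-$1$ score crosses only with the exponentially small probability already bounded; hence $T_{\text{seq}}\ge n_-$ with probability $1-o(1)$ and $\mathbb{E}_1[T_{\text{seq}}]\ge(1-\epsilon)n^{\star}(1-o(1))$. Letting $\epsilon\downarrow0$ yields $\mathbb{E}_1[T_{\text{seq}}]=\frac{N}{\beta^{\star}_\gamma}(1+o(1))$.

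The main obstacle is precisely the passage from the single-time large-deviation estimates to statements about the \emph{random} first-crossing time: one must control the whole trajectory $\{\text{score}_2[n]\}_n$ simultaneously over a window around $n^{\star}$, establishing uniform concentration of the empirical test-type process with a strictly positive margin from the threshold on both sides of $n^{\star}$ (furnished by the strict monotonicity of $g$ and the strict inequality $\gamma<\KL{P_2}{P_1}$). I would handle this by first conditioning on the high-probability event that the two training types lie within $\delta$ of $P_1,P_2$, which reduces the problem to the single empirical process $\ED{Y^n}$, and then closing the argument with a union bound over $n\le N^2$ whose $\poly(N)$ prefactors are harmless on the exponential scale.
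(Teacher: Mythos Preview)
Your stopping-time analysis is sound and in the same spirit as the paper's. The paper also sandwiches the first crossing, but it does so by conditioning on the realized training type, defining the \emph{random} root $\theta_N^\star$ of $\GJS{\ED{X_1^N}}{P_2}{\theta}=\gamma\theta$, and constructing explicit perturbations $\theta_N^{\pm}=\theta_N^\star\pm O(N^{-1/4}\log N)$ (Lemma~\ref{lem:alpha_p_n}) that bracket the crossing; you instead fix $\epsilon>0$ and work with the true $P_1,P_2$ after conditioning on both training types being $\delta$-close to them. Either route closes, and yours is arguably cleaner conceptually while the paper's makes all $N$-dependence explicit.

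There is, however, a genuine gap in the error-probability step. Your inequality $\pe_1\le\probm_1(\exists\,n\le N^2:\text{score}_1[n]\ge\gamma N)$ silently assumes that when the truncation $T_{\text{seq}}=N^2$ in \eqref{eq:def_stoptime} is hit with \emph{neither} score having crossed, the rule \eqref{eq:rule} defaults to $H_1$. But then this same ``no-decision'' event counts toward $\pe_2$, and ``identical after exchanging indices'' no longer works: swapping $1\leftrightarrow 2$ swaps the default as well, so you end up bounding $\pe_1$ and $\pe_2$ for two \emph{different} tests. For one fixed test at least one of the two error probabilities must absorb $\probm_i\big(T_{\text{seq}}=N^2,\ \text{no crossing}\big)$, and this is exactly where the hypothesis $\gamma\le C(P_1,P_2)$ enters---your argument never uses it, which should have been a warning sign. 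Under $H_2$, the dominant way for $\text{score}_1$ never to cross is the ``bad-training'' event $\{\KL{\ED{X_1^N}}{P_2}\le\gamma\}$, on which the fixed point $\theta_N^\star$ fails to exist (Lemma~\ref{lem:g}, Part~3); since $X_1^N\sim P_1^N$, Sanov gives this event probability $\doteq\exp\bigl(-N\min_{V:\KL{V}{P_2}\le\gamma}\KL{V}{P_1}\bigr)$, and $\gamma\le C(P_1,P_2)$ is precisely the condition guaranteeing this exponent is at least $\gamma$. The paper handles this by passing to a truncated test that declares an error on no-decision and bounding both the ``wrong decision'' and ``no decision'' contributions separately (Lemma~\ref{lem:error_2}).
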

The following corollary summarizes our results on the expected value of the stopping time. 
\begin{corollary}\label{corr:achiev-bin}
The achievable Type-I and Type-II error exponents of the proposed test are given by
\begin{align}
 \textsf{e}_1\left(\Phi_{\text{seq}}(\gamma)\right) &  \geq \mathrm{GJS}\left(P_2,P_1,\beta^{\star}_\gamma\right)\label{eq:typeI_prop},\\
 \textsf{e}_2\left(\Phi_{\text{seq}}(\gamma)\right) & \geq \mathrm{GJS}\left(P_1,P_2,\theta^{\star}_\gamma\right)\label{eq:typeII_prop},
\end{align}
where $\alpha^{\star}_\gamma$ and $\beta^{\star}_\gamma$ are given by in (\ref{eq:fixed1}) and (\ref{eq:fixed2}) respectively.
\end{corollary}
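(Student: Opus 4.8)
The plan is to obtain Corollary~\ref{corr:achiev-bin} as an immediate consequence of the four bullet points in Theorem~\ref{thm:main_b}, since all the analytic work (the error-probability upper bounds and the expected-stopping-time estimates) has already been carried out there. What remains is to feed these estimates into Definition~\ref{def:error_exp} and then to recognize the resulting products $\gamma\beta^{\star}_\gamma$ and $\gamma\theta^{\star}_\gamma$ through the fixed-point equations \eqref{eq:fixed1} and \eqref{eq:fixed2}.

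First I would unpack the notation $\pe_{1}(\Phi_{\text{seq}}(\gamma)) \dotleq \exp(-N\gamma)$. By the definition of $\dotleq$ this means $\limsup_{N\to\infty}\frac{1}{N}\log \pe_{1}(\Phi_{\text{seq}}(\gamma)) \le -\gamma$, equivalently the one-sided estimate $-\log \pe_{1}(\Phi_{\text{seq}}(\gamma)) \ge N\gamma\,(1+o(1))$. The corresponding bullet supplies $\mathbb{E}_{1}[T_{\text{seq}}] = \frac{N}{\beta^{\star}_\gamma}(1+o(1))$. Substituting both into the definition of the type-$1$ exponent gives
\begin{equation*}
\textsf{e}_1\left(\Phi_{\text{seq}}(\gamma)\right) = \liminf_{N\to\infty}\frac{-\log \pe_{1}\left(\Phi_{\text{seq}}(\gamma)\right)}{\mathbb{E}_1\left[T_{\text{seq}}\right]} \ge \liminf_{N\to\infty}\frac{N\gamma\,(1+o(1))}{\frac{N}{\beta^{\star}_\gamma}(1+o(1))} = \gamma\,\beta^{\star}_\gamma .
\end{equation*}
Invoking \eqref{eq:fixed1}, namely $\mathrm{GJS}\left(P_2,P_1,\beta^{\star}_\gamma\right) = \gamma\beta^{\star}_\gamma$, rewrites the right-hand side as $\mathrm{GJS}\left(P_2,P_1,\beta^{\star}_\gamma\right)$, which is exactly \eqref{eq:typeI_prop}.

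The derivation of the type-$2$ exponent \eqref{eq:typeII_prop} is verbatim the same with the roles of the two hypotheses interchanged: I would use $\pe_{2}(\Phi_{\text{seq}}(\gamma)) \dotleq \exp(-N\gamma)$ together with $\mathbb{E}_{2}[T_{\text{seq}}] = \frac{N}{\theta^{\star}_\gamma}(1+o(1))$ to obtain $\textsf{e}_2(\Phi_{\text{seq}}(\gamma)) \ge \gamma\theta^{\star}_\gamma$, and then apply \eqref{eq:fixed2} to identify $\gamma\theta^{\star}_\gamma$ with $\mathrm{GJS}\left(P_1,P_2,\theta^{\star}_\gamma\right)$. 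There is no genuine obstacle in this corollary; the single point deserving a line of care is that the $o(1)$ in the numerator (coming from the $\dotleq$ bound on the error probability) and the $o(1)$ in the denominator (coming from the expected-stopping-time estimate) are separate vanishing sequences, so their ratio tends to $1$ and the $\liminf$ collapses to an ordinary limit, leaving the stated lower bounds. In short, this corollary merely repackages Theorem~\ref{thm:main_b} in the language of error exponents.
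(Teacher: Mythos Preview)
Your proposal is correct and follows essentially the same route as the paper: plug the error-probability bound and the expected-stopping-time estimate from Theorem~\ref{thm:main_b} into Definition~\ref{def:error_exp}, obtain $\textsf{e}_i \ge \gamma\cdot(\text{appropriate }\beta^{\star}_\gamma\text{ or }\theta^{\star}_\gamma)$, and then invoke the fixed-point equations \eqref{eq:fixed1}--\eqref{eq:fixed2} to rewrite this as the corresponding GJS value. The paper's derivation (at the end of Section~\ref{sub:proof_main_b}) carries an intermediate $\min\{\gamma,\,\cdot\}$ term inherited from Lemma~\ref{lem:error_2} and uses the Chernoff-information constraint $\gamma\le C(P_1,P_2)$ to collapse it to $\gamma$, but since you start from the already-simplified bullets of Theorem~\ref{thm:main_b} that step is already absorbed; the remaining arithmetic is identical.
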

\begin{figure}[t]
\centering
\includegraphics[scale=0.45]{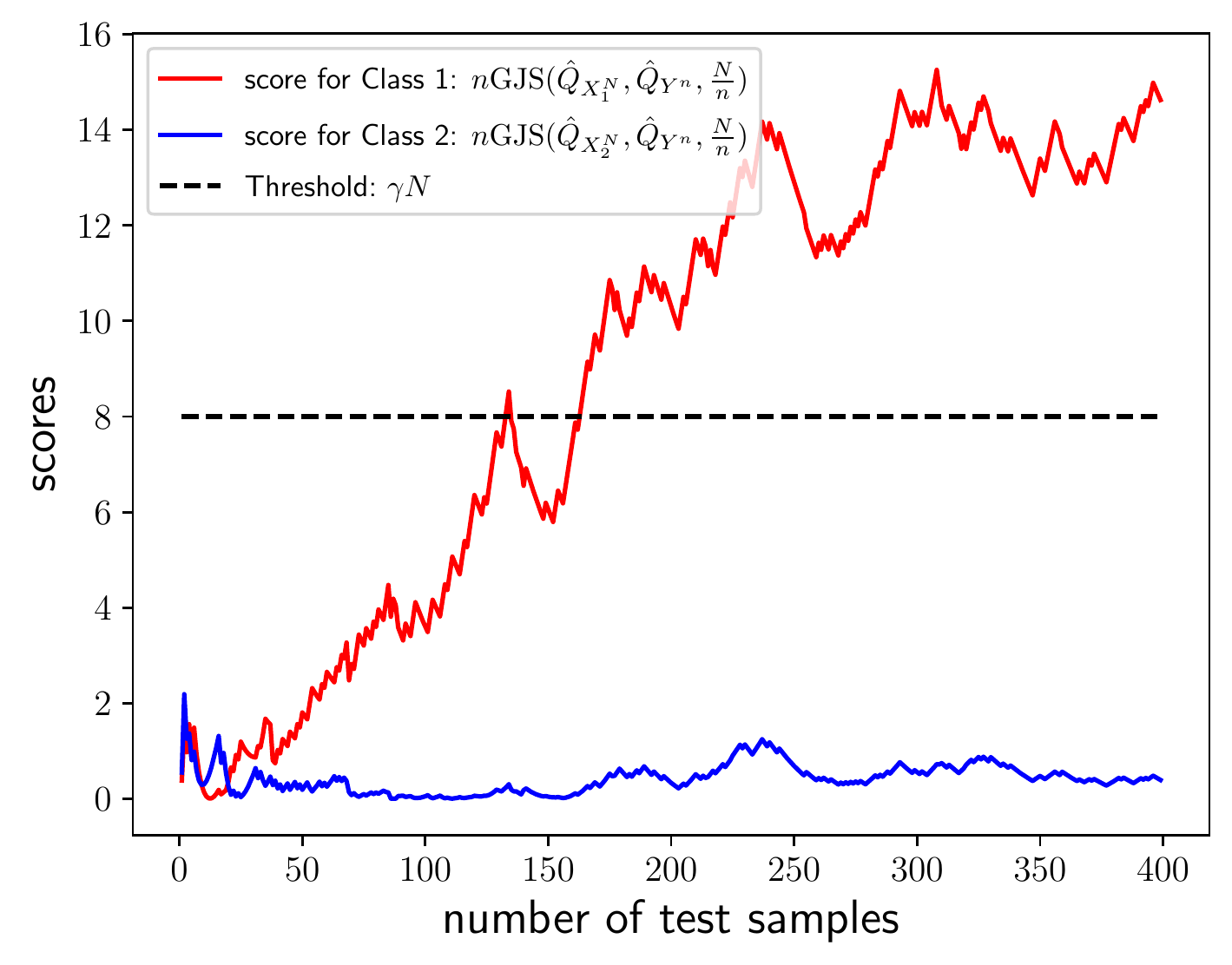}
\caption{A realization of the sequential test $\Phi_{\text{seq}}(\gamma)=\left(T_{\text{seq}},d_{\text{seq}}\right)$ in (\ref{eq:def_stoptime}) and (\ref{eq:rule}).}
\label{fig:emp_stp}
\end{figure}
\begin{remark}
\label{rem:sprt-our-bin}
In this remark we provide a {\em partial} converse bound for our results in Corollary \ref{corr:achiev-bin}. It is easy to observe that the performance of the SPRT provides an upper bound on the performance of our test. Therefore, combining  \cite[Thm. 15.3]{yp_lec} with (\ref{eq:typeI_prop}) and  (\ref{eq:typeII_prop}), we obtain the following upper bound
\begin{align*}
&\textsf{e}_1\left(\Phi_{\text{seq}}(\gamma)\right) \textsf{e}_2\left(\Phi_{\text{seq}}(\gamma)\right)\leq \KL{P_1}{P_2} \KL{P_2}{P_1},
\end{align*}
and lower bound
\begin{align*}
&\textsf{e}_1\left(\Phi_{\text{seq}}(\gamma)\right) \textsf{e}_2\left(\Phi_{\text{seq}}(\gamma)\right)\geq \GJS{P_2}{P_1}{\beta^{\star}_\gamma} \GJS{P_1}{P_2}{\theta^{\star}_\gamma}.
\end{align*}
Indeed, if we let $\gamma \to 0$ which corresponds to the case that $\beta^{\star}_{\gamma} \to \infty $ and $\theta^{\star}_{\gamma} \to \infty$
we have that 
\begin{align*}
\lim_{\beta^{\star}_{\gamma},\theta^{\star}_{\gamma} \to \infty}\GJS{P_1}{P_2}{\theta^{\star}_\gamma} &\GJS{P_2}{P_1}{\beta^{\star}_\gamma} \\
&= \KL{P_2}{P_1} \KL{P_1}{P_2}
\end{align*}
This is because of our results in Lemma \ref{lem:g}.
This shows that our sequential scheme can recover the optimal  performance in the case that $\beta^{\star}_{\gamma} \to \infty $ and $\theta^{\star}_{\gamma} \to \infty$.
\end{remark}
\subsection{Comparison to Gutman's Scheme}
\label{sec:compare_binary}
In this subsection, we compare the proposed sequential test with the Gutman's fixed-length test.  We adopt a Bayesian approach in which we assign prior probabilities $\pi_1 \in (0,1)$ and $\pi_2 = 1-\pi_1$ to $H_1$ and $H_2$  respectively. In this case, the {\em overall average probability of error} is given by
\begin{equation}
\begin{aligned}
\pe(\Phi) =& \pi_1 \mathbb{P}\left(d\left(X_1^N, X_2^N, Y^T\right)\neq H_1\big| H_1\right)\\
 +&\pi_2 \mathbb{P}\left(d\left(X_1^N, X_2^N, Y^T\right)\neq H_2\big| H_2\right)
\end{aligned}
\end{equation}
We define the error exponent for the Bayesian scenario as
\begin{equation}\label{eq:bayesian_exponent}
\textsf{e}_{\text{Bayesian}}^\pi\left(\Phi\right)\triangleq \liminf_{N\to \infty}\frac{- \log \pe(\Phi) }{N}.
\end{equation}
To make a fair comparison, let us assume each of the  schemes, sequential and Gutman, has  two training sequences of length $N$.  
For Gutman's test, as usual we let $\frac{N}{n}=\alpha$ and we consider the following variation 
\begin{equation} \label{eq:gutman_test_tweeked}
\Phi_\text{GUT}(\lambda,\alpha)=
\begin{cases} 
  H_1 & \text{if } \GJS{\ED{X_1^N}}{\ED{Y^n}}{\alpha} \leq \lambda \alpha, \\
   H_2       & \text{if } \GJS{\ED{X_1^N}}{\ED{Y^n}}{\alpha} \geq \lambda \alpha ,
  \end{cases},
\end{equation}
in which $\lambda$ in \eqref{eq:gutman_test_1} is replaced by $\lambda\alpha$ here without loss of generality. 
For $\Phi_\text{GUT}(\lambda,\alpha)$, it can be readily shown that 
\begin{align} \label{eq:bayes_1}
\textsf{e}_{\text{Bayesian}}^{\pi}\left( \Phi_{\text{GUT}}(\lambda^{\star},\alpha\right))=\max\limits_{\lambda\geq 0} \min\left\lbrace \lambda,F_1\left(\alpha,\lambda\right) \right\rbrace
\end{align}
where 
\begin{equation}\label{eq:opt_gut_N}
\begin{aligned}
F_1\left(\alpha,\lambda\right)\triangleq & \underset{\left(Q_1, Q_2\right)\in \mathcal{P}\left(\mathcal{X}\right)^2}{\text{min}}
& & \KL{Q_1}{P_1}+ \frac{1}{\alpha} \KL{Q_2}{P_2} \\
& \text{subject to}
& & \frac{1}{\alpha}\mathrm{GJS}\left(Q_1, Q_2, \alpha \right)\leq \lambda.
\end{aligned}
\end{equation}
In the next lemma, we study the impact of $\alpha$ on the {\em Bayesian error exponent} $\textsf{e}_{\text{Bayesian}}\left(\Phi_\text{GUT}\right)$. 
\begin{lemma}\label{lem:dec_bayes}
The function $\alpha\mapsto \textsf{e}_{\text{Bayesian}}^{\pi}\left( \Phi_{\text{GUT}}(\lambda^{\star},\alpha\right))$ is decreasing.%
\begin{proof}
It is straightforward to show that the objective function of (\ref{eq:opt_gut_N}) is decreasing in $\alpha$. Also, because $\frac{1}{\alpha}\GJS{Q_1}{Q_2}{\alpha}$ is decreasing in $\alpha$, we conclude that the feasible set is enlarged as  $\alpha$ increases.
\end{proof}
\end{lemma}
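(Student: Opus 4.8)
The plan is to reduce the claimed monotonicity of the Bayesian exponent to the monotonicity of the inner value $F_1(\alpha,\lambda)$. Since by \eqref{eq:bayes_1} the exponent equals $\max_{\lambda\ge 0}\min\{\lambda,F_1(\alpha,\lambda)\}$, and since taking the pointwise minimum with the $\alpha$-independent constant $\lambda$, and then the maximum over $\lambda$, both preserve the property of being non-increasing in $\alpha$, it suffices to prove that for every fixed $\lambda$ the map $\alpha\mapsto F_1(\alpha,\lambda)$ is non-increasing. I would establish this by showing separately that, as $\alpha$ grows, (i) the objective of \eqref{eq:opt_gut_N} decreases pointwise in $(Q_1,Q_2)$ and (ii) the feasible set grows, since minimizing a pointwise-smaller objective over a larger set can only decrease the value.

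For (i), note that for fixed $(Q_1,Q_2)$ the objective $\KL{Q_1}{P_1}+\tfrac{1}{\alpha}\KL{Q_2}{P_2}$ is non-increasing in $\alpha$ because $1/\alpha$ is decreasing and $\KL{Q_2}{P_2}\ge 0$. This is immediate and requires no further work.

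The crux is (ii), which amounts to showing that $\alpha\mapsto \tfrac{1}{\alpha}\GJS{Q_1}{Q_2}{\alpha}$ is non-increasing for each fixed pair $(Q_1,Q_2)$; granting this, the constraint sets $\{(Q_1,Q_2):\tfrac{1}{\alpha}\GJS{Q_1}{Q_2}{\alpha}\le\lambda\}$ are nested and expand with $\alpha$. The cleanest route is the variational identity $\GJS{Q_1}{Q_2}{\alpha}=\min_{R\in\calP(\calX)}\big[\alpha\KL{Q_1}{R}+\KL{Q_2}{R}\big]$, whose minimizer is exactly the mixture $R=Q_\alpha=\tfrac{\alpha Q_1+Q_2}{1+\alpha}$ (a one-line Lagrange computation). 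Dividing by $\alpha$ yields $\tfrac{1}{\alpha}\GJS{Q_1}{Q_2}{\alpha}=\min_{R}\big[\KL{Q_1}{R}+\tfrac{1}{\alpha}\KL{Q_2}{R}\big]$, a pointwise minimum over $R$ of functions each non-increasing in $\alpha$ (again because $1/\alpha$ decreases and relative entropy is nonnegative), and a pointwise minimum of non-increasing functions is non-increasing. Equivalently, by the envelope theorem the derivative equals $-\alpha^{-2}\KL{Q_2}{Q_\alpha}\le 0$, which also records that the monotonicity is strict whenever $Q_1\ne Q_2$.

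Combining (i) and (ii): for $\alpha\le\alpha'$ the feasible region at $\alpha$ is contained in that at $\alpha'$ and the objective at $\alpha'$ is pointwise no larger, whence $F_1(\alpha',\lambda)\le F_1(\alpha,\lambda)$; thus $F_1(\cdot,\lambda)$ is non-increasing, and the conclusion follows by propagating through the $\min$ and $\max$ as described. I expect the only genuine obstacle to be the GJS monotonicity in (ii); the variational reformulation sidesteps any direct differentiation of the mixture $Q_\alpha$ and makes the sign of the change transparent.
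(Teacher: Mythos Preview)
Your proposal is correct and follows essentially the same two-step decomposition as the paper: the objective of \eqref{eq:opt_gut_N} is pointwise non-increasing in $\alpha$, and the feasible set enlarges because $\alpha\mapsto\tfrac{1}{\alpha}\GJS{Q_1}{Q_2}{\alpha}$ is non-increasing. The only difference is that the paper simply asserts the latter monotonicity (implicitly relying on the concavity of $\mathrm{GJS}$ in $\alpha$ from Lemma~\ref{lem:g} together with $\GJS{Q_1}{Q_2}{0}=0$), whereas you supply a self-contained proof via the variational identity $\GJS{Q_1}{Q_2}{\alpha}=\min_R\big[\alpha\KL{Q_1}{R}+\KL{Q_2}{R}\big]$, which is exactly the content of Lemma~\ref{lem:def_gjs}; both routes are valid and lead to the same conclusion.
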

Gutman's test is designed for the case that a fixed-length test sequence is provided to the decision maker. On the other hand, from Theorem \ref{thm:main_b} we know that in the sequential test the average number of the test samples under $H_1$ and $H_2$ are different and given approximately  by ${N}/{\beta^{\star}_\gamma}$ and ${N}/{\theta^{\star}_\gamma}$ respectively. %
In light of Lemma \ref{lem:dec_bayes}, we will assume, for the sake of comparisons (between Gutman's test and ours), that Gutman's test is provided with ${N}/{\min \{\theta^{\star}_\gamma,\beta^{\star}_\gamma\}}$ test samples, i.e.,  the (deterministic) number of samples in the testing sequence used by the Gutman's test is equal to the largest expected sample complexity of the sequential
test under any of the two hypotheses. We  assert that under this assumption,   
 it is fair to compare the sequential   and Gutman's tests. The next theorem presents our results concerning the comparison between these two tests.
\begin{thm} \label{thm:comp_gut}
Consider the scenario in which  ${N}/{\min \{\theta^{\star}_\gamma,\beta^{\star}_\gamma\}}$ samples are available to be used in Gutman's test. Then, achievable Bayesian error exponent of the sequential scheme is {\em strictly greater} than the Bayesian error exponent of Gutman's test.
\begin{proof}
The proof is provided in Appendix~\ref{subsec:proof_of_comp}.
\end{proof}
\end{thm}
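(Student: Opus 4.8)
The plan is to turn both Bayesian exponents into a single scalar comparison and then reduce that comparison to one inequality about the type-II exponent function $F_1$ of \eqref{eq:opt_gut_N}. First I would record the easy side: by Theorem~\ref{thm:main_b} both error probabilities of $\Phi_{\text{seq}}(\gamma)$ obey $\pe_i(\Phi_{\text{seq}}(\gamma))\dotleq\exp(-N\gamma)$, so $\pe(\Phi_{\text{seq}}(\gamma))=\pi_1\pe_1+\pi_2\pe_2\dotleq\exp(-N\gamma)$ and hence $\textsf{e}_{\text{Bayesian}}^{\pi}(\Phi_{\text{seq}}(\gamma))\ge\gamma$; this is the number I must beat. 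On Gutman's side, feeding in $n=N/\min\{\theta^{\star}_\gamma,\beta^{\star}_\gamma\}$ test samples corresponds to $\alpha^\star=N/n=\min\{\theta^{\star}_\gamma,\beta^{\star}_\gamma\}$, and by \eqref{eq:bayes_1} the threshold-optimized Bayesian exponent is $G(\alpha^\star):=\max_{\lambda\ge0}\min\{\lambda,F_1(\alpha^\star,\lambda)\}$. The goal is therefore the strict inequality $G(\alpha^\star)<\gamma$.

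Next I would carry out a single-crossing reduction. Since enlarging $\lambda$ relaxes the constraint in \eqref{eq:opt_gut_N}, the map $\lambda\mapsto F_1(\alpha^\star,\lambda)$ is continuous and nonincreasing, whereas $\lambda\mapsto\lambda$ is strictly increasing and $F_1(\alpha^\star,0)>0$ (the constraint $\GJS{Q_1}{Q_2}{\alpha^\star}=0$ forces $Q_1=Q_2$, giving a strictly positive objective because $P_1\ne P_2$). The two curves thus cross at a unique point $\lambda^\star$, which is the maximizer defining $G(\alpha^\star)$, so $G(\alpha^\star)=\lambda^\star$; comparing the nonincreasing curve with the identity line at $\lambda=\gamma$ yields
\begin{equation*}
G(\alpha^\star)<\gamma \quad\Longleftrightarrow\quad F_1(\alpha^\star,\gamma)<\gamma .
\end{equation*}
Hence it suffices to show that Gutman's test at parameter $\alpha^\star$, tuned so that its type-I exponent equals $\gamma$, has type-II exponent strictly below $\gamma$.

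I would then split on which of $\theta^{\star}_\gamma,\beta^{\star}_\gamma$ attains the minimum, using the variational form $\GJS{Q_1}{Q_2}{\alpha}=\min_{R}\big[\alpha\KL{Q_1}{R}+\KL{Q_2}{R}\big]$ from Lemma~\ref{lem:def_gjs}. If $\alpha^\star=\theta^{\star}_\gamma$, the defining equation \eqref{eq:fixed2} gives $\tfrac{1}{\theta^{\star}_\gamma}\GJS{P_1}{P_2}{\theta^{\star}_\gamma}=\gamma$, so $(Q_1,Q_2)=(P_1,P_2)$ is feasible for \eqref{eq:opt_gut_N} at $\lambda=\gamma$ with objective $\KL{P_1}{P_1}+\tfrac{1}{\theta^{\star}_\gamma}\KL{P_2}{P_2}=0<\gamma$; thus $F_1(\theta^{\star}_\gamma,\gamma)=0$ and this case is immediate.

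The remaining case $\alpha^\star=\beta^{\star}_\gamma\le\theta^{\star}_\gamma$ is the crux. Here $(P_1,P_2)$ is \emph{not} admissible: $\alpha\mapsto\GJS{P_1}{P_2}{\alpha}$ is concave (a minimum of affine functions of $\alpha$), passes through $(0,0)$ and, by \eqref{eq:fixed2}, through $(\theta^{\star}_\gamma,\gamma\theta^{\star}_\gamma)$, so it lies above its chord $\gamma\alpha$ on $[0,\theta^{\star}_\gamma]$, giving $\GJS{P_1}{P_2}{\beta^{\star}_\gamma}\ge\gamma\beta^{\star}_\gamma$ and violating the constraint. To prove $F_1(\beta^{\star}_\gamma,\gamma)<\gamma$ I would exhibit an admissible pair with $Q_1\ne Q_2$: introduce a multiplier $s$ for the GJS constraint, take the geometric-mixture candidates $Q_1\propto P_1^{1/(1+s)}R^{s/(1+s)}$, $Q_2\propto P_2^{1/(1+s)}R^{s/(1+s)}$ with $R=\frac{\alpha^\star Q_1+Q_2}{1+\alpha^\star}$, fix $s$ so that $\tfrac{1}{\beta^{\star}_\gamma}\GJS{Q_1}{Q_2}{\beta^{\star}_\gamma}=\gamma$, and bound the objective. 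The main obstacle is exactly this last estimate, and it is where $\gamma\in(0,C(P_1,P_2)]$ and the fixed point \eqref{eq:fixed1}, $\GJS{P_2}{P_1}{\beta^{\star}_\gamma}=\gamma\beta^{\star}_\gamma$, must be used together with the Chernoff quantity in \eqref{eqn:chernoff}; the symmetric shortcut $Q_1=Q_2$ does not suffice, since it reduces the bound to comparing $-\log\sum_x P_1(x)^{\eta}P_2(x)^{1-\eta}$ at $\eta=\beta^{\star}_\gamma/(\beta^{\star}_\gamma+1)$ against its value at $1-\eta$, an inequality that can go either way, so the asymmetric tilt is essential. Combining the two cases with the crossing equivalence gives $G(\alpha^\star)<\gamma\le\textsf{e}_{\text{Bayesian}}^{\pi}(\Phi_{\text{seq}}(\gamma))$, which is the claim.
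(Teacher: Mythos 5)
Your reduction of the problem is sound and, up to a point, matches the paper: the sequential side gives $\textsf{e}_{\text{Bayesian}}^{\pi}(\Phi_{\text{seq}}(\gamma))\geq\gamma$, the optimized Gutman exponent is the crossing point $\lambda^{\star}$ of $\lambda\mapsto\lambda$ and $\lambda\mapsto F_1(\alpha^{\star},\lambda)$, and your equivalence $G(\alpha^{\star})<\gamma\Leftrightarrow F_1(\alpha^{\star},\gamma)<\gamma$ is correct. Your first case is also correct and is essentially the paper's: when $\alpha^{\star}=\theta^{\star}_\gamma$, the fixed point \eqref{eq:fixed2} makes $(P_1,P_2)$ feasible in \eqref{eq:opt_gut_N} at $\lambda=\gamma$, so $F_1(\theta^{\star}_\gamma,\gamma)=0<\gamma$.

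The genuine gap is your second case, $\alpha^{\star}=\beta^{\star}_\gamma\leq\theta^{\star}_\gamma$, which is the crux of the theorem. There you only propose a family of tilted candidates $Q_1\propto P_1^{1/(1+s)}R^{s/(1+s)}$, $Q_2\propto P_2^{1/(1+s)}R^{s/(1+s)}$ and state that the required bound on the objective is ``the main obstacle'' -- i.e., the decisive estimate $F_1(\beta^{\star}_\gamma,\gamma)<\gamma$ is asserted, not proved. The paper never attempts this primal estimate at all; it routes around it by a symmetry-plus-optimality argument. Specifically, it introduces the mirror test $\Phi_{\text{GUT},2}$ built on the \emph{second} training sequence (Lemma \ref{lem:gut_v2}, with exponent function $F_2$ in \eqref{eq:opt_gut_2}), for which your own one-line feasibility trick now works in this case: by the fixed point \eqref{eq:fixed1}, $\frac{1}{\beta^{\star}_\gamma}\GJS{P_2}{P_1}{\beta^{\star}_\gamma}=\gamma$, so $(P_2,P_1)$ is feasible in \eqref{eq:opt_gut_2} at $\lambda=\gamma$, giving $F_2(\beta^{\star}_\gamma,\gamma)=0$ and hence $\lambda_2^{\star}<\gamma$. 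The bridge back to $\Phi_{\text{GUT},1}$ is Lemma \ref{lem:bayes_agn}: the optimal Bayesian exponents of the two versions of Gutman's test coincide, $\lambda_1^{\star}=\lambda_2^{\star}$, proved by contradiction from the converse (optimality) part of Theorem \ref{thm:gut1}. This agnosticity lemma is exactly the ingredient your proposal lacks; without it (or without actually completing your tilting estimate, which is nontrivial precisely because, as you note, the symmetric choice $Q_1=Q_2$ can fail), the strict inequality in the case $\beta^{\star}_\gamma<\theta^{\star}_\gamma$ remains unestablished.
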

We showed that $\textsf{e}_{\text{Bayesian}}^{\pi}\left( \Phi_{\text{seq}}(\gamma)\right)\geq \gamma$, and we know $\gamma \in (0, C(P_1,P_2)]$. In the next Corollary we provide the maximum achievable Bayesian error exponent of $\Phi_{\text{seq}}(\gamma)$.
\begin{corollary}\label{corr:max-bayes-exp-bin}
The maximum achievable Bayesian error exponent of the sequential scheme  $\Phi_{\text{seq}}(\gamma)$  is 
$ C\left(P_1,P_2\right)$.
\end{corollary}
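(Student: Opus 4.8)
The plan is to bracket the quantity $\max_{\gamma}\textsf{e}_{\text{Bayesian}}^{\pi}(\Phi_{\text{seq}}(\gamma))$ from both sides and show the two bounds meet at the Chernoff information. The lower bound is immediate from what precedes the corollary: since Theorem~\ref{thm:main_b} holds for every $\gamma\in(0,C(P_1,P_2)]$ and gives $\pe_{1}(\Phi_{\text{seq}}(\gamma))\dotleq e^{-N\gamma}$ and $\pe_{2}(\Phi_{\text{seq}}(\gamma))\dotleq e^{-N\gamma}$, the averaged error $\pe(\Phi_{\text{seq}}(\gamma))=\pi_1\pe_{1}+\pi_2\pe_{2}\dotleq e^{-N\gamma}$ satisfies $\textsf{e}_{\text{Bayesian}}^{\pi}(\Phi_{\text{seq}}(\gamma))\ge\gamma$. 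As the right-hand side is increasing in $\gamma$ and the admissible range is capped at $C(P_1,P_2)$, instantiating $\gamma=C(P_1,P_2)$ yields $\textsf{e}_{\text{Bayesian}}^{\pi}(\Phi_{\text{seq}}(C(P_1,P_2)))\ge C(P_1,P_2)$, so the maximum is at least $C(P_1,P_2)$.

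For the matching upper bound I would prove a large-deviations converse for the test itself, namely $\max\{\pe_{1}(\Phi_{\text{seq}}(\gamma)),\pe_{2}(\Phi_{\text{seq}}(\gamma))\}\dotgeq e^{-N\,C(P_1,P_2)}$ for every admissible $\gamma$, which forces the smaller of the two $N$-normalized error exponents to be at most $C(P_1,P_2)$ and hence caps the Bayesian exponent. The dominant error configuration I have in mind is the Chernoff-tilted law $P_{\eta^{\star}}\propto P_1^{\eta^{\star}}P_2^{1-\eta^{\star}}$, characterized by $\KL{P_{\eta^{\star}}}{P_1}=\KL{P_{\eta^{\star}}}{P_2}=C(P_1,P_2)$: under $H_1$ the event that the running type $\ED{Y^n}$ lands near $P_{\eta^{\star}}$ costs exponent $\KL{P_{\eta^{\star}}}{P_1}=C(P_1,P_2)$ per test sample, and at such a type the two scores $\text{score}_1[n]$ and $\text{score}_2[n]$ are driven together, so the decision rule~\eqref{eq:rule} errs with a non-vanishing conditional probability. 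Optimizing this construction over the stopping time $n$ (equivalently over $\alpha=N/n$) should show the misclassification probability cannot decay faster than $e^{-N\,C(P_1,P_2)}$.

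The hard part will be making this confusion argument quantitative. Because closeness in the sequential test is measured by the GJS divergence rather than by plain relative entropy, the Chernoff point $P_{\eta^{\star}}$ is not literally the locus where $\text{score}_1[n]=\text{score}_2[n]$ at a fixed $\alpha$; one must minimize the exponent $\KL{Q}{P_1}/\alpha$ jointly over the candidate type $Q$ of the test sequence and over $\alpha$, subject to the scores crossing in the wrong order, and then verify via Lemma~\ref{lem:g} (which controls the behaviour of $\GJS{\cdot}{\cdot}{\alpha}$ and its reduction to the relative entropies in the relevant limit) that the value of this program is exactly $C(P_1,P_2)$. If one is content with the weaker reading in which ``maximum achievable'' refers to the best exponent the present analysis guarantees, the claim already follows from the achievability bound together with the admissibility constraint $\gamma\le C(P_1,P_2)$, and the converse above is what upgrades ``at least'' to ``equal.''
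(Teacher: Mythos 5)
The part of your proposal that actually proves the corollary as the paper states it is your first paragraph together with your closing sentence, and it coincides exactly with the paper's own argument: Corollary~\ref{corr:max-bayes-exp-bin} is an achievability-only statement, where ``maximum achievable'' means the largest exponent guaranteed by Theorem~\ref{thm:main_b} as the threshold ranges over its admissible set $\gamma\in(0,C(P_1,P_2)]$. Since $\textsf{e}_{\text{Bayesian}}^{\pi}\left(\Phi_{\text{seq}}(\gamma)\right)\geq\gamma$ for every such $\gamma$, taking $\gamma=C(P_1,P_2)$ gives the claim; the cap on $\gamma$ is precisely the set identity \eqref{eq:chernoof_binary} used at step \eqref{eq:condition} of the exponent derivation, beyond which the bound of Lemma~\ref{lem:error_2} would fall below $\gamma$. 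The paper proves no converse, so for the statement as intended your proposal is complete and follows the same route.

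Your second and third paragraphs (the large-deviations converse) are therefore not needed, and as sketched they would not go through: (i) the exponent \eqref{eq:bayesian_exponent} is normalized by the training length $N$, whereas your tilted-law event deviates the \emph{test} sequence, whose length is a data-dependent stopping time, so a cost of $C(P_1,P_2)$ ``per test sample'' does not convert directly into a bound of the form $e^{-NC(P_1,P_2)}$; (ii) an error is a path event --- $\text{score}_1[n]$ must reach $\gamma N$ at the stopping time while $\text{score}_2[k]<\gamma N$ for all earlier $k$ --- not a single-time type event, so landing near $P_{\eta^{\star}}$ at one instant does not by itself yield a misclassification with non-vanishing conditional probability; and (iii) the mechanism that actually creates the cap at $C(P_1,P_2)$ in the paper's analysis is a deviation of the \emph{training} sequence, namely the ``no decision'' term $\min_{V:\KL{V}{P_2}\leq\gamma+\varepsilon^{\prime}_N}\KL{V}{P_1}$ in \eqref{eqn:min2}, whose minimizer at $\gamma=C(P_1,P_2)$ is exactly the Chernoff-tilted law: the event $\ED{X_1^N}\approx P_{\eta^{\star}}$ costs $e^{-NC(P_1,P_2)}$ with the correct $N$-normalization and requires no stopping-time conversion at all. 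If you ever want to upgrade the corollary to a genuine optimality statement, that training-type deviation, not the test-type deviation, is the dominant event to build the converse around.
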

In Figure \ref{fig:comp2}, we provide a numerical example to quantitatively illustrate the gain of our proposed test versus that of the Gutman. We plot an {\em achievable Bayesian error exponent} based on our sequential scheme compared to the Bayesian error exponent of Gutman's test versus $\min \{\theta^{\star}_\gamma,\beta^{\star}_\gamma\}$. We consider a ternary alphabet $\mathcal{X}=\{1,2,3\}$. In Fig.~\ref{fig:comp2}, we set $P_1 = \left[0.1, 0.3, 0.6\right]$ and $P_2=\left[0.45, 0.45, 0.1\right]$. As the problem in (\ref{eq:opt_gut_N}) is a convex problem, we used \textsf{CVXPY} package \cite{cvxpy} to perform the optimization. This example shows that sequential test significantly improves the Bayesian error exponent over Gutman's fixed-length test.

\begin{figure}[t]
\centering
\includegraphics[scale=0.450]{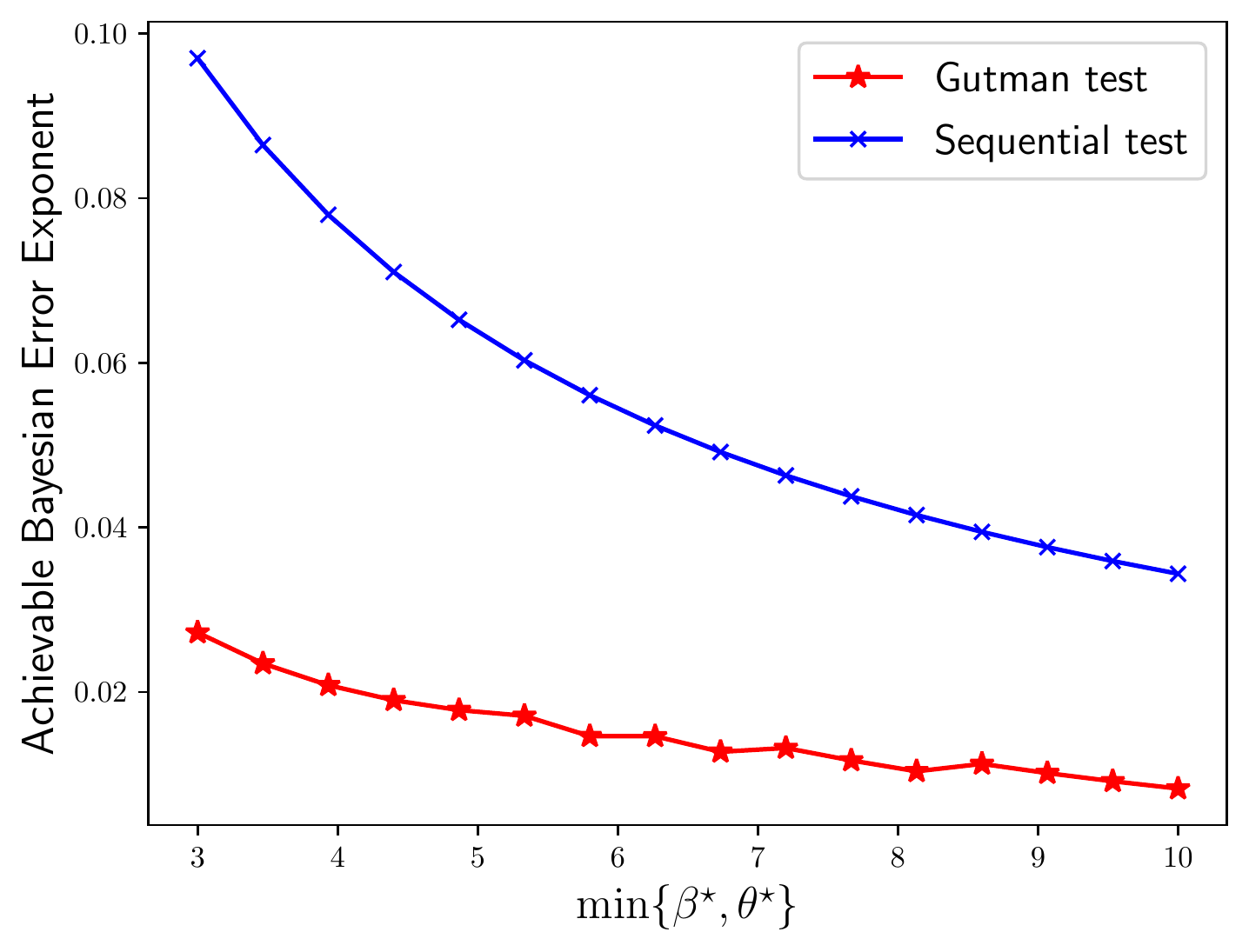}
\caption{Comparison of Gutman's test and the sequential test in terms the Bayesian error exponent for $P_1 = \left[0.1, 0.3, 0.6\right]$ and $P_2=\left[0.45, 0.45, 0.1\right]$. }
\label{fig:comp2}
\end{figure}

\section{Sequential Classification: Multi-Class Classification Problem}
\label{sec:mutli}
In this section, we extend the binary classification setup  to the scenario in which we have $M>2$ classes. 
\subsection{Problem Statement}
In the classification problem with $M$ classes, the decision maker has access to $M$ length-$N$ training sequences denoted by $\left\lbrace X_i^N\right\rbrace_{i=1}^{M}$. Each training sequence is  generated in an \iid manner according to  one of $M$  unknown distributions $\left(P_1,\hdots,P_M\right) \in \mathcal{P}\left(\mathcal{X}\right)^{M}$.  We fix a certain distribution $P_{i^*}$ where $i^*\in \{1,\ldots, M\}$.  At each time $n \in \mathbb{N}$, a test sample, denoted by $Y_n$, is generated according to $P_{i^*}$ and is given to the decision maker. The decision maker is tasked to {\em classify} the test sequence $\{Y_k\}_{k=1}^n$, i.e., assign it a label from the set $\{1,\ldots, M\}$. More formally, the decision maker has to decide between the following $M$ hypotheses:
\begin{itemize}
\item $H_i$ where $i \in \{1,\hdots,M\}$: The test sequence $\{Y_k\}_{k=1}^n$  (which is generated \iid according to $P_{i^*}$) and the $i^{\mathrm{th}}$ training sequence $X_i^N$ are being generated according to the same distribution.
\end{itemize}  
For the described setup, the test, the error probabilities, and the error exponents can be defined analogously to Definitions~\ref{def:test},~\ref{def:error_prob}, and~\ref{def:error_exp}, respectively. However, here the stopping time $T$ is now adapted to  the filtration $\mathcal{F}_n=\sigma \big\{ \{X_i^N \}_{i=1}^{M} ,Y_1\hdots,Y_n \big\}$, and the terminal decision rule is a function of $\big( \{X_i^N\}_{i=1}^{M},Y^{T}\big)$.

The multiclass classification problem has been considered from information-theoretic perspectives in \cite{gutman1989asymptotically,zhousecond,ziv1988classification,unnikrishnan2015asymptotically,deanonym}. There are two main aspects that distinguish our work with previous studies. First, the lengths of the test sequence is fixed in  \cite{gutman1989asymptotically,zhousecond,ziv1988classification,unnikrishnan2015asymptotically,deanonym}; however, we let the length of test sequence be random. Second, the $M$-class classification problem is often studied  with the rejection option in the literature; our setup does not include the rejection option. More precisely, in \cite{gutman1989asymptotically,zhousecond,ziv1988classification,unnikrishnan2015asymptotically,deanonym} the decision maker has the following $M+1$ hypotheses:
\begin{itemize}
\item $H_i$ for each $i \in \left[M\right]$: The test sequence  $Y^n$ and the $i^{\mathrm{th}}$ training sequence $X_i^N$ are generated according to the same distribution.
\item $H_{\mathrm{r}}$: The test sequence $Y^n$ is generated according to a distribution different from those in which the training sequences are
generated from.
\end{itemize}
Provided that the length of the test sequence $n$ is fixed, in this framework, the error probabilities and the rejection probability are defined as
\begin{align}
\pe_i(\Phi)  &=\probm_i\left(d\left( \{ X_i^N\}_{i=1}^{M}, Y^n\right)\notin \{H_i,H_{\mathrm{r}}\} \right)\\
 \mathrm{P}_i^{\mathrm{rej}}(\Phi)&=\probm_i\left(d\left(\{ X_i^N\}_{i=1}^{M}, Y^n \right)= H_{\mathrm{r}}\right) \label{eq:rejection_prob}
\end{align}
for $i\in \left[M\right]$. Here note that we are considering {\em realizable} case in which we know that the test sequence and one of the training sequences are generated according to the same distribution. In (\ref{eq:rejection_prob}), $\mathrm{P}_i^{\mathrm{rej}}(\Phi)$ denotes the probability that the decision maker declares $H_{\mathrm{r}}$ as the terminal decision (the rejection option is taken here).  The main result for the setup with fixed-length test sequence and the rejection option is by Gutman \cite{gutman1989asymptotically}. Note as $N \to \infty$, $n$ also diverges but we have $\lim_{N,n\to\infty}\frac{N}{n}=\alpha$. For example, we can always think of $N=\lfloor n\alpha\rfloor$. In \cite{gutman1989asymptotically} the following questions are addressed: What is the largest $\lambda\in\bbR$ for which 
\begin{enumerate}
\item for $i \in \left[M\right]$, we have
\begin{equation}
\liminf_{n \to \infty} \frac{-\log \pe_i(\Phi_{\text{GUT}}^{(M)}(\lambda,\alpha))}{n} \geq \lambda,
\end{equation}
where $\Phi_{\text{GUT}}^{(M)}(\lambda,\alpha)$ denotes the Gutman's test; and 
\item the rejection probability tends to zero as $n$ goes to infinity?
\end{enumerate}
The next theorem provides an answer to the aforementioned question.
\begin{thm}(Gutman~\cite[Thms.~2 \& 3]{gutman1989asymptotically})
\label{thm:gut_multi}
Assume $\frac{N}{n}=\alpha$. Then, the maximum $\lambda$ satisfying both conditions mentioned above is 
\begin{equation}
\widehat{\lambda} = \min\limits_{i,j \in \left[M\right],i\neq j} \GJS{P_i}{P_j}{\alpha}.
\end{equation}
Moreover, if $\lambda > \widehat{\lambda}$, the rejection probability goes to \textit{one} as $n$ goes to infinity.
\end{thm}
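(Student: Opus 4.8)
The plan is to locate the sharp threshold at $\widehat\lambda$ by proving achievability for $\lambda<\widehat\lambda$ and a matching converse for $\lambda>\widehat\lambda$, working throughout with the natural $M$-ary extension of Gutman's rule: declare $H_i$ whenever $i$ is the \emph{unique} index with $\GJS{\ED{X_i^N}}{\ED{Y^n}}{\alpha}\le\lambda$, and declare $H_{\mathrm{r}}$ otherwise. The whole argument rests on the two-sample method-of-types estimates already packaged in Lemmas~\ref{lem:def_gjs} and~\ref{lem:error_same}, which I would invoke directly: the former supplies the variational identity $\GJS{Q_1}{Q_2}{\alpha}=\min_R\left[\alpha\KL{Q_1}{R}+\KL{Q_2}{R}\right]$, and the latter the fact that, when training and test are drawn from a common $P$, the overshoot $\GJS{\ED{X^N}}{\ED{Y^n}}{\alpha}>\lambda$ has probability $\dotleq\exp(-n\lambda)$ under the scaling $N=\lfloor n\alpha\rfloor$.

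First I would dispose of condition~1, which turns out to be unconditional. Under $H_i$, the uniqueness clause forces any erroneous decision $H_j$ with $j\neq i$ to entail that the \emph{correct} index fails its own threshold, i.e.\ $\GJS{\ED{X_i^N}}{\ED{Y^n}}{\alpha}>\lambda$. Since under $H_i$ both $X_i^N$ and $Y^n$ come from the same $P_i$, this is precisely a same-distribution overshoot, so by Lemma~\ref{lem:error_same} we get $\pe_i\big(\Phi_{\text{GUT}}^{(M)}(\lambda,\alpha)\big)\dotleq\exp(-n\lambda)$ for every $\lambda>0$. Hence the exponent bound $\liminf_{n\to\infty}\frac{-\log\pe_i}{n}\ge\lambda$ holds for all $\lambda$, and the genuinely binding requirement is condition~2.

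For condition~2 I would note that, under $H_i$, a rejection occurs only if the correct class overshoots (again exponent $\lambda$, hence negligible) or if some wrong class $j\neq i$ undershoots, i.e.\ $\GJS{\ED{X_j^N}}{\ED{Y^n}}{\alpha}\le\lambda$. Because $\ED{X_j^N}\to P_j$ and $\ED{Y^n}\to P_i$, this statistic concentrates at $\GJS{P_j}{P_i}{\alpha}$, and a Sanov estimate shows the undershoot probability vanishes exactly when $\lambda<\GJS{P_j}{P_i}{\alpha}$. Intersecting over all ordered pairs and relabeling, the rejection probability vanishes under every hypothesis iff $\lambda<\min_{i\neq j}\GJS{P_i}{P_j}{\alpha}=\widehat\lambda$, giving achievability. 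For the converse, let $(i^\star,j^\star)$ attain the minimum, so $\GJS{P_{i^\star}}{P_{j^\star}}{\alpha}=\widehat\lambda<\lambda$; under $H_{j^\star}$ the true class has $\GJS{\ED{X_{j^\star}^N}}{\ED{Y^n}}{\alpha}\to0<\lambda$ while the competitor has $\GJS{\ED{X_{i^\star}^N}}{\ED{Y^n}}{\alpha}\to\widehat\lambda<\lambda$, so by the weak law both statistics drop below $\lambda$ with probability tending to one, the uniqueness clause fails, and the rule outputs $H_{\mathrm{r}}$ with probability tending to one.

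The hard part will be the boundary behavior at $\lambda=\widehat\lambda$: there the confusable statistic concentrates \emph{exactly} at the threshold, so the relevant under/overshoot becomes a central-limit rather than a large-deviation event and the rejection probability need not vanish. This is why the optimal value is really a supremum and must be handled either by a strict-inequality convention in the test or by an explicit $\lambda\uparrow\widehat\lambda$ limiting argument. A secondary bookkeeping point is ensuring the two large-deviation estimates are uniform given the mismatched lengths $N=\lfloor n\alpha\rfloor$, but this is exactly the two-sample Sanov accounting already carried out in Lemmas~\ref{lem:def_gjs} and~\ref{lem:error_same}, so I would cite it rather than re-derive it.
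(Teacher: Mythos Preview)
The paper does not supply its own proof of this statement: Theorem~\ref{thm:gut_multi} is quoted verbatim from Gutman~\cite[Thms.~2 \& 3]{gutman1989asymptotically} as a known result, with no argument given in the present paper. So there is nothing here to compare your proposal against line by line.

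That said, your sketch is the standard and correct route, and it is fully consistent with the method-of-types machinery the paper does develop (Lemmas~\ref{lem:def_gjs} and~\ref{lem:error_same}). The decomposition $\{\text{error under }H_i\}\subseteq\{\GJS{\ED{X_i^N}}{\ED{Y^n}}{\alpha}>\lambda\}$ is exactly how Gutman obtains the universal exponent $\lambda$ in condition~1, and your union bound $\{\text{reject under }H_i\}\subseteq\{\text{correct class overshoots}\}\cup\bigcup_{j\ne i}\{\text{class }j\text{ undershoots}\}$ together with the Sanov-type concentration of $\GJS{\ED{X_j^N}}{\ED{Y^n}}{\alpha}$ at $\GJS{P_j}{P_i}{\alpha}$ is precisely the mechanism behind condition~2. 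The converse via the weak law on the minimizing pair $(i^\star,j^\star)$ is likewise the intended argument. Your remark that $\lambda=\widehat\lambda$ is a boundary (CLT-scale) case and that the theorem is really identifying a supremum is accurate; Gutman's original statement and the paper's phrasing both implicitly treat $\widehat\lambda$ as the critical value rather than asserting vanishing rejection at equality.
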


\subsection{Main Results}
In this section, we present our proposed test which does not utilize the rejection option. 
Let $\gamma \in \mathbb{R}_{+}$ be a fixed threshold for the test. For $n\in \mathbb{N}$, define the set 
\begin{equation}
\begin{aligned}
\Psi_n \triangleq \bigg\{ i\in \{1,\hdots,M\} :  &\exists\, 1 \leq k \leq n \ \text{such that} \\ 
& k \GJS{\ED{X_i^N}}{\ED{Y^k}}{\frac{N}{k}}\geq \gamma N  \bigg\}.
\end{aligned}
\end{equation}
Then, the proposed stopping time is 
\begin{equation} \label{eq:def_stoptime_m}
T_{\text{seq}}^{(M)} \triangleq \inf \big\{ n\geq 1 :  | \Psi_n  | \geq M-1  \big\}\wedge N^2.
\end{equation}
Also, at time $T_{\text{seq}}^{(M)}$, the terminal decision rule is
\begin{equation} \label{eq:def_dec_rule_m}
d_{\text{seq}}^{(M)}\triangleq \left[M\right] \setminus \Psi_{T_{\text{seq}}^{(M)}}.
\end{equation}
The proposed test is denoted by $\Phi_{\text{seq}}^{(M)}(\gamma)=\big(T_{\text{seq}}^{(M)},d_{\text{seq}}^{(M)}\big)$. \par
Figure \ref{fig:emp_seq_M} shows a realization of $\Phi_{\text{seq}}^{(M)}(\gamma)$ for three ternary distributions $P_1=[0.1,0.7,0.2]$, $P_2=[0.4,0.5,0.1]$,  and $P_3=\left[0.3, 0.3, 0.4 \right]$, and $\gamma=0.03$ where the test sequence drawn from $P_2$ and the length of each training sequence is $N=300$. Note that the stopping time defined in (\ref{eq:def_stoptime_m}) is $T_{\text{seq}}^{(M)}=40$ since at $n=40$, we have $|\Psi_{73}|=2=M-1$. Then, according to~(\ref{eq:def_dec_rule_m}), the final decision rule is $H_2$.  We now recall that  $C\left(P_i,P_j\right)$,  defined in~\eqref{eqn:chernoff}, denotes the Chernoff information between $P_i$ and $P_j$. This will feature in the next theorem. 
\begin{thm}
\label{thm:main_multi}
Fix $\left(P_1,\hdots,P_M\right) \in \mathcal{P}\left(\mathcal{X}\right)^M$. Let $\mathcal{M}\triangleq \{\left(i,j\right) \in \left[M\right]^2,i\neq j  \}$.  Then, for any $\gamma \in \big[0, \min_{\left(i,j\right)\in \mathcal{M}} C\left(P_i,P_j\right)\big]$,
the proposed test achieves
\begin{itemize}
\item $\pe_i(\Phi^{(M)}_{\text{seq}}(\gamma)) \dotleq \exp\left(-N \gamma\right)$
\item $ \mathbb{E}_i\left[T^{(M)}_{\text{seq}} \right]=\frac{N}{\min_{j\in \left[M\right],j\neq i}\{\theta^{\star}_{i(j),\gamma}\}}\left(1+o\left(1\right)\right)$
\end{itemize}
where $\theta^{\star}_{i(j),\gamma}$ is given by the solution to the following equation
\begin{equation} \label{eq:beta_star_M}
\GJS{P_j}{P_i}{\theta^{\star}_{i(j),\gamma}} = \gamma \theta^{\star}_{i(j),\gamma} ,\qquad\forall\, (i,j) \in \mathcal{M}.
\end{equation}
\end{thm}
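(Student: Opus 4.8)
The plan is to leverage the machinery already developed for the binary case (Theorem~\ref{thm:main_b}) and to exploit the fact that the multi-class test decouples into $M-1$ pairwise ``elimination'' events, one per competing class. The central observation is that, under $H_i$, a correct decision is made precisely when all $M-1$ rival classes $j \neq i$ are driven into $\Psi_n$ while class $i$ is kept out; the stopping time is then the instant at which the \emph{last} rival crosses the threshold, and each individual crossing is governed by exactly the same Sanov-type exponent as in the binary analysis.

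For the error bound, I would first note that under $H_i$ the decision $d^{(M)}_{\text{seq}} = [M]\setminus\Psi_{T^{(M)}_{\text{seq}}}$ fails to equal $\{i\}$ if and only if $i \in \Psi_{T^{(M)}_{\text{seq}}}$, since on the complementary event $i \notin \Psi_{T}$ together with $|\Psi_T| = M-1$ forces $[M]\setminus\Psi_T = \{i\}$. Monotonicity of $n \mapsto \Psi_n$ and the truncation $T \leq N^2$ then give
\begin{equation*}
\pe_i\big(\Phi^{(M)}_{\text{seq}}(\gamma)\big) \leq \probm_i\Big(\exists\, 1\leq k \leq N^2 :\ k\,\GJS{\ED{X_i^N}}{\ED{Y^k}}{\frac{N}{k}} \geq \gamma N\Big).
\end{equation*}
Under $H_i$ the pair $(X_i^N, Y^k)$ is generated from the \emph{same} distribution $P_i$, so this is verbatim the ``own-class'' large-deviations event already controlled in the binary proof (cf.\ Lemma~\ref{lem:error_same} and the type-I analysis in Section~\ref{sub:proof_main_b}). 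A union bound over $k \leq N^2$, each term bounded by a polynomial-in-$N$ type count times $\exp(-N\gamma)$, together with the admissibility constraint $\gamma \leq \min_{(i,j)} C(P_i,P_j)$, yields $\pe_i \dotleq \exp(-N\gamma)$.

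For the expected stopping time under $H_i$, I would introduce, for each $j \neq i$, the pairwise crossing time $\tau_j \triangleq \inf\{k \geq 1 : k\,\GJS{\ED{X_j^N}}{\ED{Y^k}}{N/k} \geq \gamma N\}$. On the high-probability event that class $i$ never enters $\Psi_n$ before all rivals do, we have $T^{(M)}_{\text{seq}} = \max_{j \neq i}\tau_j$. Because $\ED{Y^k}$ concentrates on $P_i$ and $\ED{X_j^N}$ on $P_j$, the normalized score $\frac{k}{N}\GJS{\ED{X_j^N}}{\ED{Y^k}}{N/k}$ is uniformly close to $\frac{k}{N}\GJS{P_j}{P_i}{N/k}$; writing $\theta = N/k$, monotonicity of $\theta \mapsto \frac{1}{\theta}\GJS{P_j}{P_i}{\theta}$ (Lemma~\ref{lem:g}) shows the threshold $\gamma$ is crossed near $k \approx N/\theta^{\star}_{i(j),\gamma}$, which is precisely the binary stopping-time concentration of Theorem~\ref{thm:main_b} applied to the pair $(P_i,P_j)$. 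Taking the maximum over the finite index set $\{j : j \neq i\}$ via a union bound gives $\max_{j\neq i}\tau_j = \frac{N}{\min_{j\neq i}\theta^{\star}_{i(j),\gamma}}(1 + o(1))$ with high probability, the slowest rival being the one with the smallest $\theta^{\star}_{i(j),\gamma}$.

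To convert this into a statement about $\mathbb{E}_i[T^{(M)}_{\text{seq}}]$, I would split the expectation over the ``good'' event (all rivals cross near their nominal times and $i$ does not cross) and its complement. On the good event the contribution is $\frac{N}{\min_{j\neq i}\theta^{\star}_{i(j),\gamma}}(1+o(1))$; on the complement the truncation bounds $T \leq N^2$ while the complement has probability $\dotleq \exp(-N\gamma)$ by the error analysis above, so its contribution is at most $N^2\exp(-N\gamma) = o(N)$ and is absorbed into the $(1+o(1))$ factor. The main obstacle, and the step requiring the most care, is the \emph{simultaneous} two-sided control of all $M-1$ crossing times together with the non-crossing of class $i$: one must show that the uniform-in-$k$ concentration of the empirical types is strong enough that the maximum of the rival crossing times concentrates, and that the rare event in which $i$ crosses early (or in which the order statistics of the $\tau_j$ fluctuate) does not perturb the leading order-$N$ term. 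This is exactly where the truncation at $N^2$ and the admissibility constraint $\gamma \leq \min_{(i,j)} C(P_i,P_j)$ enter, mirroring the binary argument but now requiring a finite union bound across the $M-1$ competing hypotheses.
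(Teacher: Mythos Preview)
Your overall strategy matches the paper's: decompose the multi-class test into $M-1$ pairwise elimination events, bound the own-class crossing probability via Lemma~\ref{lem:error_same}, and control the stopping time as $\max_{j\neq i}\tau_j$ with each $\tau_j$ concentrating near $N/\theta^\star_{i(j),\gamma}$. The paper does exactly this in Lemmas~\ref{lem:stop_time_multi} and~\ref{lem:error_multi}.

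There is, however, a genuine gap in your error analysis. Your ``if and only if'' claim---that $d^{(M)}_{\text{seq}}\neq\{i\}$ iff $i\in\Psi_T$---is only valid when the stopping rule is actually triggered, i.e.\ when $|\Psi_T|\geq M-1$. When the truncation $T=N^2$ kicks in with $|\Psi_{N^2}|<M-1$, the set $[M]\setminus\Psi_T$ has more than one element, so the decision is not $\{i\}$ even though $i\notin\Psi_T$. You have therefore bounded only the ``wrong decision'' mode and omitted the ``no decision'' mode $\{T^{(M)}_{\text{seq}}=N^2,\ |\Psi_{N^2}|<M-1\}$. In the paper this second term is handled separately (see Lemma~\ref{lem:error_multi}), by conditioning on the event $\bigcap_{j\neq i}\{\KLname(\ED{X_j^N}\|P_i)>\gamma+\varepsilon_{j,N}\}$ and then applying Sanov's theorem to its complement; the resulting exponent is $\min_{j\neq i}\min_{V:\KLname(V\|P_i)\leq\gamma}\KLname(V\|P_j)$, and it is \emph{this} term that requires the admissibility constraint $\gamma\leq\min_{(i,j)}C(P_i,P_j)$ to be at least $\gamma$.

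In other words, you have attached the Chernoff-information hypothesis to the wrong piece: the own-class crossing bound from Lemma~\ref{lem:error_same} is already $\dotleq\exp(-N\gamma)$ with no assumption on $\gamma$. The constraint $\gamma\leq\min_{(i,j)}C(P_i,P_j)$ is precisely what ensures that the ``no decision'' event---driven by the rival training types $\ED{X_j^N}$ being atypically close to $P_i$---also decays like $\exp(-N\gamma)$. Once you add that missing term and place the Chernoff constraint there, your argument lines up with the paper's.
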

\begin{corollary}
\label{corr:achiev-multi}
The achievable error exponents of the proposed test under $H_i$ is given by
\begin{equation}
\begin{aligned}
\textsf{e}_i\left(\Phi^{(M)}_{\text{seq}}(\gamma)\right)&=\liminf\limits_{N \to \infty}\frac{-\log \pe_i(\Phi_{\text{seq}}^{(M)}(\gamma))}{\mathbb{E}_{i}\big[ T^{(M)}_{\text{seq}}\big]}\\
&\geq \min_{j\in \left[M\right],j\neq i}\GJS{P_j}{P_i}{\theta^{\star}_{i(j),\gamma}},
\end{aligned}
\end{equation}
for all $ i \in \left[M\right]$
\end{corollary}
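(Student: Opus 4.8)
The plan is to derive this corollary directly from Theorem \ref{thm:main_multi}; no new probabilistic argument is required, only a substitution of the two asymptotic estimates into the definition of the error exponent together with the defining relation \eqref{eq:beta_star_M}. First I would recall Definition \ref{def:error_exp}, so that the quantity to be bounded is $\textsf{e}_i(\Phi^{(M)}_{\text{seq}}(\gamma)) = \liminf_{N \to \infty} \frac{-\log \pe_i(\Phi^{(M)}_{\text{seq}}(\gamma))}{\mathbb{E}_i[T^{(M)}_{\text{seq}}]}$. From the first bullet of Theorem \ref{thm:main_multi}, namely $\pe_i(\Phi^{(M)}_{\text{seq}}(\gamma)) \dotleq \exp(-N\gamma)$, the meaning of $\dotleq$ yields $\liminf_{N\to\infty} \frac{-\log \pe_i(\Phi^{(M)}_{\text{seq}}(\gamma))}{N} \geq \gamma$. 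From the second bullet I would record the precise form of the denominator, $\mathbb{E}_i[T^{(M)}_{\text{seq}}] = \frac{N}{\min_{j\neq i}\theta^{\star}_{i(j),\gamma}}(1+o(1))$.

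Next I would form the ratio and factor out the common scale $N$, writing $\frac{-\log \pe_i}{\mathbb{E}_i[T^{(M)}_{\text{seq}}]} = \frac{-\log \pe_i}{N}\cdot \frac{\min_{j\neq i}\theta^{\star}_{i(j),\gamma}}{1+o(1)}$. Because the second factor converges to the strictly positive constant $\min_{j\neq i}\theta^{\star}_{i(j),\gamma}$, I can take the $\liminf$ of the product and split it: the first factor has $\liminf$ at least $\gamma$, while the second factor has a genuine limit, so that $\textsf{e}_i(\Phi^{(M)}_{\text{seq}}(\gamma)) \geq \gamma \min_{j\neq i}\theta^{\star}_{i(j),\gamma}$. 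The only point deserving a word of care here is this interchange of $\liminf$ with the product, which is legitimate precisely because the $(1+o(1))$ factor tends to $1$ and the limiting constant is positive.

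Finally I would invoke the defining equation \eqref{eq:beta_star_M}, which asserts $\GJS{P_j}{P_i}{\theta^{\star}_{i(j),\gamma}} = \gamma\,\theta^{\star}_{i(j),\gamma}$ for every $(i,j)\in\mathcal{M}$. Substituting this identity term by term inside the minimum converts $\gamma\min_{j\neq i}\theta^{\star}_{i(j),\gamma} = \min_{j\neq i}\gamma\,\theta^{\star}_{i(j),\gamma}$ into $\min_{j\neq i}\GJS{P_j}{P_i}{\theta^{\star}_{i(j),\gamma}}$, which is exactly the claimed lower bound. Since all the substantive work—the large-deviations control of $\pe_i$ and the sharp asymptotics of $\mathbb{E}_i[T^{(M)}_{\text{seq}}]$—is already contained in Theorem \ref{thm:main_multi}, I expect no genuine obstacle beyond this bookkeeping; the corollary is essentially a restatement of the theorem in the language of exponents.
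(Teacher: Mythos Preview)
Your proposal is correct and follows essentially the same route as the paper: divide the error-probability exponent by the expected stopping time to obtain $\gamma\,\min_{j\neq i}\theta^{\star}_{i(j),\gamma}$, then invoke the fixed-point relation~\eqref{eq:beta_star_M} to rewrite this as $\min_{j\neq i}\GJS{P_j}{P_i}{\theta^{\star}_{i(j),\gamma}}$. The only cosmetic difference is that the paper starts from the more detailed bound of Lemma~\ref{lem:error_multi} (which carries the extra $\min\{\gamma,\,\cdot\}$ term involving $\varepsilon_{j,N}$) and then reduces it to $\gamma$ via the Chernoff-information condition~\eqref{eq:chernoof_multi}, whereas you start from the already-simplified first bullet of Theorem~\ref{thm:main_multi}; since that bullet is precisely the output of that reduction, the two arguments are equivalent.
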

\begin{figure}[t]
\centering
\includegraphics[scale=0.450]{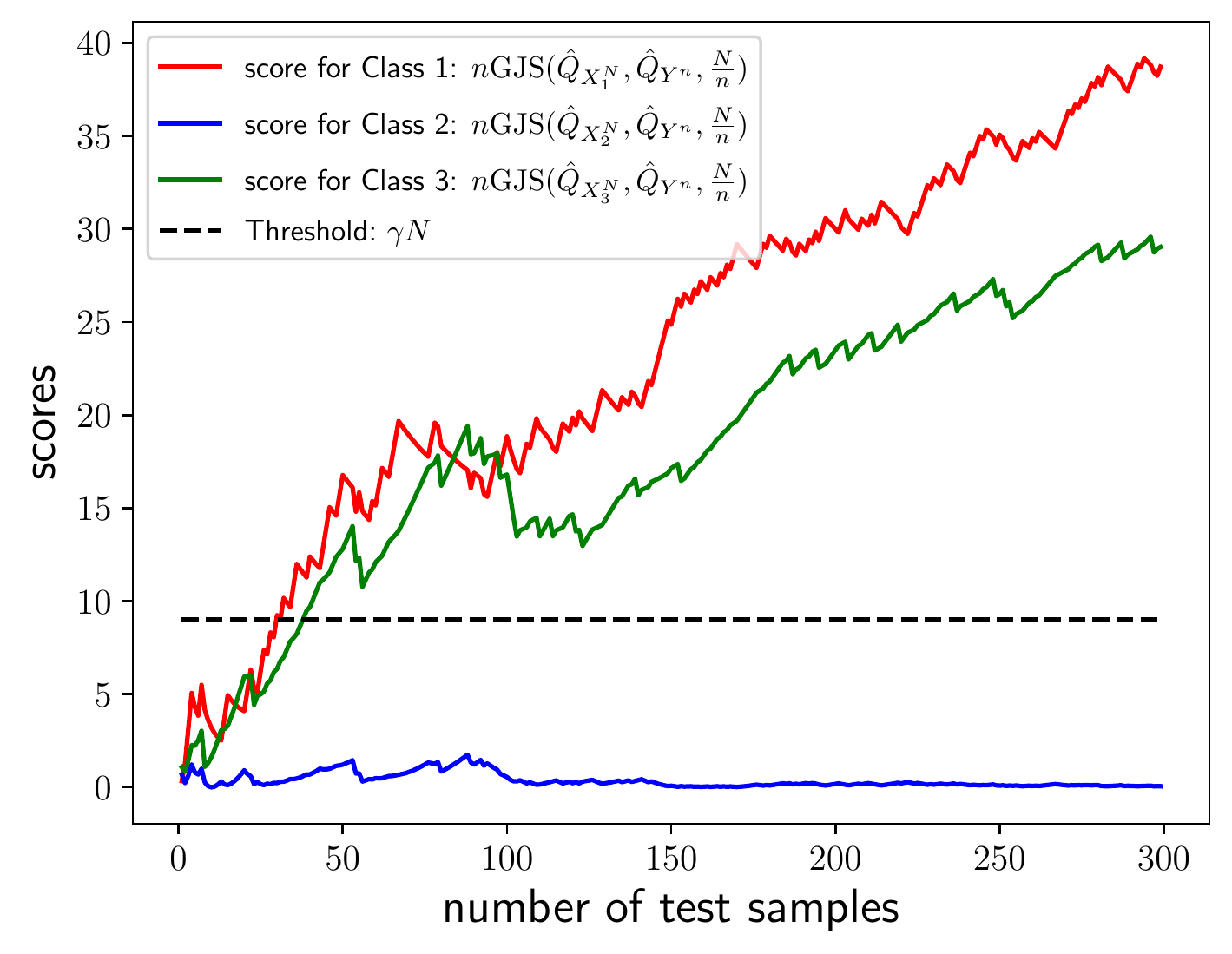}
\caption{A realization of the sequential test  $\Phi_{\text{seq}}^{(M)}$ for the multi-class classification. }
\label{fig:emp_seq_M}
\end{figure}
\subsection{Comparison to Gutman's test for the multiclass classification problem}
In this section, we compare the Gutman's test for the multiclass classification problem using the test $\Phi_{\text{seq}}^{(M)}$.  We adopt a Bayesian approach in which we assign prior probabilities $\pi_i , i \in [M]$ to hypotheses $H_i, i\in [M]$. In this case, the {\em average probability of error} is given by
\begin{equation}
\pe(\Phi) = \sum_{i=1}^{M} \pi_i \mathbb{P}\left(d\left( \{ X_i^N\}_{i=1}^{M}, Y^n\right)\neq H_i \big| H_i\right).
\end{equation}
Here, we are interested in the Bayesian error exponent defined in~(\ref{eq:bayesian_exponent}).
The main result by Gutman in Theorem  \ref{thm:gut_multi} can be restated in terms of the Bayesian error exponent as follows. The maximum $\lambda_{\text{Bayesian}}$ for which we have 

$$ \textsf{e}_{\text{Bayesian}}^\pi\left(\Phi_{\text{GUT}}^{(M)}(\lambda,\alpha)\right)\geq \lambda_{\text{Bayesian}},
$$

and the rejection probability defined in (\ref{eq:rejection_prob}) tends to zero as $N\to \infty$ is 
\begin{equation}
\label{eq:exp_bayesian_gut}
\lambda^{\star}_{\text{Bayesian}} = \min\limits_{(i,j) \in  \mathcal{M}}\frac{ \GJS{P_i}{P_j}{\alpha}}{\alpha}.
\end{equation}  
It can be shown that $\lambda^{\star}_{\text{Bayesian}}$ is a decreasing function of $\alpha$. We follow the same approach as in Section \ref{sec:compare_binary} where we compared  Gutman's scheme with our proposed test for the binary case. We argue that assigning $\alpha $ to be $\min_{(i,j) \in  \mathcal{M}} \theta^{\star}_{i(j)}$ where $\theta^{\star}_{i(j)}$, defined in (\ref{eq:beta_star_M}), ensures that the %
comparison of the achievable Bayesian error exponents of Gutman's scheme and the sequential test is fair. From Theorem \ref{thm:main_multi}, it is straightforward to show that 
\begin{equation}
\textsf{e}_{\text{Bayesian}}^\pi\left(\Phi_{\text{seq}}^{(M)}(\gamma)\right)\geq \gamma.
\end{equation}
Here, we claim that setting $\alpha=\min_{(i,j) \in  \mathcal{M}} \theta^{\star}_{i(j)}$ in (\ref{eq:exp_bayesian_gut}), we get 
\begin{align}
\lambda^{\star}_{\text{Bayesian}}&=\min\limits_{\left(l,k\right)\in \mathcal{M}}\frac{ \GJS{P_l}{P_k}{\min_{\left(i,j\right)\in \mathcal{M}} \theta^{\star}_{i(j),\gamma}}}{\min_{\left(i,j\right)\in \mathcal{M}} \theta^{\star}_{i(j),\gamma}} \nonumber \\
&=\gamma,
\end{align}
where the last step can be proved as follows: For all $ (l,k) \in \mathcal{M}$, we have 
\begin{equation}
\label{eq:bayes_proof_multi}
\GJS{P_l}{P_k}{\min_{\left(i,j\right)\in \mathcal{M}} \theta^{\star}_{i(j),\gamma} }\geq \gamma \min_{\left(i,j\right)\in \mathcal{M}} \theta^{\star}_{i(j),\gamma}.
\end{equation}
The reason for \eqref{eq:bayes_proof_multi} is as follows. Define $f_{l,k}\left(\theta\right) \triangleq \GJS{P_l}{P_k}{\theta} - \gamma \theta $. Note that $f_{l,k}(\theta^{\star}_{k(l),\gamma})=0$. %
Furthermore, for $\theta \leq \theta^{\star}_{k(l),\gamma}$ we have $f_{l,k}\left(\theta\right) \geq \gamma \theta$. Since $\min_{\left(i,j\right)\in \mathcal{M}} \theta^{\star}_{i(j),\gamma} \leq \theta^{\star}_{k(l),\gamma}$, we have (\ref{eq:bayes_proof_multi}) as required. 
Therefore, we showed that the Bayesian error exponents of Gutman's test and the sequential test are the same. However, recall that the sequential test achieves this performance {\em without the rejection option}. The next theorem summarizes our discussion in this section. 
\begin{thm}
\label{thm:gut_comp_multi}
The achievable Bayesian error exponent of the sequential test, i.e., $\Phi_{\text{seq}}^{(M)}(\gamma)$ is equal to that of the Gutman's $\Phi_{\text{GUT}}^{(M)}(\gamma)$. Gutman's test achieves this performance by introducing the rejection option, while the sequential test does not have this option. 
\end{thm}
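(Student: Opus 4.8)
The plan is to evaluate both Bayesian error exponents in closed form and check that they coincide at the common value $\gamma$. I would first settle the sequential side. By Theorem~\ref{thm:main_multi} every type-$i$ error probability obeys $\pe_i(\Phi^{(M)}_{\text{seq}}(\gamma)) \dotleq \exp(-N\gamma)$, and since the Bayesian error probability $\pe = \sum_{i=1}^{M}\pi_i\,\pe_i$ is a finite convex combination with fixed positive weights, its exponential rate is dictated by the slowest term; hence $\pe \dotleq \exp(-N\gamma)$ and $\textsf{e}_{\text{Bayesian}}^{\pi}(\Phi^{(M)}_{\text{seq}}(\gamma)) \geq \gamma$. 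This certifies $\gamma$ as the achievable Bayesian exponent of the sequential scheme.

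Next I would compute Gutman's exponent under the fair-comparison convention of Section~\ref{sec:compare_binary}. The per-hypothesis expected sample complexity of the sequential test ranges over $\{N/\min_{j\neq i}\theta^{\star}_{i(j),\gamma}\}_i$, so fairness grants Gutman the largest of these, i.e.\ $n = N/\min_{(i,j)\in\mathcal{M}}\theta^{\star}_{i(j),\gamma}$, equivalently $\alpha = \min_{(i,j)\in\mathcal{M}}\theta^{\star}_{i(j),\gamma}$. Since $\lambda^{\star}_{\text{Bayesian}}(\alpha)$ in \eqref{eq:exp_bayesian_gut} is decreasing in $\alpha$ (by the argument of Lemma~\ref{lem:dec_bayes}), this is the most generous fair choice for Gutman, so establishing equality here suffices. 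Substituting this $\alpha$ into \eqref{eq:exp_bayesian_gut} reduces the theorem to verifying
\[
\min_{(l,k)\in\mathcal{M}}\frac{\GJS{P_l}{P_k}{\min_{(i,j)\in\mathcal{M}}\theta^{\star}_{i(j),\gamma}}}{\min_{(i,j)\in\mathcal{M}}\theta^{\star}_{i(j),\gamma}}=\gamma .
\]

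The heart of the matter is inequality \eqref{eq:bayes_proof_multi}. For each ordered pair $(l,k)$ I would set $f_{l,k}(\theta)\triangleq\GJS{P_l}{P_k}{\theta}-\gamma\theta$ and, reading the defining relation \eqref{eq:beta_star_M} with $i=k$, $j=l$, record $f_{l,k}(\theta^{\star}_{k(l),\gamma})=0$. The key structural input is that $\theta\mapsto\GJS{P_l}{P_k}{\theta}$ is concave and increasing, vanishing at $\theta=0$ and saturating at $\KL{P_k}{P_l}$ as $\theta\to\infty$ (the shape recorded in Lemma~\ref{lem:g}); since the chord $\gamma\theta$ also leaves the origin, concavity forces $f_{l,k}\geq 0$ on all of $[0,\theta^{\star}_{k(l),\gamma}]$. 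As $\min_{(i,j)}\theta^{\star}_{i(j),\gamma}\leq\theta^{\star}_{k(l),\gamma}$ for every $(l,k)$, evaluating at $\theta=\min_{(i,j)}\theta^{\star}_{i(j),\gamma}$ yields \eqref{eq:bayes_proof_multi}, so the displayed minimum is at least $\gamma$. Equality comes for free from the minimizing pair: if $(i_0,j_0)$ attains $\min_{(i,j)}\theta^{\star}_{i(j),\gamma}$, then taking $(l,k)=(j_0,i_0)$ gives $\theta^{\star}_{k(l),\gamma}=\min_{(i,j)}\theta^{\star}_{i(j),\gamma}$, whence the fixed-point equation forces the corresponding ratio to equal $\gamma$ exactly, pinning the minimum at $\gamma$.

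Combining the two sides, both exponents equal $\gamma$, which is the asserted equality; that the sequential test attains it \emph{without} a rejection option is immediate from the construction \eqref{eq:def_stoptime_m}--\eqref{eq:def_dec_rule_m}, where $d^{(M)}_{\text{seq}}$ always returns a genuine label in $[M]$. I expect the only delicate point to be the concavity and monotonicity of $\theta\mapsto\GJS{P_l}{P_k}{\theta}$ together with careful bookkeeping of the index convention $\theta^{\star}_{i(j),\gamma}$ against the GJS arguments; once the shape of the GJS curve and its unique positive crossing with the line $\gamma\theta$ are in hand, the remainder is a direct substitution.
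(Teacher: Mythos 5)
Your proposal is correct and follows essentially the same route as the paper: lower-bound the sequential Bayesian exponent by $\gamma$ via Theorem~\ref{thm:main_multi}, grant Gutman the sample size $N/\min_{(i,j)\in\mathcal{M}}\theta^{\star}_{i(j),\gamma}$ under the fairness convention (justified by monotonicity in $\alpha$), and then show via the concave function $f_{l,k}(\theta)=\GJS{P_l}{P_k}{\theta}-\gamma\theta$ vanishing at $0$ and at $\theta^{\star}_{k(l),\gamma}$ that every ratio in \eqref{eq:exp_bayesian_gut} is at least $\gamma$, with equality at the minimizing pair. In fact your write-up is slightly cleaner than the paper's: you state the correct form of the concavity bound ($f_{l,k}\geq 0$ on $[0,\theta^{\star}_{k(l),\gamma}]$, where the paper has the typo $f_{l,k}(\theta)\geq\gamma\theta$) and you make explicit the attainment of equality at the pair realizing $\min_{(i,j)}\theta^{\star}_{i(j),\gamma}$, which the paper leaves implicit.
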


\begin{corollary}
The maximum achievable Bayesian error exponent of the sequential scheme  $\Phi_{\text{seq}}^{(M)}(\gamma)$  is 
\begin{equation}
\min_{\left(i,j\right)\in \left[M\right]^2,i\neq j} C\left(P_i,P_j\right),
\end{equation}
where $C\left(P_i,P_j\right)$ is Chernoff information between $P_i$ and $P_j$.
\end{corollary}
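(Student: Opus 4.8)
The plan is to deduce the statement by combining two ingredients already in hand: the lower bound on the Bayesian exponent furnished by Theorem~\ref{thm:main_multi}, and the admissible range of the threshold $\gamma$. First I would recall that, by the per-hypothesis estimate $\pe_i(\Phi^{(M)}_{\text{seq}}(\gamma)) \dotleq \exp(-N\gamma)$ of Theorem~\ref{thm:main_multi}, the averaged error satisfies $\pe(\Phi^{(M)}_{\text{seq}}(\gamma)) = \sum_{i=1}^{M}\pi_i\,\pe_i(\Phi^{(M)}_{\text{seq}}(\gamma)) \dotleq \exp(-N\gamma)$, since the priors $\pi_i$ are constants. Hence $\textsf{e}_{\text{Bayesian}}^\pi(\Phi^{(M)}_{\text{seq}}(\gamma)) \ge \gamma$, exactly as recorded immediately before Theorem~\ref{thm:gut_comp_multi}. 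Because the threshold is admissible precisely for $\gamma \in \big[0,\min_{(i,j)\in\mathcal{M}} C(P_i,P_j)\big]$ and the guaranteed exponent $\gamma$ is increasing in $\gamma$, I would maximize over the family $\{\Phi^{(M)}_{\text{seq}}(\gamma)\}_\gamma$ by pushing $\gamma$ to the right endpoint. As the admissible interval is \emph{closed}, the choice $\gamma = \min_{(i,j)\in\mathcal{M}} C(P_i,P_j)$ is itself admissible, giving $\textsf{e}_{\text{Bayesian}}^\pi \ge \min_{(i,j)\in\mathcal{M}} C(P_i,P_j)$ and settling the achievability half.

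For the converse half---that the exponent cannot be pushed beyond $\min_{(i,j)\in\mathcal{M}} C(P_i,P_j)$---I would use that the estimate $\pe_i \dotleq \exp(-N\gamma)$ is order tight, i.e.\ $\pe_i \doteq \exp(-N\gamma)$, so the Bayesian exponent of $\Phi^{(M)}_{\text{seq}}(\gamma)$ equals $\gamma$ exactly rather than merely dominating it. This tightness is read off from the large-deviations analysis underlying Theorem~\ref{thm:main_multi}: under $H_i$ the dominant error event is the crossing of the threshold by the \emph{correct} class $i$, whose probability decays with exponent exactly $\gamma$ throughout the admissible range. Consequently the achievable exponent, viewed as a function of the tunable parameter, equals $\gamma$, is monotone increasing, and attains its maximum at the right endpoint, which is $\min_{(i,j)\in\mathcal{M}} C(P_i,P_j)$. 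As a sanity check on this value one may reduce to the binary problem: restricting attention to the pair $(i^\star,j^\star)$ attaining $\min_{(i,j)\in\mathcal{M}} C(P_i,P_j)$, separating $P_{i^\star}$ from $P_{j^\star}$ is an unavoidable sub-task, and the maximal exponent available there is $C(P_{i^\star},P_{j^\star})$ in the spirit of Corollary~\ref{corr:max-bayes-exp-bin}.

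I expect the main obstacle to be justifying the cap at the Chernoff information rather than at the \emph{larger} quantity $\min_{(i,j)\in\mathcal{M}}\KL{P_i}{P_j}$, which is all that governs mere solvability of the fixed-point equation~\eqref{eq:beta_star_M}. Indeed, $\GJS{P_j}{P_i}{\theta}/\theta$ decreases from $\KL{P_j}{P_i}$ (as $\theta\to 0$) to $0$ (as $\theta\to\infty$), so $\GJS{P_j}{P_i}{\theta^\star_{i(j),\gamma}}=\gamma\theta^\star_{i(j),\gamma}$ is solvable for every $\gamma$ below $\KL{P_j}{P_i}$; thus the genuine restriction $\gamma\le\min_{(i,j)\in\mathcal{M}} C(P_i,P_j)$ is not forced by well-definedness of the stopping rule but by the requirement that the correct class survive the threshold test with the claimed exponent. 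Pinning down that the competing rates---the correct class crossing early versus the wrong classes failing to cross before the truncation time $N^2$---balance exactly at the Chernoff information is the delicate point; but this is precisely the content of Theorem~\ref{thm:main_multi}, which I may assume, so the corollary reduces to invoking that theorem over its full admissible range and checking that the endpoint is included.
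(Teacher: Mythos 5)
Your first paragraph is precisely the paper's own argument, and it is all that is needed: Theorem~\ref{thm:main_multi} gives $\pe_i(\Phi^{(M)}_{\text{seq}}(\gamma)) \dotleq \exp(-N\gamma)$ for every $\gamma$ in the \emph{closed} interval $\big[0,\min_{(i,j)\in\mathcal{M}}C(P_i,P_j)\big]$, the priors are constants, the Bayesian exponent in~\eqref{eq:bayesian_exponent} is normalized by $N$, hence $\textsf{e}^{\pi}_{\text{Bayesian}}(\Phi^{(M)}_{\text{seq}}(\gamma))\geq\gamma$, and evaluating at the admissible right endpoint yields the claim. This mirrors exactly how the paper handles the binary counterpart (Corollary~\ref{corr:max-bayes-exp-bin}): ``maximum achievable'' is read as the largest exponent \emph{guaranteed} by the analysis over the family $\{\Phi^{(M)}_{\text{seq}}(\gamma)\}_{\gamma}$, so no converse is required.

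Your second paragraph, by contrast, asserts something the paper never establishes: that $\pe_i(\Phi^{(M)}_{\text{seq}}(\gamma)) \doteq \exp(-N\gamma)$, i.e.\ a matching \emph{lower} bound on the error probability, claimed to be ``read off'' from the analysis. Theorem~\ref{thm:main_multi} and Lemma~\ref{lem:error_multi} prove only upper bounds ($\dotleq$); a genuine tightness statement would need a separate converse argument (a lower-bound counting or change-of-measure step for the threshold-crossing event), which appears nowhere in the paper. So, taken as a proof of a converse, that paragraph has a gap --- but the gap is harmless here because the corollary, in the paper's intended sense, is settled by achievability alone. One smaller correction: the reason the cap sits at the Chernoff information is not that ``the correct class must survive the threshold test''; the correct class's crossing already has exponent $\gamma$ by Lemma~\ref{lem:error_same}. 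Rather, the binding event is that a \emph{wrong} class's training sequence looks like the true generating distribution (so that class is never eliminated and the test stalls to the truncation time); its Sanov exponent is $\min_{V:\KL{V}{P_j}\leq\gamma}\KL{V}{P_i}$, and \eqref{eq:chernoof_multi} says this stays $\geq\gamma$ precisely for $\gamma\leq\min_{(i,j)\in\mathcal{M}}C(P_i,P_j)$, which is why the admissible range stops at the Chernoff information rather than at $\min_{(i,j)\in\mathcal{M}}\KL{P_i}{P_j}$.
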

\begin{remark}
\label{rem:reject-seq}
It is interesting to note that the possibility of removing the ``rejection region'' provided that sequential tests are allowed has been shown in other contexts. For instance, in \cite[Theorem~6]{lee2016} and \cite[Theorem~1]{hayashi2015} this phenomenon was shown in the context of block coding and streaming data transmission, respectively. However, to the best of our knowledge, none of the previous works consider scenarios in which the distributions are unknown or partially known. This works aims to formalize this idea the more practical statistical learning scenario in which one has partial, noisy information about the underlying distributions in the form of finite-length training samples.
\end{remark}
\section{Proofs of the Main Results}
\label{sec:proof_all_results}
\subsection{ Preliminary Lemmas}
\label{app:proof}
Subsection \ref{app:proof} is devoted to some preliminary lemmas that will be used in the sequel. We first start with Lemma \ref{lem:def_gjs} which provides a variational representation of the GJS divergence.
\begin{lemma}\label{lem:def_gjs}
Let $v^N$ and $w^n$ are two sequences with types $Q_1$ and $Q_2$ over the alphabet $\mathcal{X}$, respectively. We form the following optimization problem.
\begin{equation}\label{opt:def_gjs}
\begin{aligned}
& \underset{P}{\text{min}}
& & -\frac{1}{n}\log P^{n+N }\left(v^N, w^n\right) \\
& \text{subject to}
& &  P\in \mathcal{P}\left(\mathcal{X}\right).
\end{aligned}
\end{equation}
Then, the optimal value of (\ref{opt:def_gjs}) lower  bounded by
$ \GJS{Q_1}{Q_2}{\frac{N}{n}}$. Also, the optimal solution is given by $P^{\star}=\frac{\frac{N}{n}Q_1 + Q_2}{1+\frac{N}{n}}$.
\begin{proof}
We begin the proof by rewriting   
\begin{align}\label{eq:def_gjs_simp}
&N  \KL{Q_1}{P} + n\KL{Q_2}{P}\nonumber\\
&= N\mathbb{E}_{Q_1}\left[\log \frac{Q_1}{P}\right] + n\mathbb{E}_{Q_2}\left[\log \frac{Q_2}{P}\right] \\
&=N\mathbb{E}_{Q_1}\left[\log \frac{Q_1}{\frac{\frac{N}{n}Q_1+Q_2}{1+\frac{N}{n}}}\right] + N\mathbb{E}_{Q_1}\left[\frac{\frac{\frac{N}{n}Q_1+Q_2}{1+\frac{N}{n}}}{P}\right] \nonumber \\
 & + n\mathbb{E}_{Q_2}\left[\log \frac{Q_2}{\frac{\frac{N}{n}Q_1+Q_2}{1+\frac{N}{n}}}\right]  + n\mathbb{E}_{Q_2}\left[\frac{\frac{\frac{N}{n}Q_1+Q_2}{1+\frac{N}{n}}}{P}\right]\\
&= n \GJS{Q_1}{Q_2}{\frac{N}{n}} + \left(N+n\right) \KL{\frac{\frac{N}{n}Q_1+Q_2}{1+\frac{N}{n}}}{P}. \label{eqn:DDG}
\end{align}
Then, from \cite{csiszar1998method}, we have
\begin{align}
&P^{ n+N }\left(v^N,w^n\right) = \nonumber \\ &  \exp\big(-N\left( \KL{Q_1}{P}+\ent{Q_1}\right)-n\left(\KL{Q_2}{P}+\ent{Q_2}\right)\big)\label{eq:def_gjs_1}  \\
&= \exp\bigg(-n \GJS{Q_1}{Q_2}{\frac{N}{n}}-N \ent{Q_1}- n \ent{Q_2} - \nonumber \\
&\quad\left(N+n\right) \KL{\frac{\frac{N}{n}Q_1+Q_2}{1+\frac{N}{n}}}{P}\bigg) \label{eq:def_gjs_2} \\
&\leq \exp\left(-n \GJS{Q_1}{Q_2}{\frac{N}{n}}-N \ent{Q_1} - n \ent{Q_2}\right) \label{eqn:drop_KL}
\end{align}
In (\ref{eq:def_gjs_2}), we plug (\ref{eq:def_gjs_simp}) into (\ref{eq:def_gjs_1}). Finally, the last step follows due to non-negativity of KL divergence. 
The upper bound can be achieved by setting $P = \frac{\frac{N}{n}Q_1 + Q_2}{1+\frac{N}{n}}$. Therefore, we can conclude that the optimal solution is $P^{\star} = \frac{\frac{N}{n}Q_1 + Q_2}{1+\frac{N}{n}}$.
\end{proof}
\end{lemma}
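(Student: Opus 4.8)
The plan is to evaluate the product measure $P^{n+N}(v^N, w^n)$ explicitly via the method of types, thereby reducing the stated optimization to the minimization of a weighted sum of Kullback--Leibler divergences over $P$, which can then be solved in closed form by a compensation-type (``completing the square'') decomposition. First I would write $P^{n+N}(v^N, w^n)=\prod_{i=1}^{N}P(v_i)\prod_{j=1}^{n}P(w_j)$. Because $v^N$ has type $Q_1$ and $w^n$ has type $Q_2$, each product collapses into a sum over $\calX$ weighted by the respective type, giving
\[
\log P^{n+N}(v^N, w^n) = N\sum_{a\in\calX} Q_1(a)\log P(a) + n\sum_{a\in\calX} Q_2(a)\log P(a).
\]
Applying the identity $\sum_a Q(a)\log P(a) = -\KL{Q}{P}-\ent{Q}$ to each sum converts the negative log-likelihood into $N\KL{Q_1}{P}+n\KL{Q_2}{P}$ plus the $P$-independent entropy terms $N\ent{Q_1}+n\ent{Q_2}$.

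The crux is then to minimize $N\KL{Q_1}{P}+n\KL{Q_2}{P}$ over $P\in\calP(\calX)$. The key algebraic step, which is the part needing the most care, is to establish the decomposition
\[
N\KL{Q_1}{P} + n\KL{Q_2}{P} = n\,\GJS{Q_1}{Q_2}{\tfrac{N}{n}} + (N+n)\,\KL{P^{\star}}{P},
\]
where $P^{\star}=\frac{(N/n)Q_1+Q_2}{1+N/n}$. I would prove this by inserting a factor $\log\!\big(P^{\star}(a)/P(a)\big)$ inside each divergence and splitting each term. The pieces measuring the divergence of $Q_1$ and $Q_2$ from $P^{\star}$ collapse into exactly $n\,\GJS{Q_1}{Q_2}{N/n}$ by the very definition of the GJS divergence, with $P^{\star}$ playing the role of $P_\alpha$ for $\alpha=N/n$. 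The remaining cross terms combine, using the fact that $NQ_1+nQ_2=(N+n)P^{\star}$, into $(N+n)\,\KL{P^{\star}}{P}$.

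Granting this identity, the conclusion is immediate: since $\KL{P^{\star}}{P}\ge 0$ with equality if and only if $P=P^{\star}$, the minimizer is $P^{\star}$ and the minimum of the divergence sum equals $n\,\GJS{Q_1}{Q_2}{N/n}$. Dividing the full negative log-likelihood by $n$ then yields optimal value $\GJS{Q_1}{Q_2}{N/n}+\frac{N}{n}\ent{Q_1}+\ent{Q_2}$, attained at $P=P^{\star}$; discarding the non-negative entropy terms gives the claimed lower bound $\GJS{Q_1}{Q_2}{N/n}$. I expect the only genuine obstacle to be verifying the compensation decomposition cleanly (essentially a Pythagorean identity for the information projection onto the sample-weighted mixture); everything else is routine bookkeeping with the method of types and the non-negativity of relative entropy.
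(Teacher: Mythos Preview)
Your proposal is correct and follows essentially the same approach as the paper: both express $-\log P^{n+N}(v^N,w^n)$ via the method of types as $N\KL{Q_1}{P}+n\KL{Q_2}{P}$ plus entropy terms, and both establish the same Pythagorean-type decomposition $N\KL{Q_1}{P}+n\KL{Q_2}{P}=n\,\GJS{Q_1}{Q_2}{N/n}+(N+n)\KL{P^\star}{P}$ by inserting $P^\star$ into the logarithms, then invoke non-negativity of $\KL{P^\star}{P}$. The only difference is the order of presentation (you first unpack the likelihood and then decompose, whereas the paper first proves the decomposition identity and then substitutes), which is purely cosmetic.
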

Intuitively, if one wishes to find a probability measure that maximizes the joint probability of observing two sequences with \text{different length} and \textit{different types}, then GJS naturally arises as a lower bound for the exponent of the desired probability.

In the next lemma, an interesting connection between the GJS divergence and mutual information is established.
\begin{lemma}\label{lem:mutual_gjs}
Let $X$ and $Y$ are two independent random variables drawn according to probability distributions $P$ and $Q$ respectively over the same alphabet. Also, $W$ is a Bernoulli random variable independent of $X$ and $Y$ with probabilities $\frac{\alpha}{1+\alpha}$ and $\frac{1}{1+\alpha}$ for $\alpha \in \mathbb{R}_{+}$, respectively. Let us define the random variable $Z$ as 
$
Z \triangleq \begin{cases} X & \text{if $W=0$} \\ 
 Y & \text{if $W=1$}  \end{cases}.
$  
Then, we have
\begin{equation} \label{eq:lem_mutual_info}
(1+\alpha) I\left(Z;W\right)= \GJS{P}{Q}{\alpha}
\end{equation}
\begin{proof}
We have
\begin{equation}
\begin{aligned}
&I\left(Z;W\right) = \ent{Z} - \ent{Z\big| W} \\
&= \ent{Z}-\ent{Z\big| W=0}\frac{\alpha}{1+\alpha} - \ent{Z\big|W=1}\frac{1}{1+\alpha}  \\
&=-\sum_{x\in \mathcal{X}} \left(\frac{\alpha}{1+\alpha}P\left(x\right) + \frac{1}{1+\alpha}Q\left(x\right) \right)  \\
& \hspace{1.5cm} \times \log\left(\frac{\alpha}{1+\alpha}P\left(x\right) + \frac{1}{1+\alpha}Q\left(x\right)\right) \\
&+\frac{\alpha}{1+\alpha}\sum_{x \in \mathcal{X}} P\left(x\right)\log P\left(x\right) + \frac{1}{1+\alpha} \sum_{x \in \mathcal{X}} Q\left(x\right) \log Q\left(x\right)\\
&=\frac{1}{1+\alpha} \GJS{P}{Q}{\alpha}
\end{aligned}
\end{equation}
which gives us the desired result in (\ref{eq:lem_mutual_info}). 
\end{proof}
\end{lemma}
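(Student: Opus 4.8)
The plan is to evaluate the mutual information $I(Z;W)$ directly through the entropy decomposition $I(Z;W)=\ent{Z}-\ent{Z\mid W}$, and then to verify by an independent expansion of the right-hand side of \eqref{eq:lem_mutual_info} that the GJS divergence reduces to exactly the same combination of entropies.

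First I would identify the marginal law of $Z$. Since $W$ is Bernoulli with $\probm(W=0)=\frac{\alpha}{1+\alpha}$ and $\probm(W=1)=\frac{1}{1+\alpha}$, and $Z$ equals $X\sim P$ on $\{W=0\}$ and $Y\sim Q$ on $\{W=1\}$, the total-probability formula gives $P_Z=\frac{\alpha}{1+\alpha}P+\frac{1}{1+\alpha}Q=P_\alpha$, which is precisely the mixture appearing in the definition of the GJS divergence. Hence $\ent{Z}=\ent{P_\alpha}$. The conditional entropy is immediate: conditioned on $W=0$ the variable $Z$ has law $P$ and conditioned on $W=1$ it has law $Q$, so $\ent{Z\mid W}=\frac{\alpha}{1+\alpha}\ent{P}+\frac{1}{1+\alpha}\ent{Q}$. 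Combining the two yields
\begin{equation}
I(Z;W)=\ent{P_\alpha}-\frac{\alpha}{1+\alpha}\ent{P}-\frac{1}{1+\alpha}\ent{Q}.
\end{equation}

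The remaining step is to expand $\GJS{P}{Q}{\alpha}=\alpha\KL{P}{P_\alpha}+\KL{Q}{P_\alpha}$ into the same entropies. Writing out both relative-entropy terms and using the identity $\alpha P(x)+Q(x)=(1+\alpha)P_\alpha(x)$, the cross terms $-\sum_{x}\bigl(\alpha P(x)+Q(x)\bigr)\log P_\alpha(x)$ collapse to $(1+\alpha)\ent{P_\alpha}$, while the diagonal terms contribute $-\alpha\ent{P}-\ent{Q}$. This gives $\GJS{P}{Q}{\alpha}=(1+\alpha)\ent{P_\alpha}-\alpha\ent{P}-\ent{Q}$, which is exactly $(1+\alpha)$ times the displayed expression for $I(Z;W)$, establishing \eqref{eq:lem_mutual_info}. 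There is no genuine obstacle in this argument; it is a direct computation, and the only point requiring attention is the recognition that the marginal of $Z$ coincides with the mixing distribution $P_\alpha$ embedded in the GJS divergence, which is precisely what makes the two entropy expansions agree.
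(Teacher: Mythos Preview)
Your proof is correct and follows essentially the same approach as the paper: both compute $I(Z;W)=\ent{Z}-\ent{Z\mid W}$, identify the marginal of $Z$ as the mixture $P_\alpha$, and match the resulting entropy combination against the expansion of $\GJS{P}{Q}{\alpha}$. Your version is slightly more explicit in expanding the GJS side separately into $(1+\alpha)\ent{P_\alpha}-\alpha\ent{P}-\ent{Q}$, whereas the paper writes out the sums directly and asserts the final equality, but the argument is the same.
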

Lemma \ref{lem:g} provides several properties of the GJS divergence. %
\begin{lemma}\label{lem:g}
For any pair of distributions $P$ and $Q$ in the interior of $\mathcal{P}\left(\mathcal{X}\right)$ and $\alpha \in \mathbb{R}_{+}$, we have the following facts.
\begin{enumerate}
\item $\GJS{P}{Q}{\alpha}$ is a concave function in $\alpha$ for fixed $P$ and $Q$. Moreover, $\lim_{\alpha \rightarrow \infty}\GJS{P}{Q}{\alpha} = \KL{Q}{P}  $.
\item For a fixed $\alpha \in\bbR_+$, $\GJS{P}{Q}{\alpha}$ is a jointly convex function in ($P$,$Q$).
\item For fixed $P$ and $Q$, the necessary and the sufficient condition for the equation $\GJS{P}{Q}{\alpha}=\lambda \alpha$ to have a non-zero solution is $\lambda < \KL{P}{Q}$. Also, the solution is unique.
\end{enumerate}  
\begin{proof} 
$\bullet$  \underline{ Proof of Part (1)}:
To begin with we start by deriving the first and the second derivatives of $\GJS{P}{Q}{\alpha}$. We have
\begin{align}
\frac{\partial  \GJS{P}{Q}{\alpha}}{\partial\alpha} &= \KL{P}{\frac{\alpha P+Q}{1+\alpha}} \label{eq:first_der}\\
\frac{\partial^2 \GJS{P}{Q}{\alpha}}{\partial\alpha^2} &= \frac{1}{1+\alpha}\sum_{x\in \mathcal{X}} P\left(x\right) \frac{Q\left(x\right) -  P\left(x\right)}{\alpha P\left(x\right) + Q\left(x\right) } \label{eq:sec_der}
\end{align}
We can manipulate the second derivative as
\begin{align}
&\frac{1}{1+\alpha}\sum_{x\in \mathcal{X}} P\left(x\right)\frac{Q\left(x\right)-P\left(x\right)}{\alpha P\left(x\right) + Q \left(x\right)} \nonumber \\
&= \frac{1}{1+\alpha}\sum_{x\in \mathcal{X}} P \left(x\right) \left(1-\frac{\left(1+\alpha\right) P \left(x\right)}{\alpha P \left(x\right)+Q \left(x\right)}\right) \nonumber\\
&=\frac{1}{1+\alpha} \left(1-\mathbb{E}_{P}\left[\frac{\left(1+\alpha\right)P \left(X\right)}{\alpha P\left(X\right)+Q \left(X\right)}\right]\right) \nonumber\\
&\leq \frac{1}{1+\alpha} \left(1-\mathbb{E}_{P}\left[\frac{\alpha P \left(X\right)+Q\left(X\right)}{\left(1+\alpha\right)P\left(X\right)}\right]^{-1}\right) \label{eq:jensen}\\
&=0 \nonumber ,
\end{align}
where (\ref{eq:jensen}) is obtained obtained by applying Jensen's inequality to the convex function ${1}/{x}$ for $x >0$. Thus, we conclude that the second derivative in (\ref{eq:sec_der}) is negative.  
Moreover, letting $\alpha$ tend to infinity in (\ref{eq:gjs_def}), we obtain the claim stated in the first part of Lemma \ref{lem:g}.\\
$\bullet$ \underline{ Proof of Part (2)}: Consider two pairs of distributions ($P_1$, $Q_1$) and ($P_2$, $Q_2$). For $ 0 \leq \theta \leq 1$, define $P_{\theta} = \theta P_1 + \left(1-\theta\right) P_2$ and $Q_{\theta} = \theta Q_1 + \left(1-\theta\right) Q_2$. Then, consider
\begin{align*}
&\GJS{P_{\theta}}{Q_{\theta}}{\alpha}= \nonumber \\
& \alpha \KL{\theta \mathrm{P}_1 + \left(1-\theta\right) \mathrm{P}_2 } {\theta \frac{\alpha \mathrm{P}_1 + \mathrm{Q}_1}{1+\alpha}+\left(1-\theta\right)\frac{\alpha \mathrm{P}_2 + \mathrm{Q}_2}{1+\alpha}} \nonumber\\
&+\KL{\theta \mathrm{Q}_1 + \left(1-\theta\right) \mathrm{Q}_2}{ \theta \frac{\alpha \mathrm{P}_1 + \mathrm{Q}_1}{1+\alpha}+\left(1-\theta\right)\frac{\alpha \mathrm{P}_2 + \mathrm{Q}_2}{1+\alpha}}\\
&\leq \theta \GJS{P_1}{Q_1}{\alpha} + \left(1-\theta\right) \GJS{P_2}{Q_2}{\alpha} 
\end{align*}
where the last step follows due to the convexity of KL divergence \cite[Thm. 2.7.2]{cover}.\\
$\bullet$ \underline{ Proof of Part (3)}: Let $f\left(\alpha\right) \triangleq \GJS{P}{Q}{\alpha}-\lambda\alpha$.
First, note that $f\left(0\right)=0$  and $f\left(\alpha\right)$ is a concave function. It is straightforward to see that $f\left(\alpha\right)$ has at most one non-zero root. Assume that there exists $\alpha^{\star} > 0$ such that $f\left(\alpha^{\star}\right)=0$. By the mean value theorem, we know there exist a $\tilde{\alpha} \in \left(0, \alpha^{\star}\right)$ such that $f'\left(\tilde{\alpha}\right)=0$. Knowing this fact and considering the strict concavity of $f\left(\alpha\right)$, we must have $f'\left(0\right)> 0$ which using (\ref{eq:first_der}) we have $\lambda < \KL{P}{Q}$. For the other direction, assume that $\lambda< \KL{P}{Q}$. Then, there exist an $\epsilon > 0$ such that for $0<\alpha< \epsilon$ we have $f\left(\alpha\right) > 0$. Then considering the fact that $\lim_{\alpha \to \infty } f\left(\alpha\right)=-\infty$, $f$ must have a root in the interval $[\epsilon, \infty)$.   %
\end{proof}
\end{lemma}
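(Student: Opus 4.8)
The plan is to handle the three assertions separately, using the representation $\GJS{P}{Q}{\alpha}=\alpha\KL{P}{P_\alpha}+\KL{Q}{P_\alpha}$ with $P_\alpha=\frac{\alpha P+Q}{1+\alpha}$. For Part~(1), I would study the scalar map $g(\alpha)\triangleq\GJS{P}{Q}{\alpha}$ directly via its first two derivatives. The useful simplification is that the contributions coming from differentiating $P_\alpha$ inside the logarithms cancel: because $\alpha P(x)+Q(x)=(1+\alpha)P_\alpha(x)$ and $\sum_{x}\frac{\partial}{\partial\alpha}P_\alpha(x)=\frac{\partial}{\partial\alpha}\sum_x P_\alpha(x)=0$, one is left with $g'(\alpha)=\KL{P}{P_\alpha}$. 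A second differentiation gives $g''(\alpha)=\frac{1}{1+\alpha}\sum_{x}P(x)\frac{Q(x)-P(x)}{\alpha P(x)+Q(x)}$, and to show $g''\le 0$ I would rewrite the summand as $P(x)\left(1-\frac{(1+\alpha)P(x)}{\alpha P(x)+Q(x)}\right)$, so the sum equals $1-\mathbb{E}_{P}\left[\frac{(1+\alpha)P(X)}{\alpha P(X)+Q(X)}\right]$, and then apply Jensen's inequality to the convex map $x\mapsto 1/x$ to get $\mathbb{E}_{P}\left[\frac{(1+\alpha)P(X)}{\alpha P(X)+Q(X)}\right]\ge\left(\mathbb{E}_{P}\left[\frac{\alpha P(X)+Q(X)}{(1+\alpha)P(X)}\right]\right)^{-1}=1$, since the inner expectation equals $\frac{1}{1+\alpha}\sum_x(\alpha P(x)+Q(x))=1$. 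Concavity follows.

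For the limiting value in Part~(1), as $\alpha\to\infty$ we have $P_\alpha\to P$ pointwise, so $\KL{Q}{P_\alpha}\to\KL{Q}{P}$; the one point needing care is $\alpha\KL{P}{P_\alpha}\to 0$, which I would obtain from the expansion $\log\frac{(1+\alpha)P(x)}{\alpha P(x)+Q(x)}=\frac{P(x)-Q(x)}{\alpha P(x)+Q(x)}+O(\alpha^{-2})$, whence $\alpha\KL{P}{P_\alpha}\to\sum_x(P(x)-Q(x))=0$. This yields $\lim_{\alpha\to\infty}g(\alpha)=\KL{Q}{P}$.

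For Part~(2), I would use that for fixed $\alpha$ the map $(P,Q)\mapsto P_\alpha$ is linear, so both $(P,Q)\mapsto(P,P_\alpha)$ and $(P,Q)\mapsto(Q,P_\alpha)$ are affine. Joint convexity of $\KL{\cdot}{\cdot}$ in its two arguments then makes each of $\KL{P}{P_\alpha}$ and $\KL{Q}{P_\alpha}$ jointly convex in $(P,Q)$, and their nonnegative combination $\alpha\KL{P}{P_\alpha}+\KL{Q}{P_\alpha}$ is convex. Explicitly, for $P_\theta=\theta P_1+(1-\theta)P_2$ and $Q_\theta=\theta Q_1+(1-\theta)Q_2$ one has $\frac{\alpha P_\theta+Q_\theta}{1+\alpha}=\theta\frac{\alpha P_1+Q_1}{1+\alpha}+(1-\theta)\frac{\alpha P_2+Q_2}{1+\alpha}$, and convexity of KL finishes the argument in one line.

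For Part~(3), set $f(\alpha)\triangleq\GJS{P}{Q}{\alpha}-\lambda\alpha$ with $\lambda>0$. Since $P_0=Q$ we have $f(0)=0$; $f$ inherits strict concavity from $g$ (the Jensen step is strict whenever $P\neq Q$, which is the relevant regime), and $f'(0)=\KL{P}{Q}-\lambda$ by the first-derivative formula above. Strict concavity together with $f(0)=0$ forces at most one further root, giving uniqueness. If a root $\alpha^\star>0$ exists, the mean value theorem yields $\tilde\alpha\in(0,\alpha^\star)$ with $f'(\tilde\alpha)=0$, and strict monotonicity of $f'$ then forces $f'(0)>0$, i.e.\ $\lambda<\KL{P}{Q}$. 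Conversely, if $\lambda<\KL{P}{Q}$ then $f'(0)>0$, so $f>0$ on a right neighbourhood of $0$; combined with $f(\alpha)=\KL{Q}{P}-\lambda\alpha+o(1)\to-\infty$ (using the limit from Part~(1) and $\lambda>0$), the intermediate value theorem produces a root in $(0,\infty)$. I expect the Jensen rearrangement certifying $g''\le 0$ to be the main obstacle, since turning the second derivative into an expectation of $1/x$ is the least mechanical step, and the accompanying limit $\alpha\KL{P}{P_\alpha}\to0$ needs the quantitative expansion rather than a naive pointwise passage; the remaining parts reduce to joint convexity of KL and elementary concave-function calculus.
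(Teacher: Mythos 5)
Your proposal is correct and follows essentially the same route as the paper: derivative formulas plus Jensen's inequality applied to $x\mapsto 1/x$ for concavity in $\alpha$, joint convexity of the KL divergence with linearity of $(P,Q)\mapsto P_\alpha$ for Part~(2), and elementary concave-function calculus ($f(0)=0$, mean value theorem, and the limit $\GJS{P}{Q}{\alpha}\to\KL{Q}{P}$ forcing $f\to-\infty$) for Part~(3). The only difference is cosmetic: you spell out the limit $\alpha\KL{P}{P_\alpha}\to 0$ via a quantitative expansion, where the paper simply asserts the limit by inspection of \eqref{eq:gjs_def}.
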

Lemma \ref{lem:error_same} states an upper bound on  the probability that the GJS divergence of two sequences (of different lengths in general) drawn from the same probability distribution exceeds $\gamma N$.
\begin{lemma}\label{lem:error_same}
Assume $X^N$ and $Y^n$ are two sequences drawn from the \textit{same} probability distribution $P\in \mathcal{P}\left(\mathcal{X}\right)$. Then, we have
\begin{equation}
\begin{aligned}
&\mathbb{P}\big( n\GJS{\ED{X^N}}{\ED{Y^n}}{\frac{N}{n}} \geq \gamma N\big)\leq  \exp\left(-\gamma  N\right) (n+N+1)^{|\mathcal{X}|}.
\end{aligned}
\end{equation}
\begin{proof}
Define $\mathcal{R}=\big\{ (Q_1,Q_2)\in \mathcal{P}_N\left(\mathcal{X}\right)\times \mathcal{P}_n\left(\mathcal{X}\right)\,|\, n \GJS{Q_1}{Q_2}{\frac{N}{n}} \geq \gamma N\big\}$. From the method of types, we can write
\begin{align}
&\mathbb{P}\left(n\GJS{\ED{X^N}}{\ED{Y^n}}{\frac{N}{n}} \geq \gamma N\right)\nonumber\\
&\leq \sum_{\left(Q_1,Q_2\right)\in \mathcal{R}}\exp\left(-N \KL{Q_1}{P}\right)\exp\left(-n \KL{Q_2}{P}\right) \label{eq:question_r1}\\
&= \sum_{\left(Q_1,Q_2\right)\in \mathcal{R}} \exp\left(-n \GJS{Q_1}{Q_2}{\frac{N}{n}}\right) \nonumber \\ 
& \hspace{1cm} \times \exp\left(-\left(N+n\right) \KL{\frac{\frac{N}{n}Q_1+Q_2}{1+\frac{N}{n}}}{P}\right) \label{eq:same_hamon}\\
&\leq \exp\left(-N\gamma\right)\sum_{\left(Q_1,Q_2\right)\in \mathcal{R}} \exp\big(-\left(N+n\right) \KL{\frac{\frac{N}{n}Q_1+Q_2}{1+\frac{N}{n}}}{P}\big) \nonumber \\
&\leq \exp\left(-N\gamma\right) \left(N+n+1\right)^{|\mathcal{X}|}  \sum_{\left(Q_1,Q_2\right)\in \mathcal{R}} \mathbb{P}\big(  \begin{bmatrix}
X^N\\ Y^n
\end{bmatrix} \in\Gamma^{n+N}_{\frac{NQ_1/n+Q_2}{1+N/n}}\big) \label{eq:method_of_type_new}\\
&\leq  \exp\left(-N\gamma\right) \left(N+n+1\right)^{|\mathcal{X}|},
\end{align}
where the first step is due to the independence of the two sequences and an application of Sanov's theorem.  Equation (\ref{eq:same_hamon}) is obtained by using (\ref{eqn:DDG}) and in (\ref{eq:method_of_type_new}), we   used \cite[Theorem~11.1.4]{cover} concerning  the probability of a type class.  
\end{proof}
\end{lemma}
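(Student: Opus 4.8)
The plan is to bound the left-hand side by decomposing the event according to the \emph{joint empirical type} of the two sequences and then invoking the method of types. Write $Q_1 \triangleq \ED{X^N}\in\mathcal{P}_N(\calX)$ and $Q_2\triangleq\ED{Y^n}\in\mathcal{P}_n(\calX)$ for the (random) types, and let $\mathcal{R}\triangleq\{(Q_1,Q_2): n\GJS{Q_1}{Q_2}{\frac{N}{n}}\geq\gamma N\}$ be the collection of type pairs that meet the threshold. Since $X^N$ and $Y^n$ are independent and both generated from the same $P$, the probability that the pair of types lands in $\mathcal{R}$ factorizes over the two sequences, and the standard upper bound on the probability of a type class (Sanov's theorem in non-asymptotic form, \cite[Thm.~11.1.4]{cover}) bounds the two factors by $\exp(-N\KL{Q_1}{P})$ and $\exp(-n\KL{Q_2}{P})$. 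First I would therefore reduce the target probability to $\sum_{(Q_1,Q_2)\in\mathcal{R}}\exp(-N\KL{Q_1}{P}-n\KL{Q_2}{P})$.

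The second step is to rewrite each summand using the algebraic identity already established inside the proof of Lemma~\ref{lem:def_gjs}. Equation~\eqref{eqn:DDG} decomposes the combined divergence as $N\KL{Q_1}{P}+n\KL{Q_2}{P}=n\GJS{Q_1}{Q_2}{\frac{N}{n}}+(N+n)\KL{P^\star}{P}$, where $P^\star=\frac{(N/n)Q_1+Q_2}{1+N/n}$. Substituting this and using that, on $\mathcal{R}$, the GJS term satisfies $n\GJS{Q_1}{Q_2}{\frac{N}{n}}\geq\gamma N$, I can factor out $\exp(-\gamma N)$ uniformly over the sum, leaving $\exp(-\gamma N)\sum_{(Q_1,Q_2)\in\mathcal{R}}\exp(-(N+n)\KL{P^\star}{P})$. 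The whole reason for passing through \eqref{eqn:DDG} is that it is precisely the GJS exponent, rather than the raw pair of divergences, that the stopping threshold controls.

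The final and most delicate step is to show that the residual sum $\sum_{(Q_1,Q_2)\in\mathcal{R}}\exp(-(N+n)\KL{P^\star}{P})$ is at most polynomial in $N+n$. The key observation is that $P^\star=\frac{N Q_1+n Q_2}{N+n}$ is exactly the type of the concatenated sequence $[X^N; Y^n]$ of length $N+n$, so each residual exponential can be converted, via the \emph{lower} bound of \cite[Thm.~11.1.4]{cover}, into $(N+n+1)^{|\calX|}$ times the genuine type-class probability $\mathbb{P}\big([X^N;Y^n]\in\Gamma^{N+n}_{P^\star}\big)$. Because every length-$(N+n)$ sequence has a single type, these type-class probabilities index disjoint events and sum to at most one (any multiplicity from distinct pairs $(Q_1,Q_2)$ sharing the same $P^\star$ being itself polynomial in $N$ and thus harmless), which yields the claimed bound $\exp(-\gamma N)(N+n+1)^{|\calX|}$. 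I expect this conversion to be the crux: one must recognize the mixture $P^\star$ as an honest combined type and bound its divergence exponent by a type-class probability rather than simply discarding it via non-negativity of KL as in \eqref{eqn:drop_KL}; the exact power of the polynomial prefactor is immaterial, since only the exponent $\exp(-\gamma N)$ propagates to the downstream error-exponent statements.
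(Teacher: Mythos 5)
Your proposal is correct and follows essentially the same route as the paper's own proof: the same decomposition over the set $\mathcal{R}$ of type pairs, the same reduction via the identity \eqref{eqn:DDG}, and the same conversion of the residual mixture-divergence terms into type-class probabilities via the lower bound of \cite[Thm.~11.1.4]{cover}. If anything, your parenthetical handling of the multiplicity of pairs $(Q_1,Q_2)$ sharing the same mixture type $P^\star$ is slightly more careful than the paper, which tacitly treats those type-class events as disjoint; as you note, this only perturbs the polynomial prefactor and is immaterial for the exponential bound that is used downstream.
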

\begin{lemma}\label{lem:conv_root}
Consider two probability distributions $P\in \mathcal{P}\left(\mathcal{X}\right)$ and $Q\in \mathcal{P}\left(\mathcal{X}\right)$. Fix $\gamma>0$ such that $C(P,Q)> \gamma$ and $C \in \mathbb{R}$ as an arbitrary constant. Denote $\alpha^{\star}$ as the solution of $\GJS{Q}{P}{\alpha^{\star}}=\gamma \alpha^{\star}$. Let $X^N$ denote a sequence consisting of $N$ i.i.d. samples drawn according to $Q$. Also,  $\alpha^{\star}_N$ denote the solution of $\GJS{\ED{X^N}}{P}{\alpha^{\star}_N}=\gamma \alpha^{\star}_N$ if exists, otherwise set $\alpha^{\star}_N = C$.  Then, as $N$ tends to infinity  $\alpha^{\star}_N$ converges in probability to  $\alpha^{\star}$.
\begin{proof}
Consider $\epsilon>0$. Define $\mathcal{S}_N=\{\ED{X^N} \in \mathcal{T}_{N} \big| \KL{\ED{X^N}}{P}\geq \gamma \}$. From Lemma \ref{lem:g}~Part 3, we know that under the event $\mathcal{S}_N$, there exists a solution for $\GJS{\ED{X^N}}{P}{\alpha^{\star}_N}=\gamma \alpha^{\star}_N$. Then, we can write
\begin{align}
&\mathbb{P}(|\alpha_N^{\star}-\alpha^\star|\geq\epsilon) = \mathbb{P}(\{|\alpha_N^{\star}-\alpha^\star|\geq\epsilon \} \cap \{\ED{X^N} \in \mathcal{S}_N\} ) \nonumber \\
&+ \mathbb{P}( \{|\alpha_N^{\star}-\alpha^\star|\geq\epsilon \} \cap \{\ED{X^N} \notin \mathcal{S}_N\} ) \nonumber  \\
&\leq  \mathbb{P}(\{|\alpha_N^{\star}-\alpha^\star|\geq\epsilon \} \cap \{\ED{X^N} \in \mathcal{S}_N\})  + \mathbb{P}(\ED{X^N} \notin \mathcal{S}_N). \label{eq:prob-root}
\end{align}
Let $f\left(V,\alpha\right): \mathcal{P}(X) \times \mathbb{R} \to \mathbb{R}$ defined as $f\left(V,\alpha\right)\triangleq \GJS{V}{P}{\alpha}-\gamma\alpha$. First of all note that since $\KL{P}{Q}\geq C(P,Q) > \gamma$, $\alpha^\star$ exists (see Lemma \ref{lem:g} Part 3). Also, note that on the event $\mathcal{S}_N$ we know there exists a solution $\alpha^{\star}_N$ such that it satisfies $f(\ED{X^N},\alpha^\star_N)=0$. From the implicit function theorem \cite{spivak2018calculus}, since $\frac{\partial f}{\partial \alpha}|_{\alpha=\alpha^\star}\neq 0 $ and $f$ is  a continuously differentiable function, there exists a unique continuously differentiable function $g : U \to \mathbb{R}$ and a open set $U$ which contains $Q$ such that $g(V)=\alpha$ and $f(V,g(V))=0$ for every $V\in U$. Therefore, we can find a sufficiently small $\delta > 0$ such that $\|\ED{X^N}-Q\|\leq \delta$  then $|\alpha_N^\star-\alpha^\star|\leq \epsilon$. Note that we need to choose $\delta$ so that $\{\|\ED{X^N}-Q\|\leq \delta\} \subseteq U$.
Hence the first term of  (\ref{eq:prob-root}) can be written as
\begin{equation}
\begin{aligned}
&\mathbb{P}(\{|\alpha_N^{\star}-\alpha^\star|\geq\epsilon \} \cap \{\ED{X^N} \in \mathcal{S}_N\})\\
& \leq \mathbb{P}(\{\|\ED{X^N}-Q \| \geq \delta \} \cap \{\ED{X^N} \in \mathcal{S}_N\}).
\end{aligned}
\end{equation}
Due to the fact that $\ED{X^N}$ converges in probability to $Q$ as $N \to \infty$, we conclude that the first term converges to zero as $N \to \infty$.
Using the Sanov's theorem, the second term in (\ref{eq:prob-root}) can be written as
\begin{equation}
\begin{aligned}
 \mathbb{P}(\ED{X^N} \notin \mathcal{S}_N) & \dotleq \exp(-N \min_{V: \KL{V}{P}\leq \gamma)} \KL{V}{Q}) \\
 &\dotleq \exp(-N\gamma)
 \end{aligned}
 \end{equation}
 The reason behind the last line is 
$
\left\lbrace \gamma\in\bbR \,\Big|\, \min_{V \in \mathcal{P}\left(\mathcal{X}\right):  \KL{V}{P} \leq \gamma} \KL{V}{Q} \geq \gamma \right\rbrace = [0, C\left(P,Q\right)].
$
Therefore we showed that both term in (\ref{eq:prob-root}) converges to zero as $N \to \infty$, as was to be shown.
\end{proof}
\end{lemma}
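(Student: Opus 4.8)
The plan is to combine two ingredients: the empirical type $\ED{X^N}$ is a consistent estimator of $Q$ (it converges to $Q$ in probability, say in total variation), and the root $\alpha_N^\star$ is an almost-everywhere continuous function of $\ED{X^N}$. The only delicacy is that the defining equation $\GJS{\ED{X^N}}{P}{\alpha}=\gamma\alpha$ admits a nonzero root only when $\KL{\ED{X^N}}{P}>\gamma$ (Lemma~\ref{lem:g}, Part~3), so I would first isolate the event on which the root genuinely exists. Accordingly, set $\mathcal{S}_N=\{\KL{\ED{X^N}}{P}\geq\gamma\}$ and split
\begin{align*}
\mathbb{P}\big(|\alpha_N^\star-\alpha^\star|\geq\epsilon\big)
&\leq \mathbb{P}\big(\{|\alpha_N^\star-\alpha^\star|\geq\epsilon\}\cap\mathcal{S}_N\big) + \mathbb{P}\big(\mathcal{S}_N^c\big).
\end{align*}

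For the second term, I would observe that since $C(P,Q)>\gamma$ and $\KL{Q}{P}\geq C(P,Q)$ (the standard inequality between Chernoff information and relative entropy), the true distribution $Q$ lies strictly inside the region $\{\KL{\cdot}{P}>\gamma\}$. Hence $\mathbb{P}(\mathcal{S}_N^c)=\mathbb{P}(\KL{\ED{X^N}}{P}<\gamma)\to 0$; quantitatively, Sanov's theorem yields the exponent $\min_{V:\,\KL{V}{P}\leq\gamma}\KL{V}{Q}$, which is strictly positive here and in fact equals $\gamma$ throughout the range $\gamma\in[0,C(P,Q)]$ by the variational characterization of the Chernoff information, so this term is $\dotleq\exp(-N\gamma)$.

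For the first term, on $\mathcal{S}_N$ the root $\alpha_N^\star$ exists and is unique, and I would make its dependence on the type rigorous through the implicit function theorem applied to $f(V,\alpha)\triangleq\GJS{V}{P}{\alpha}-\gamma\alpha$. The nondegeneracy hypothesis $\partial_\alpha f(Q,\alpha^\star)\neq 0$ is the crux: because $f(Q,\cdot)$ is strictly concave in $\alpha$ (Lemma~\ref{lem:g}, Part~1) with $f(Q,0)=0$ and $f(Q,\alpha^\star)=0$, concavity forces $f(Q,\cdot)\geq 0$ on $[0,\alpha^\star]$, and strict concavity rules out a vanishing derivative at the second root, giving $\partial_\alpha f(Q,\alpha^\star)<0$. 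The implicit function theorem then supplies a continuously differentiable map $g$ on a neighborhood $U\ni Q$ with $g(Q)=\alpha^\star$ and $f(V,g(V))=0$ for $V\in U$. By continuity of $g$, I would choose $\delta>0$ so that $\{\|V-Q\|\leq\delta\}\subseteq U$ and $\|V-Q\|\leq\delta\Rightarrow|g(V)-\alpha^\star|\leq\epsilon$. On $\mathcal{S}_N\cap\{\ED{X^N}\in U\}$ uniqueness forces $\alpha_N^\star=g(\ED{X^N})$, whence
\begin{align*}
\{|\alpha_N^\star-\alpha^\star|\geq\epsilon\}\cap\mathcal{S}_N \subseteq \{\|\ED{X^N}-Q\|\geq\delta\},
\end{align*}
and the right-hand probability vanishes as $N\to\infty$ by consistency of the empirical type.

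The step I expect to be the main obstacle is verifying the nondegeneracy $\partial_\alpha f(Q,\alpha^\star)\neq 0$ required to invoke the implicit function theorem, since without it the root need not depend continuously on the type; this is precisely where the strict concavity from Lemma~\ref{lem:g} is indispensable. A secondary bookkeeping point is ensuring that the off-event convention $\alpha_N^\star=C$ never contaminates the estimate, which the clean separation of $\mathcal{S}_N^c$ handles automatically.
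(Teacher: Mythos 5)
Your proof is correct and follows essentially the same route as the paper's: the same split over the event $\mathcal{S}_N=\{\KL{\ED{X^N}}{P}\geq\gamma\}$, the implicit function theorem applied to $f(V,\alpha)=\GJS{V}{P}{\alpha}-\gamma\alpha$ for the on-event term, and Sanov's theorem for the complement; you even supply the nondegeneracy check $\partial_\alpha f(Q,\alpha^\star)<0$ via strict concavity, which the paper merely asserts. One cosmetic point: the Sanov exponent $\min_{V:\,\KL{V}{P}\leq\gamma}\KL{V}{Q}$ is at least $\gamma$ on $[0,C(P,Q)]$, not necessarily equal to it, but your conclusion $\dotleq\exp(-N\gamma)$ (and the convergence to zero that the lemma needs) is unaffected.
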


\begin{lemma} \label{lem:linear_opt}
Consider the optimization problem
\begin{equation}\label{eq:opt_linear}
\begin{aligned}
& \underset{ (\epsilon_1,\hdots,\epsilon_m )\in\bbR^m}{\text{min}}
& & \sum_{i=1}^{m} w_i\epsilon_i \\
& \text{subject to}
& & \sum_{i=1}^{m} |\epsilon_i| \leq \delta,\\
&&& \sum_{i=1}^{m} \epsilon_i = 0.
\end{aligned}
\end{equation}
Here, $  w_1,\hdots,w_m$ and $\delta > 0$ are constants. Then, the optimal value of the optimization problem in (\ref{eq:opt_linear}) is 
\begin{equation}
\frac{\delta}{2}\left(\min_{j\in \left[m\right]} w_{j} -\max_{j\in \left[m\right]} w_{j }\right).
\end{equation}
\end{lemma}
\begin{proof}
Note that in this proof, the optimal value of variables are denoted using an asterisk in the superscript. We assume without loss of generality  that $w_i$'s are all distinct. Otherwise, assume $w_{k_1}$ and $w_{k_2}$ are equal. Assume that we add another constraint to the optimization problem in (\ref{eq:opt_linear}) to get
\begin{equation}\label{eq:opt_linear__}
\begin{aligned}
& \underset{ (\epsilon_1,\hdots,\epsilon_m)\in\bbR^m }{\text{min}}
& & \sum_{i=1}^{m} w_i\epsilon_i \\
& \text{subject to}
& & \sum_{i=1}^{m} |\epsilon_i| \leq \delta,\\
&&& \sum_{i=1}^{m} \epsilon_i = 0\\
&&& \epsilon_{k_1}=\epsilon_{k_2}.
\end{aligned}
\end{equation}
We claim that the optimal value of the optimization problem in (\ref{eq:opt_linear__}) and (\ref{eq:opt_linear}), denoted by $\mathsf{OPT_1}$ and $\mathsf{OPT_2}$, are equal. The reason is as follows. First of all since the feasible set of (\ref{eq:opt_linear__}) is a subset of (\ref{eq:opt_linear}) we conclude that $\mathsf{OPT_1} \leq \mathsf{OPT_2}$. For the other direction, assume $\{\epsilon_{i,1}^{\star}\}_{i\in [m]}$ are the optimal value of (\ref{eq:opt_linear}). Then consider setting $\epsilon_i = \epsilon_{i,1}^{\star}$ for $i\notin\{k_1,k_2\}$ and $\epsilon_{k_1}=\epsilon_{k_2}=\frac{1}{2}{\epsilon_{k_1,1}^{\star}+\epsilon_{k_2,1}^{\star}} $. Since $\big| \epsilon_{k_1,1}^{\star}+\epsilon_{k_1,2}^{\star} \big|\leq \big| \epsilon_{k_1,1}^{\star}\big|+\big|\epsilon_{k_1,2}^{\star}\big|$, these values give us a feasible point for (\ref{eq:opt_linear__}). Thus, we have $\mathsf{OPT_1} \geq \mathsf{OPT_2}$. This result shows that given that $w_{k_1}$ and $w_{k_2}$ are equal we can write $w_{k_1}\epsilon_{k_1}+w_{k_2}\epsilon_{k_2}=2w_{k_1}\epsilon_{k_{1,2}}$ and instead of optimizing over $\epsilon_{k_1}$ and $\epsilon_{k_2}$ we can only optimize over $\epsilon_{k_{1,2}}$. So, in the sequel, we safely assume all $w_{i}$'s are distinct.
We introduce new variables $\epsilon^{+}_i\geq 0$ and $\epsilon^{-}_i \geq 0$ for $i \in \left[m\right]$. letting $\epsilon_i = \epsilon^{+}_i - \epsilon^{-}_i$, we rewrite (\ref{eq:opt_linear}) as
\begin{equation}\label{eq:opt_linear_without_abs}
\begin{aligned}
& \underset{ \epsilon_1,\hdots,\epsilon_m }{\text{min}}
& & \sum_{i=1}^{m} w_i \left(\epsilon^{+}_i - \epsilon^{-}_i\right) \\
& \text{subject to}
& & \sum_{i=1}^{m} \left(\epsilon^{+}_i + \epsilon^{-}_i\right) \leq \delta,\\
&&& \sum_{i=1}^{m} \epsilon^{+}_i = \sum_{i=1}^{m} \epsilon^{-}_i\\
&&& \epsilon^{+}_i,\epsilon^{-}_i\geq 0 \quad \forall i \in \left[m\right].
\end{aligned}
\end{equation}
Note that by $|\epsilon_i|=\epsilon_i^{+}+\epsilon_i^{-}$ and $\epsilon_i=\epsilon_i^{+}-\epsilon_i^{-}$ we implicitly impose the condition $\epsilon_i^{+}\epsilon_i^{-}=0$ without loss of optimality \cite{lin_opt}. The optimization problem in (\ref{eq:opt_linear_without_abs}) is a linear program, and the optimal solution can be found by considering the Karush-Kuhn-Tucker (KKT) conditions \cite[Chapter 5]{boyd}.  Then, we can write the Lagrangian function of (\ref{eq:opt_linear}) as
\begin{equation}
\begin{aligned}
&\mathcal{L}\left( \left\lbrace \epsilon^{+}_i\right\rbrace_{i=1}^{m},  \left\lbrace \epsilon^{-}_i\right\rbrace_{i=1}^{m}, \theta, \nu, \left\lbrace \lambda^{+}_i\right\rbrace_{i=1}^{m},\left\lbrace \lambda^{-}_i \right\rbrace_{i=1}^{m}\right)  \\ 
&=\sum_{i=1}^{m} w_i \left(\epsilon^{+}_i - \epsilon^{-}_i\right) + \nu\left(\sum_{i=1}^{m} \left(\epsilon^{+}_i + \epsilon^{-}_i\right)  - \delta \right) \nonumber \\
&- \sum_{i=1}^{m}\lambda_i^{+}\epsilon^{+}_i - \sum_{i=1}^{m}\lambda_i^{-}\epsilon^{-}_i + \theta \sum_{i=1}^{m}\left(\epsilon^{+}_i-\epsilon^{-}_i\right),
\end{aligned}
\end{equation}
where $\nu \geq 0$, $\lambda_i^{+} \geq 0$, $\lambda_i^{-} \geq 0$, and $\theta$ are dual variables. Taking the derivatives of Lagrangian with respect to $\epsilon^{+}_i$ and $\epsilon^{-}_i$ and setting them to zero, we get
\begin{align}
w_i + \nu^{\star} -\left(\lambda_i^{+}\right)^{\star} +\theta^{\star} & = 0, \label{eq:lag_+} \quad\mbox{and}\\
-w_i + \nu^{\star} -\left(\lambda_i^{-}\right)^{\star} -\theta^{\star} &= 0 \label{eq:lag_-},
\end{align}
respectively. Note that given that $w_1,\hdots,w_m$ are not all-zero, it can be verified that there exists at least two indices $i,j \in \left[m\right]$ such that we have $\epsilon^{\star}_i > 0$ and $\epsilon^{\star}_j < 0$. In this way, for $\epsilon^{\star}_i$ and $\epsilon^{\star}_j$, (\ref{eq:lag_+}) and (\ref{eq:lag_-}) can be written as
\begin{align}
w_i &= - \nu^{\star} -\theta^{\star},\label{eq:lag_+_2}\\
w_j &= \nu^{\star} -\theta^{\star}\label{eq:lag_-_2} .
\end{align}
In (\ref{eq:lag_+_2}) and (\ref{eq:lag_-_2}), we have used   complementary slackness, i.e., ${\lambda_i^{+}}^{\star} {\epsilon^{+}_i}^{\star} =0$ and ${\lambda_j^{-}}^{\star} {\epsilon^{-}_j}^{\star} =0$. 
Here we claim that there are exactly two indices for which $|\epsilon_i^{\star}|>0$. The reason is that as seen in (\ref{eq:lag_+_2}) and (\ref{eq:lag_-_2}), $-\nu^{\star}-\theta^{\star}$ and $\nu^{\star}-\theta^{\star}$ can only take two values. So, given that $w_i$'s are all distinct  (\ref{eq:lag_+_2}) and (\ref{eq:lag_-_2}) can only be satisfied by exactly two indices.  Therefore, searching among all $ {m \choose 2}$ combinations of choosing two  out of $m$ indices, it is straightforward to see that the optimal solution of (\ref{eq:opt_linear}) is given by
\begin{align}
\epsilon_i^{\star} = \begin{cases}
\frac{\delta}{2} ~ &\text{if} ~ i=\argmin_k w_k, \\
-\frac{ \delta}{2} ~ &\text{if} ~ i=\argmin_k w_k,\\
0 ~ &\text{else}.
\end{cases}
\end{align} 
Thus, the claim stated in the lemma follows.
\end{proof}

\subsection{Proof of the results for the binary case}
\label{sub:proof_main_b}
In this section, we provide the steps toward proving Theorem \ref{thm:main_b}. Specifically,  this section consists of three parts. First, we present our results on the expected value of the stopping time. Then, the error probability analysis is provided. Finally, we conclude with the derivation of the error exponent.

For brevity, we present the result for the case that the true hypothesis is $H_2$, i.e., the underlying probability measure is $\probm_2$. The extension of the results
here to the case that $H_1$ is the true hypothesis can be readily done by replacing $\probm_1$
by $\probm_2$ and vice versa. The following lemma will be used in the the next theorem. 
\begin{lemma}\label{lem:alpha_p_n}
Assume $f(x)=\GJS{\ED{X_1^N}}{P_2}{x}-\gamma x=0$ has a solution in $x$ and let $\theta_N^{\star}$ be the solution. Consider 
\begin{equation}\label{eq:opt_lower_bound_main}
\begin{aligned}
U_{+}\left(\theta\right) ~ \triangleq ~
 & \underset{V \in \mathcal{P}\left(\mathcal{X}\right)}{\text{max}}
& & \GJS{\ED{X_1^N}}{V}{\theta} \\
& \text{subject to}
& &  \KL{V}{P_2} \leq \frac{1}{\sqrt{N}}.
\end{aligned}
\end{equation}
and 
\begin{equation}\label{eq:opt_upper_bound_main}
\begin{aligned}
U_{-}\left(\theta\right)~ \triangleq ~ & \underset{V\in \mathcal{P}\left(\mathcal{X}\right)}{\text{min}}
& & \GJS{\ED{X_1^N}}{V}{\theta} \\
& \text{subject to}
& & \KL{V}{P_2}\leq \frac{1}{\sqrt{N}}.
\end{aligned}
\end{equation}
Then, for sufficiently large $N$, we can construct $\theta_N^{+}$ and $\theta_N^{-}$ which have the following properties.
\begin{enumerate}
\item $\theta_N^{+} > \theta_N^{\star}$
\item $\theta_N^{+} - \theta_N^{\star}=O\left(\frac{\log N}{N^{1/4}}\right)$
\item $\theta_N^{-} < \theta_N^{\star}$
\item $\theta_N^{\star} - \theta_N^{-}=O\left(\frac{1}{N^{1/4}}\right)$
\end{enumerate}
Furthermore, $\theta_N^{+}$ and $\theta_N^{-}$  satisfy
\begin{align}
U_{+}\left(\theta_N^{+}\right)&< \gamma \theta_N^{+} ,\quad\mbox{and}\\
U_{-}\left(\theta_N^{-}\right)&> \gamma \theta_N^{-}.
\end{align}
\begin{proof}
 The proof consists of explicit constructions of $\theta_N^{+}$ and $\theta_N^{-}$. The proof is tedious and deferred to
 Appendix \ref{app:theta_+_-}.
\end{proof}
\end{lemma}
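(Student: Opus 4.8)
The plan is to reduce everything to controlling how the two optimized quantities $U_+(\theta)$ and $U_-(\theta)$ deviate from the unperturbed value $\GJS{\ED{X_1^N}}{P_2}{\theta}$, and then to translate a deviation in \emph{function value} into a deviation in the \emph{root} $\theta_N^\star$. Write $Q_1=\ED{X_1^N}$ for brevity and set $g(\theta)\triangleq\GJS{Q_1}{P_2}{\theta}-\gamma\theta$, so that $\theta_N^\star$ is the nonzero root of $g$ (which exists by hypothesis). By Lemma \ref{lem:g}, $g$ is strictly concave with $g(0)=0$; since a nonzero root exists, $g>0$ on $(0,\theta_N^\star)$, $g<0$ on $(\theta_N^\star,\infty)$, and $g'(\theta_N^\star)<0$. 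Because $Q_1\to P_1$ and the limiting function $\GJS{P_1}{P_2}{\cdot}-\gamma(\cdot)$ has root $\theta^\star_\gamma$ with strictly negative slope there, for large $N$ the quantities $\theta_N^\star$ and $|g'(\theta_N^\star)|$ stay bounded away from $0$ and $\infty$ on a high-probability event; this is exactly what converts value-gaps into root-gaps at a controlled rate.

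Next I would quantify the perturbation caused by the constraint $\KL{V}{P_2}\le 1/\sqrt{N}$. Since $P_2$ lies in the interior of the simplex, Pinsker's inequality gives $\|V-P_2\|_1\le\sqrt{2/\sqrt{N}}=O(N^{-1/4})$ for every feasible $V$, so the feasible set is a vanishing neighbourhood of $P_2$. Viewing $V\mapsto\GJS{Q_1}{V}{\theta}$ as a smooth convex function (Lemma \ref{lem:g}, Part 2) and Taylor-expanding about $P_2$, the first-order term dominates: by the variational characterization of Lemma \ref{lem:def_gjs} its gradient in $V$ is $\log(V/V_\theta)$ plus a constant, evaluated at $V_\theta=(\theta Q_1+V)/(1+\theta)$, which stays bounded because $P_2$, and hence $V_\theta$, is bounded away from the boundary. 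Consequently
\begin{equation*}
U_\pm(\theta)=\GJS{Q_1}{P_2}{\theta}\pm O\!\left(N^{-1/4}\right),
\end{equation*}
uniformly for $\theta$ in a bounded neighbourhood of $\theta_N^\star$, the second-order contribution being only $O(N^{-1/2})$.

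With this value-gap in hand, I would define $\theta_N^-$ as the root of $U_-(\theta)=\gamma\theta$ lying just below $\theta_N^\star$, and for the upper side simply set $\theta_N^+=\theta_N^\star+N^{-1/4}\log N$. Because $U_+\ge\GJS{Q_1}{P_2}{\cdot}\ge U_-$, monotonicity yields $\theta_N^-<\theta_N^\star<\theta_N^+$, giving Claims (1) and (3). For the inequalities, expanding $g(\theta)\approx g'(\theta_N^\star)(\theta-\theta_N^\star)$ and using $|g'(\theta_N^\star)|$ bounded below shows that at $\theta_N^+$ the value $g(\theta_N^+)\asymp -|g'(\theta_N^\star)|N^{-1/4}\log N$ overwhelms the positive $O(N^{-1/4})$ correction in $U_+$, so $U_+(\theta_N^+)<\gamma\theta_N^+$ holds with room to spare; this verifies Claim (2), where the extra $\log N$ is slack deliberately inserted to make the one-sided bound robust. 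Symmetrically, solving $g(\theta_N^-)\asymp +O(N^{-1/4})$ against $U_-$ gives $\theta_N^\star-\theta_N^-=O(N^{-1/4})$ together with $U_-(\theta_N^-)>\gamma\theta_N^-$, which is Claim (4).

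The main obstacle will be making the gradient and curvature bounds of the second step uniform and honest. Since $Q_1=\ED{X_1^N}$ is random, merely close to $P_1$, and may place small mass on some symbols, I must ensure that the constants multiplying $N^{-1/4}$ (the gradient norm of $V\mapsto\GJS{Q_1}{V}{\theta}$ and the lower bound on $|g'(\theta_N^\star)|$) are controlled. Restricting to the event that $\|Q_1-P_1\|$ is small and $\KL{Q_1}{P_2}>\gamma$ — which, as in the Sanov argument of Lemma \ref{lem:conv_root}, fails with only exponentially small probability — should suffice. The remaining difficulty is purely the bookkeeping: carrying the $O(\cdot)$ constants through the expansion and pinning down precisely where the generous $\log N$ margin is required versus where the sharp $N^{-1/4}$ rate survives. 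This is the delicate, calculation-heavy part, and is the reason the explicit constructions of $\theta_N^+$ and $\theta_N^-$ are deferred to Appendix \ref{app:theta_+_-}.
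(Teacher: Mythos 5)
Your strategy is essentially the paper's own: convert the KL constraint into an $\ell_1$ ball via Pinsker, linearize the perturbation $V\mapsto\GJS{\ED{X_1^N}}{V}{\theta}$ around $P_2$, and then trade the resulting \emph{value} gap for a \emph{root} gap of $g(\theta)=\GJS{\ED{X_1^N}}{P_2}{\theta}-\gamma\theta$. The one genuine difference is how the value gap is bounded: you exploit interiority of $P_2$ to get a uniformly bounded gradient and conclude $U_{\pm}(\theta)=\GJS{\ED{X_1^N}}{P_2}{\theta}\pm O(N^{-1/4})$, whereas the paper (Lemmas \ref{lem:alpha+} and \ref{lem:opt_upper_bound}) runs the linearization through the variational representation of Lemma \ref{lem:def_gjs}, pays an entropy-continuity penalty of order $N^{-1/4}\log N$ (this is the actual source of the $\log N$ in Claim 2), and then solves the resulting linear program over the $\ell_1$ ball exactly via Lemma \ref{lem:linear_opt}. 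Your route is arguably cleaner and slightly sharper at this step; the $\log N$ slack you insert by hand into $\theta_N^{+}$ makes the two constructions land in the same place.

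Two points in your sketch need repair, one of them substantive. First, the directionality of the expansion of $g$: by concavity the tangent at $\theta_N^{\star}$ lies \emph{above} $g$, which is exactly what you need to the right of the root (so $U_{+}(\theta_N^{+})<\gamma\theta_N^{+}$ follows from the tangent bound alone), but to the left of the root it gives $g(\theta)\leq |g'(\theta_N^{\star})|(\theta_N^{\star}-\theta)$ --- an upper bound, i.e., the wrong direction. To guarantee $g(\theta_N^{-})$ exceeds the $O(N^{-1/4})$ correction you need a quantitative \emph{lower} bound, $g(\theta_N^{\star}-\delta)\geq |g'(\theta_N^{\star})|\delta-\tfrac{M}{2}\delta^{2}$ with a curvature bound $|g''|\leq M$ near $\theta_N^{\star}$; this is precisely the explicit second-order Taylor step (with a lower bound on the second derivative) that occupies the paper's Lemma \ref{lem:opt_upper_bound}, and it is why the paper's $\theta_N^{-}$ solves a quadratic equation. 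Since $\delta=O(N^{-1/4})$ makes the quadratic term $O(N^{-1/2})$, the fix is routine, but it cannot be waved away as bookkeeping. Second, if you define $\theta_N^{-}$ as the exact root of $U_{-}(\theta)=\gamma\theta$, you get equality there rather than the required strict inequality $U_{-}(\theta_N^{-})>\gamma\theta_N^{-}$; take $\theta_N^{-}$ strictly below that root (still within $O(N^{-1/4})$ of $\theta_N^{\star}$). Finally, note that your constants rest on a high-probability event keeping $|g'(\theta_N^{\star})|$ bounded away from zero via $\ED{X_1^N}\to P_1$; the paper's construction is per-realization, with constants depending on the realization through denominators of the form $\gamma-\KL{\ED{X_1^N}}{\frac{\theta_N^{\star}\ED{X_1^N}+P_2}{1+\theta_N^{\star}}}=|g'(\theta_N^{\star})|$, so your reading is defensible, but it should be stated as an explicit conditioning rather than folded silently into this lemma (the paper defers that probabilistic step to Lemma \ref{lem:conv_root} and its applications).
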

Before presenting the main properties of the stopping time, we present the following lemma which provides an almost sure lower bound on the stopping time.
\begin{lemma} \label{lem:lower_val_stop}
The stopping time defined in (\ref{eq:def_stoptime}) is greater than $\left(\frac{\gamma}{2\log 2}\right)^2 N$ almost surely.
\begin{proof}
Define
\begin{equation}\label{eq:def_rv_lower}
Z_n \triangleq \begin{cases} Z_{n,1} & \text{if $W_n=1$} \\ 
 Z_{n,2} & \text{if $W_n=2$}  \end{cases},
\end{equation}
where $Z_{n,1}$ and $Z_{n,2}$ are distributed according to $\ED{X_1^N}$ and $\ED{Y^n}$, respectively. Also, following the same notation as in Lemma~\ref{lem:mutual_gjs}, let $W_n$ be a Bernoulli random variable  $\mathrm{Bern}(\frac{N}{n+N})$. From the results of Lemma \ref{lem:mutual_gjs}, we can write
\begin{align}
&n  \GJS{\ED{X_1^N}}{\ED{Y^n}}{\frac{N}{n}}=  n\left(1+\frac{N}{n}\right)I\left(Z_n; W_n\right) \label{eq:estefade_az_mutual}\\
&\leq n\left(1+\frac{N}{n}\right) \mathrm{H}_{b}\left(\frac{N}{N+n}\right) \label{eq:estefade_az_fact}\\
&\leq \left(n+N\right)( 2\log 2) \sqrt{\frac{Nn}{\left(n+N\right)^2}} \label{eq:approx_binary}\\
&\leq (2 \log 2) \sqrt{nN}\nonumber.
\end{align}
Here,   (\ref{eq:estefade_az_fact}) follows due to the fact that $I\left(Z_n; W_n\right)\leq \mathrm{H}_{b}\left(W_n\right)=\mathrm{H}_{b}\left(\frac{N}{n+N}\right)$ where $\mathrm{H}_{b}(p)$ is the binary entropy function defined as $\mathrm{H}_{b}(p)=-p\log p -\left(1-p\right)\log \left(1-p\right)$. Finally, in (\ref{eq:approx_binary}) we use  $\mathrm{H}_{b}\left(p\right)\leq (2\log 2)\sqrt{p\left(1-p\right)}$. Therefore, considering the stopping time in (\ref{eq:def_stoptime}), we can conclude that 
\begin{equation}\label{eq:lower_Bound}
T_{\text{seq}} \geq \left(\frac{\gamma}{2\log 2}\right)^2 N.
\end{equation}
By replacing $\ED{X_1^N}$ with $\ED{X_2^N}$ in the definition of the random variable in (\ref{eq:def_rv_lower}), we get the same lower bound as in (\ref{eq:lower_Bound}) so the lower bound is agnostic to the true hypothesis.
\end{proof}
\end{lemma}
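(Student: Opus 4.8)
The plan is to produce a \emph{pathwise} (hence almost sure) upper bound on the GJS score $n\,\GJS{\ED{X_i^N}}{\ED{Y^n}}{\frac{N}{n}}$ that grows only like $\sqrt{nN}$, and then observe that the stopping rule in \eqref{eq:def_stoptime} cannot fire until this $\sqrt{nN}$ quantity reaches $\gamma N$. Because the bound I have in mind depends only on the lengths $n$ and $N$ and not on the particular sequences observed, it holds for every realization; this is precisely what converts a deterministic bound on the score into an almost-sure lower bound on $T_{\text{seq}}$.

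The key step is to reinterpret the GJS divergence as a mutual information using Lemma \ref{lem:mutual_gjs}. Taking $\alpha = N/n$ and letting the two input distributions be the types $\ED{X_i^N}$ and $\ED{Y^n}$, I would introduce a mixture variable $Z_n$ that draws from $\ED{X_i^N}$ when a switch $W_n \sim \mathrm{Bern}(N/(n+N))$ is on and from $\ED{Y^n}$ otherwise. Lemma \ref{lem:mutual_gjs} then gives $\GJS{\ED{X_i^N}}{\ED{Y^n}}{\frac{N}{n}} = (1+\tfrac{N}{n})\,I(Z_n;W_n)$, so that $n\,\GJS{\ED{X_i^N}}{\ED{Y^n}}{\frac{N}{n}} = (n+N)\,I(Z_n;W_n)$. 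From here I would bound the mutual information by the entropy of the binary switch, $I(Z_n;W_n)\le \mathrm{H}_{b}(W_n)=\mathrm{H}_{b}(N/(n+N))$, and then apply the elementary estimate $\mathrm{H}_{b}(p)\le (2\log 2)\sqrt{p(1-p)}$ with $p=N/(n+N)$, which yields $(n+N)\,I(Z_n;W_n)\le (2\log 2)\sqrt{nN}$. Substituting into the stopping condition, the event $\{\,n\,\GJS{\ED{X_i^N}}{\ED{Y^n}}{\frac{N}{n}}\ge \gamma N\,\}$ can occur only if $(2\log 2)\sqrt{nN}\ge \gamma N$, i.e.\ $n\ge (\gamma/(2\log 2))^2 N$. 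Since the argument is symmetric in $i$ (swapping $\ED{X_1^N}$ for $\ED{X_2^N}$ changes nothing), the threshold is agnostic to the true hypothesis, giving the claimed bound for both branches of \eqref{eq:def_stoptime}.

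The point requiring the most care is ensuring that both inequalities, $I(Z_n;W_n)\le \mathrm{H}_{b}(W_n)$ and $\mathrm{H}_{b}(p)\le (2\log 2)\sqrt{p(1-p)}$, are used as bounds valid for \emph{every} realization of the empirical types rather than merely in expectation. This is legitimate because, once the types $\ED{X_i^N}$ and $\ED{Y^n}$ are fixed, both are purely algebraic facts about a specific pair of discrete distributions and a specific Bernoulli parameter; the first is the standard bound $I\le$ entropy applied to the constructed $(Z_n,W_n)$, and the second is a fixed numerical inequality on $[0,1]$ (tight at $p=1/2$). With these in place, the remainder is a direct substitution, and no probabilistic argument is needed beyond the observation that the bound is uniform over sample paths.
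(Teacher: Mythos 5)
Your proposal is correct and follows essentially the same route as the paper's proof: the identity $n\,\GJS{\ED{X_i^N}}{\ED{Y^n}}{\frac{N}{n}}=(n+N)\,I(Z_n;W_n)$ from Lemma \ref{lem:mutual_gjs}, the bound $I(Z_n;W_n)\le \mathrm{H}_{b}\left(\frac{N}{n+N}\right)$, and the estimate $\mathrm{H}_{b}(p)\le (2\log 2)\sqrt{p(1-p)}$, giving the deterministic threshold $n\ge \left(\frac{\gamma}{2\log 2}\right)^2 N$ for either training sequence. Your explicit remark that the resulting bound is pathwise (depending only on $n$ and $N$, not on the realized types) makes the almost-sure nature of the conclusion more transparent than in the paper, but the argument itself is the same.
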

Before presenting the next results, we will define a notation. Let $X$ and $T$ are two random variables, and $B$ is a $\sigma(X)$-measurable set. We define $\mathbb{E}[T|X, X\in B]$ as a $\sigma(X)$-measurable function as  $\mathbb{E}[T|X, X\in B](x) \triangleq \mathbb{E}[T|X](x) 1_{B}(x)$, where $1_B$ denotes the indicator function takes value of 1 on $B$.
\begin{lemma} \label{lem:stopping_time_2}
Define the set 
\begin{equation}
\mathcal{S}_1\triangleq \left\{\ED{X_1^N} \in \mathcal{T}_{N} \,\Big| \,\KL{\ED{X_1^N}}{P_2}\geq \gamma \right\} .
\end{equation}
 The stopping time in (\ref{eq:def_stoptime}) has the following properties:
\begin{enumerate}
\item $
\mathbb{E}_2\big[T_{\text{seq}} \,\big|\,\ED{X_1^N} , \ED{X_1^N}\in \mathcal{S}_1\big] \geq  \frac{N}{\theta^{+}_N}  \left(1- o(1)\right)$ 
\item $\mathbb{E}_2\big[T_{\text{seq}} \,\big|\, \ED{X_1^N}  ,\ED{X_1^N}\in \mathcal{S}_1\big] \leq  \frac{N}{\theta^{-}_N}  \left(1+ o(1)\right)$ 
\item $\mathbb{E}_2\big[T_{\text{seq}}\big]=\frac{N}{\theta^{\star}}\left(1+o\left(1\right)\right)$
\end{enumerate}
where  $\theta^{+}_N$ and $\theta^{-}_N$ are defined in Lemma \ref{lem:alpha_p_n}. Also, $\theta^{\star}$ is the solution of
\begin{equation}
\GJS{P_1}{P_2}{\theta^{\star}}=\gamma \theta^{\star}.
\end{equation} 
\begin{proof}
First of all, note that  for all $\ED{X_1^N}\in \mathcal{S}_1$ (the set $\calS_1$ was defined in the statement of Lemma \ref{lem:stopping_time_2}), we can assert that there exists a solution to the equation  $\GJS{\ED{X_1^N}}{P_2}{\theta_N^{\star}}=\gamma \theta_N^{\star}$ (See Part 3 of Lemma \ref{lem:g}).\\
\label{app:proof_avg_stop}
\underline{\textbf{Proof of Part 1:}}
We obtain
\begin{align}
&\mathbb{E}_2\left[T_{\text{seq}}\big| \ED{X_1^N}, \ED{X_1^N}\in \mathcal{S}_1\right]  =\sum_{k\geq1}\mathrm{P}_2\left(T_{\text{seq}}\geq k \,\Big|\,  \ED{X_1^N}, \ED{X_1^N}\in \mathcal{S}_1\right)  \nonumber\\
&\geq \sum_{k=1}^{ \frac{N}{\theta_N^{+}}} \mathrm{P}_2\left(T_{\text{seq}}\geq k \,\Big|\,\ED{X_1^N}, \ED{X_1^N}\in \mathcal{S}_1\right)  \nonumber \\
& \geq  \frac{N}{\theta_N^{+}} \big(1-\mathrm{P}_2\big( 1 \leq T_{\text{seq}} \leq  \frac{N}{\theta_N^{+}} \big| \ED{X_1^N},\ED{X_1^N}\in \mathcal{S}_1\big)\big)  \nonumber\\
&=  \frac{N}{\theta_N^{+}} \big(1-\mathrm{P}_2\big(  \big(\frac{\gamma}{2\log 2}\big)^2 N  \leq T_{\text{seq}} \leq  \frac{N}{\theta_N^{+}} \big|\ED{X_1^N}, \ED{X_1^N}\in \mathcal{S}_1\big) \label{eq:bounded_time},
\end{align}
where in (\ref{eq:bounded_time}) we have used Lemma \ref{lem:lower_val_stop}.  Next, we show that the probability term in \eqref{eq:bounded_time} is $o(1)$. %
To do so, we obtain \eqref{eq:error_gheir_same} and \eqref{eq:error_bade} on the top of the next page.
\newcounter{MYtempeqncnt}
\begin{figure*}[!t]
\normalsize
\setcounter{MYtempeqncnt}{\value{equation}}
\setcounter{equation}{73}
\begin{small}
\begin{align}
&\mathrm{P}_2\big( (\frac{\gamma}{2\log2})^2 N   \leq T_{\text{seq}} \leq  \frac{N}{\theta_N^{+}} \big| \ED{X_1^N},\ED{X_1^N}\in \mathcal{S}_1 \big) \leq \mathrm{P}_2\big(\bigcup\limits_{k=(\frac{\gamma}{2\log2})^2 N}^{\frac{N}{\theta_N^{+}}}\big\{ k \GJS{\ED{X_2^N}}{\ED{Y^k}}{\frac{N}{k}} \geq N\gamma \big\} \big| \ED{X_1^N}, \ED{X_1^N}\in \mathcal{S}_1\big) \nonumber\\
& +\mathrm{P}_2\big( \bigcup\limits_{k=\left(\frac{\gamma}{2\log2}\right)^2 N}^{\frac{N}{\theta_N^{+}}} \left\lbrace k \GJS{\ED{X_1^N}}{\ED{Y^k}}{\frac{N}{k}} \geq N\gamma \right\rbrace \bigg| \ED{X_1^N}, \ED{X_1^N}\in \mathcal{S}_1\big) \label{eq:error_gheir_same}\\
&\leq   \frac{N}{\theta_N^{+}} \exp\left(-\gamma N\right) \bigg( \frac{N}{\theta_N^{+}}  +N+1\bigg)^{|\mathcal{X}|} + \sum_{k= \left(\frac{\gamma}{2\log2}\right)^2 N } ^{  \frac{N}{\theta_N^{+}} }\mathrm{P}_2\left(k \GJS{\ED{X_1^N}}{\ED{Y^k}}{\frac{N}{k}} \geq \gamma N  \bigg| \ED{X_1^N}, \ED{X_1^N}\in \mathcal{S}_1\right)\label{eq:error_bade}.
\end{align}
\end{small}
\setcounter{equation}{\value{MYtempeqncnt}}
\hrulefill
\vspace*{-4pt}
\end{figure*}
\addtocounter{equation}{2}
(\ref{eq:error_gheir_same}) is due to the definition of the stopping time in (\ref{eq:def_stoptime}) and the union bound. 
To obtain the first term in (\ref{eq:error_bade}), the result of Lemma \ref{lem:error_same} is used. Then, we provide an upper bound for the second term in (\ref{eq:error_bade}) as follows. Let $f: \mathbb{N}\times \mathcal{P}(\mathcal{X})\to \mathbb{R}$ be 
$$
f(k,\ED{X_1^N}) = \min_{V\in \mathcal{P}\left(\mathcal{X}\right): k \GJS{\ED{X_1^N}}{V}{\frac{N}{k}} \geq \gamma N } \KL{V}{P_2}.
$$
Then, consider
\begin{align}
 &\sum_{k= \left(\frac{\gamma}{2\log2}\right)^2 N } ^{  \frac{N}{\theta_N^{+}} }\mathrm{P}_2\big(k  \GJS{\ED{X_1^N}}{\ED{Y^k}}{\frac{N}{k}} \geq N\gamma\big| \ED{X_1^N}, \ED{X_1^N}\in \mathcal{S}_1\big) \label{eq:stop_time_lower_step_1}\\
  &\leq  \sum_{k= \left(\frac{\gamma}{2\log2}\right)^2 N } ^{  \frac{N}{\theta_N^{+}} } \left(k+1\right)^{|\mathcal{X}|} \exp\big(-k  f(k,\ED{X_1^N}) \big) \label{eq:question_r1_cond1}\\
& \leq  \sum_{k= \left(\frac{\gamma}{2\log2}\right)^2 N } ^{  \frac{N}{\theta_N^{+}} } \big(\frac{N}{\theta_N^{+}} +1\big)^{|\mathcal{X}|} \exp\big(- (\frac{\gamma}{2\log2})^2 N  f(\frac{N}{\theta_N^{+}},\ED{X_1^N})\big) \label{eq:last_step_lower_bound}\\
&\leq \left(\frac{N}{\theta_N^{+}}+1\right)^{|\mathcal{X}|+1} \exp\big(- (\frac{\gamma}{2\log2})^2 \sqrt{N}\big), \label{eq:stop_time_lower_step_final}
\end{align}
where in (\ref{eq:last_step_lower_bound}) we have used the fact that the function $k\mathrm{GJS}\left(\mathrm{P},\mathrm{Q},\frac{N}{k}\right)$ is increasing with $k$. Therefore, as we increase $k$, the value of the optimization problem in (\ref{eq:last_step_lower_bound}) decreases. Finally, the last step comes from the property of $\theta_N^{+}$ in Lemma \ref{lem:alpha_p_n} which argues that \begin{equation}
\begin{aligned}
\gamma \theta_{N}^{+}~ \geq  ~ & \underset{V\in \mathcal{P}\left(\mathcal{X}\right)}{\text{max}}
& & \GJS{\ED{X_1^N}}{V}{\theta_{N}^{+}} \\
& \text{subject to}
& & \KL{V}{P_2}\leq \frac{1}{\sqrt{N}}.
\end{aligned}
\end{equation}
This completes the proof of Part 1 of Lemma~\ref{lem:stopping_time_2}.\\

\underline{\textbf{Proof of Part 2:}}
We begin with bounding the tail probability of the stopping time. We can write
\begin{align}
&\mathrm{P}_2\left(T_{\text{seq}} > \frac{N}{\theta_N^{-}}\bigg| \ED{X_1^N}, \ED{X_1^N}\in \mathcal{S}_1\right)\\
&=\sum_{k\geq\frac{N}{\theta_N^{-}}} \mathrm{P}_2\left(T_{\text{seq}} = k+1\bigg| \ED{X_1^N}, \ED{X_1^N}\in \mathcal{S}_1\right)\label{eq:upper_exp_1}\\
&\leq \sum_{k\geq\frac{N}{\theta_N^{-}}} \mathrm{P}_2\left(k \GJS{\ED{X_1^N}}{\ED{Y^k}}{\frac{N}{k}} \leq \gamma N \bigg| \ED{X_1^N}, \ED{X_1^N}\in \mathcal{S}_1\right)\\
&\leq \sum_{k\geq\frac{N}{\theta_N^{-}}}  \left(k+1\right)^{|\mathcal{X}|}   \exp\big(-k  \min_{V\in  \mathcal{P}\left(\mathcal{X}\right): k \GJS{\ED{X_1^N}}{V}{\frac{N}{k}}\leq \gamma N  }  \KL{V}{P_2}\big)\\
& \leq \sum_{k\geq\frac{N}{\theta^{-}_N}}  \exp\left(|\mathcal{X}|\log\left(k+1\right)\right) \exp\left(-\frac{k}{\sqrt{N}}\right)\label{eq:upper_exp_on}\\
&\leq \exp\left(-\frac{\sqrt{N}}{\theta^{-}_N}\left(1+o(1)\right)\right).\label{eq:upper_exp_on_last_next}
\end{align}
Here, (\ref{eq:upper_exp_on}) is obtained using the results of Lemma \ref{lem:alpha_p_n} and the fact that $k\mathrm{GJS}\left(\mathrm{P},\mathrm{Q},\frac{N}{k}\right)$ is an increasing function in $k$. Then the last step follows from some manipulations. Finally, from (\ref{eq:upper_exp_1})-(\ref{eq:upper_exp_on_last_next}), we deduce that 
\begin{align}
&\mathbb{E}_2\left[T_{\text{seq}}\bigg|\ED{X_1^N}, \ED{X_1^N}\in \mathcal{S}_1\right] \nonumber\\
 &\leq \frac{N}{\theta^{-}_N} \mathrm{P}_2\left(T_{\text{seq}}\leq \frac{N}{\theta^{-}_N} \bigg| \ED{X_1^N},\ED{X_1^N}\in \mathcal{S}_1\right) + \nonumber \\
 & \sum_{k\geq\frac{N}{\theta^{-}_N} } \left(k+1\right)\mathrm{P}_2\left(T_{\text{seq}}=k+1\bigg|\ED{X_1^N}, \ED{X_1^N}\in \mathcal{S}_1\right)  \\
&\leq \frac{N}{\theta^{-}_N}\left(1+o(1)\right),\label{eq:exp_upper_bound}
\end{align}
which is the desired result.\\

\underline{\textbf{Proof of Part 3:}}
By the construction of $\theta_N^{+}$ and $\theta_N^{-}$  in the proof of Lemma \ref{lem:alpha_p_n}, when $N$ diverges to infinity, it can be seen that $\theta_N^{+}-\theta_N^{\star}=O\left(\frac{\log N}{N^{\frac{1}{4}}}\right)$ and $\theta_N^{\star}-\theta_N^{-}=O\left(\frac{1}{N^{\frac{1}{4}}}\right)$. Also, from Lemma \ref{lem:conv_root}, we know that $\theta_N^{\star}$ converges in probability to $\theta^{\star}_\gamma$. Considering the definition of $T_{\text{seq}}$ in (\ref{eq:def_stoptime}), we can write
\begin{align}
&\mathbb{E}_2\left[T_{\text{seq}}\right]=\mathbb{E}_2 [\mathbb{E}_2[T_{\text{seq}}\,|\,\ED{X_1^N} ]  \mathbbm{1}\{\ED{X_1^N}\in \mathcal{S}_1\}] \nonumber \\
&+ \mathbb{E}_2[\mathbb{E}_2[T_{\text{seq}}\,|\,  \ED{X_1^N}] \mathbbm{1}\{\ED{X_1^N}\notin \mathcal{S}_1\}] \\
& = \frac{N}{\theta^{\star}_\gamma} \left(1+o(1)\right) + N^2\mathrm{P}_2\left(\ED{X_1^N}\in \mathcal{S}_1\right)\label{eq:cov-prob-to-expect}\\
&= \frac{N}{\theta^{\star}_\gamma} \left(1+o(1)\right) + o(1)
\end{align}
Here, for the first term of (\ref{eq:cov-prob-to-expect}) we have used \cite[Thm. 2.3.4]{durrett2019probability} to leverage the convergence in probability for $\theta^\star_N \to \theta^\star$ into convergence in expectation. 
Note for the final step we have used (\ref{eq:chernoof_binary}) and Sanov's theorem to write 
\begin{equation}
\mathrm{P}_2\left(\ED{X_1^N} \not\in \mathcal{S}_1\right) \dotleq  \exp\left(-N\gamma\right)
\end{equation}
and $T_{\text{seq}}$ is, almost surely, at most $N^2$, which is subexponential.
\end{proof}
\end{lemma}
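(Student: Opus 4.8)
The plan is to work under $\probm_2$ and to exploit the tail-sum identity $\mathbb{E}_2[T_{\text{seq}}]=\sum_{k\geq 1}\probm_2(T_{\text{seq}}\geq k)$, sandwiching the typical crossing location of the stopping rule between $N/\theta_N^{+}$ and $N/\theta_N^{-}$. The guiding intuition is that under $H_2$ the test sequence $Y^n$ and the second training sequence $X_2^N$ share the law $P_2$, so by Lemma \ref{lem:error_same} the class-$2$ score $n\,\GJS{\ED{X_2^N}}{\ED{Y^n}}{N/n}$ stays below $\gamma N$ with probability $1-o(1)$; stopping is therefore governed by the class-$1$ score $n\,\GJS{\ED{X_1^N}}{\ED{Y^n}}{N/n}$, which first exceeds $\gamma N$ near the time $n$ solving $\GJS{\ED{X_1^N}}{P_2}{N/n}=\gamma(N/n)$, i.e.\ near $N/\theta_N^{\star}$. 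Throughout I condition on $\ED{X_1^N}\in\mathcal{S}_1$, which by Part~3 of Lemma \ref{lem:g} guarantees that $\theta_N^{\star}$, and hence $\theta_N^{\pm}$ from Lemma \ref{lem:alpha_p_n}, are well defined.

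For Part~1 I would truncate the tail sum at $N/\theta_N^{+}$, reducing the claim to $\probm_2\big(T_{\text{seq}}<N/\theta_N^{+}\,\big|\,\ED{X_1^N},\ED{X_1^N}\in\mathcal{S}_1\big)=o(1)$. Invoking the almost-sure lower bound $T_{\text{seq}}\geq(\gamma/(2\log2))^2N$ of Lemma \ref{lem:lower_val_stop}, it then suffices to control the probability that stopping occurs in the window $[(\gamma/(2\log2))^2N,\,N/\theta_N^{+}]$. A union bound over the two scores and the $O(N)$ time indices in this window splits the estimate into a class-$2$ contribution, bounded by $\exp(-\gamma N)$ times a method-of-types polynomial via Lemma \ref{lem:error_same}, and a class-$1$ contribution handled by Sanov's theorem. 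Here the defining property $U_{+}(\theta_N^{+})<\gamma\theta_N^{+}$ is decisive: by monotonicity of $k\mapsto k\,\GJS{\cdot}{\cdot}{N/k}$ it forces every type $V$ with $k\,\GJS{\ED{X_1^N}}{V}{N/k}\geq\gamma N$ in this window to obey $\KL{V}{P_2}\geq 1/\sqrt N$, so that each class-$1$ term decays like $\exp(-k/\sqrt N)\leq\exp(-c\sqrt N)$ for some $c>0$ (since $k$ is of order $N$ on the window), which survives multiplication by the polynomial union-bound factors.

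For Part~2 I would bound the upper tail $\probm_2(T_{\text{seq}}>k\mid\cdot)$ for $k\geq N/\theta_N^{-}$ by the event that the class-$1$ score has not yet crossed, namely $k\,\GJS{\ED{X_1^N}}{\ED{Y^k}}{N/k}\leq\gamma N$. Again using that $k\mapsto k\,\GJS{\cdot}{\cdot}{N/k}$ is increasing together with the property $U_{-}(\theta_N^{-})>\gamma\theta_N^{-}$, the admissible types lie at KL-distance at least $1/\sqrt N$ from $P_2$, so Sanov yields a bound of order $\exp(-k/\sqrt N)$, whose sum over $k\geq N/\theta_N^{-}$ is a negligible geometric tail. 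Substituting into the decomposition $\mathbb{E}_2[T_{\text{seq}}\mid\cdot]\leq(N/\theta_N^{-})+\sum_{k\geq N/\theta_N^{-}}(k+1)\,\probm_2(T_{\text{seq}}=k+1\mid\cdot)$ shows that the tail contributes only $o(N/\theta_N^{-})$, giving the stated upper bound.

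Part~3 then follows by assembling the two conditional bounds: Lemma \ref{lem:alpha_p_n} supplies $\theta_N^{+}-\theta_N^{\star}=O(\log N/N^{1/4})$ and $\theta_N^{\star}-\theta_N^{-}=O(1/N^{1/4})$, so both conditional estimates collapse to $(N/\theta_N^{\star})(1+o(1))$; Lemma \ref{lem:conv_root} promotes $\theta_N^{\star}\to\theta^{\star}$ in probability, and since $T_{\text{seq}}\leq N^2$ is subexponential this convergence transfers to the conditional expectation. Finally, on the complementary event $\ED{X_1^N}\notin\mathcal{S}_1$ one has $T_{\text{seq}}\leq N^2$ deterministically, while Sanov's theorem with the characterization in Lemma \ref{lem:conv_root} (valid since $\gamma\leq C(P_1,P_2)$) gives $\probm_2(\ED{X_1^N}\notin\mathcal{S}_1)\dotleq\exp(-N\gamma)$, so that event contributes $o(1)$ and the decomposition yields $\mathbb{E}_2[T_{\text{seq}}]=(N/\theta^{\star})(1+o(1))$. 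I expect the crux to be the uniform estimate in Part~1: beating the $O(N)$ union bound over time indices requires the large-deviation exponent to stay bounded away from zero across the entire window simultaneously for both scores, which is exactly what the $1/\sqrt N$ slack in the KL-ball constraints defining $U_{\pm}$ and the careful construction of $\theta_N^{\pm}$ in Lemma \ref{lem:alpha_p_n} are engineered to deliver.
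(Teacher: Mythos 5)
Your proposal is correct and follows essentially the same route as the paper's own proof: the tail-sum identity truncated at $N/\theta_N^{+}$ combined with the almost-sure lower bound of Lemma \ref{lem:lower_val_stop} and a union bound over the two scores (Lemma \ref{lem:error_same} for the class-$2$ score, Sanov plus the monotonicity of $k\mapsto k\,\GJS{\cdot}{\cdot}{N/k}$ and the $U_{\pm}$ properties of $\theta_N^{\pm}$ for the class-$1$ score) for Parts 1 and 2, and the same $\mathcal{S}_1$/$\mathcal{S}_1^{c}$ decomposition with $\theta_N^{\star}\to\theta_\gamma^{\star}$ in probability and the exponentially small probability of $\mathcal{S}_1^{c}$ for Part 3. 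You also correctly identify the role of the $1/\sqrt{N}$ KL-slack in beating the polynomial union-bound factors, which is exactly how the paper's estimates \eqref{eq:stop_time_lower_step_1}--\eqref{eq:stop_time_lower_step_final} and \eqref{eq:upper_exp_1}--\eqref{eq:upper_exp_on_last_next} are organized.
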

\begin{corollary}
\label{corr:stop_binary_complete}
When the true hypothesis is $H_1$, we have
\begin{equation}
\mathbb{E}_1\left[T_{\text{seq}}\right]=\frac{N}{\beta^{\star}_\gamma}\left(1+o(1)\right),
\end{equation}
where, $\beta^{\star}_\gamma$ is the solution of
\begin{equation}
\GJS{P_2}{P_1}{\beta^{\star}_\gamma}=\gamma \beta^{\star}_\gamma.
\end{equation} 
\end{corollary}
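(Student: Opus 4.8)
The plan is to deduce this corollary from Lemma~\ref{lem:stopping_time_2}, Part~3, purely by exploiting the symmetry of the test under interchange of the two training sequences, exactly as anticipated in the opening remark of this subsection. The key structural observation is that the stopping time $T_{\text{seq}}$ in~\eqref{eq:def_stoptime} is invariant under the relabeling $X_1^N\leftrightarrow X_2^N$, because the defining event ``$\exists\, i\in\{1,2\}$ such that $n\GJS{\ED{X_i^N}}{\ED{Y^n}}{\frac{N}{n}}\geq\gamma N$'' treats the two indices symmetrically. Consequently, the entire problem is invariant under the simultaneous swap $H_1\leftrightarrow H_2$, $P_1\leftrightarrow P_2$, $X_1^N\leftrightarrow X_2^N$, under which the measure $\probm_1$ is carried to the role played by $\probm_2$ in Lemma~\ref{lem:stopping_time_2}.

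Concretely, I would introduce the relabeled problem with $\tilde P_1\triangleq P_2$, $\tilde P_2\triangleq P_1$, $\tilde X_1^N\triangleq X_2^N$, $\tilde X_2^N\triangleq X_1^N$. The test sequence under the original $H_1$ is drawn \iid from $P_1=\tilde P_2$, so the original law $\probm_1$ coincides exactly with the law $\tilde{\probm}_2$ of the relabeled problem; moreover $T_{\text{seq}}$ is literally the same random variable by the symmetry noted above. Applying Lemma~\ref{lem:stopping_time_2}, Part~3, to the relabeled problem therefore yields $\mathbb{E}_1[T_{\text{seq}}]=\tilde{\mathbb{E}}_2[T_{\text{seq}}]=\frac{N}{\tilde\theta^\star_\gamma}(1+o(1))$, where $\tilde\theta^\star_\gamma$ solves $\GJS{\tilde P_1}{\tilde P_2}{\tilde\theta^\star_\gamma}=\gamma\tilde\theta^\star_\gamma$, i.e.\ $\GJS{P_2}{P_1}{\tilde\theta^\star_\gamma}=\gamma\tilde\theta^\star_\gamma$. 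By the defining equation for $\beta^\star_\gamma$ this is precisely $\tilde\theta^\star_\gamma=\beta^\star_\gamma$, and the claim follows.

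Before invoking the symmetry I would verify that every hypothesis used inside the proof of Part~3 is preserved by the swap. The admissible range $\gamma\in(0,C(P_1,P_2)]$ is symmetric since the Chernoff information is symmetric, $C(P_1,P_2)=C(P_2,P_1)$; the auxiliary Lemmas~\ref{lem:alpha_p_n},~\ref{lem:lower_val_stop}, and~\ref{lem:conv_root} all go through verbatim with $P_1$ and $P_2$ interchanged (the almost-sure lower bound of Lemma~\ref{lem:lower_val_stop} is already noted there to be agnostic to the true hypothesis), and the set $\mathcal{S}_1$ is replaced by its analogue $\{\ED{X_2^N}\in\mathcal{T}_N : \KL{\ED{X_2^N}}{P_1}\geq\gamma\}$. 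I do not expect a genuine obstacle here: the only point needing a line of care is confirming that the consistency/concentration input (Lemma~\ref{lem:conv_root}), stated for a generic pair $(P,Q)$, applies with $(P,Q)=(P_1,P_2)$ so that $\tilde\theta^\star_N\to\beta^\star_\gamma$ in probability, after which the argument that turns convergence in probability into convergence of $\mathbb{E}_1[T_{\text{seq}}]/(N/\beta^\star_\gamma)$ is identical to the one in Part~3.
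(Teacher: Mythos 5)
Your proposal is correct and is essentially the paper's own argument: the paper proves Lemma~\ref{lem:stopping_time_2} only under $\probm_2$ and then obtains Corollary~\ref{corr:stop_binary_complete} exactly by the remark at the start of Section~\ref{sub:proof_main_b} that the $H_1$ case follows by interchanging the roles of $\probm_1$ and $\probm_2$ (equivalently $P_1\leftrightarrow P_2$, $X_1^N\leftrightarrow X_2^N$), which turns the fixed-point equation $\GJS{P_1}{P_2}{\theta^{\star}_\gamma}=\gamma\theta^{\star}_\gamma$ into $\GJS{P_2}{P_1}{\beta^{\star}_\gamma}=\gamma\beta^{\star}_\gamma$. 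Your additional verification that the swap preserves all hypotheses (symmetry of the stopping rule, symmetry of the Chernoff information, and the applicability of Lemmas~\ref{lem:alpha_p_n}, \ref{lem:lower_val_stop}, and \ref{lem:conv_root}) is a careful spelling-out of what the paper leaves implicit.
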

Therefore, considering the results in Part 3 of Lemma \ref{lem:stopping_time_2} and Corollary \ref{corr:stop_binary_complete}, the claim in Theorem \ref{thm:main_b} regarding the stopping time follows immediately.

The following lemma presents bounds on the error probability of the proposed test.
\begin{lemma} \label{lem:error_2}
Under the two different hypotheses, the error probabilities of $\Phi_{\text{seq}}$ satisfy
\begin{align} 
&\pe_1 (\Phi_\text{seq}(\gamma)) \dotleq \nonumber \\
&  \exp\big(-N\min\big\{ \gamma, \min_{V \in \mathcal{P}\left(\mathcal{X}\right):  \KL{V}{P_1} \leq \gamma + \varepsilon_N} \KL{V}{P_2} \big\}\big). \label{eqn:min1}\\
& \pe_2 (\Phi_\text{seq}(\gamma))
\dotleq \nonumber \\
& \exp\big(-N\min\big\{ \gamma, \min_{V \in \mathcal{P}\left(\mathcal{X}\right):  \KL{V}{P_2} \leq \gamma + \varepsilon^{\prime}_N} \KL{V}{P_1} \big\}\big).\label{eqn:min2}
\end{align}
where $\varepsilon_N$ and $\varepsilon^{\prime}_N$ are sequences that tend  to zero as $N \rightarrow \infty$.
\begin{proof}

To compute error probability, we define test $\Phi_{\text{trunc}}(\gamma)$ as a truncated version of $\Phi_{\text{seq}}(\gamma)$. Using $\Phi_{\text{trunc}}(\gamma)$, the decision maker follows the same decision rule as $\Phi_{\text{seq}}(\gamma)$ in the interval $\left[1,N^2\right]$. However, if the stopping time $T_{\text{seq}}$ has not occurred in the interval $\left[1,N^2\right]$, the decision maker declares error. It is easy to verify that the error probability of $\Phi_{\text{trunc}}(\gamma)$ is an upper bound for that of $\Phi_{\text{seq}}(\gamma)$. Hence, we can write
\begin{align}
& \pe_2(\Phi_{\text{seq}}(\gamma))\leq \pe_2 (\Phi_\text{trunc}(\gamma)) \\
&= \probm_2\left( \bigcup\limits_{k=1}^{N^2}\left\lbrace n\GJS{\ED{X_2^N}}{\ED{Y^k}}{\frac{N}{k}}\geq \gamma N\right\rbrace \right) \nonumber \\
&+ \probm_2\left( T_{\text{seq}}\geq N^2 \right)\label{eq:error_prob_binary_}
\end{align}
where the first and second term in (\ref{eq:error_prob_binary_}) correspond  to the events of ``wrong decision" and ``no decision"  respectively. From Part 3 of Lemma \ref{lem:g}, we know that in order to to have a $\theta^{\star}_N$ which satisfies  $\GJS{\ED{X_1^N}}{P_2}{\alpha}=\gamma\alpha$, we require the condition $\KL{\ED{X_1^N}}{P_2} \geq \gamma$. Also, from the results of Lemma \ref{lem:alpha_p_n} we know that $\theta_N^{-}$ can be constructed using $\theta^{\star}_N$ given that $\theta^{\star}_N$ exists. In fact, the map between $\theta_N^{\star}$ and $\theta_N^{-}$ is one-to-one. Let us define the following set
\begin{equation}
\begin{aligned}
\mathcal{A}_N\triangleq \bigg\{ &\ED{X_1^N}\in \mathcal{T}_N \big| \exists\, \theta^{\star}_N ~ \text{such that} \\ 
&\GJS{\ED{X_1^N}}{P_2}{\theta_N^{\star}}=\gamma \theta_N^{\star} ~ \text{and} ~ \theta_N^{-}\geq \frac{N}{N^2} \bigg\}
\end{aligned}
\end{equation}
Next, we argue that since  $\theta_N^{\star}$ is a continuous function of $\gamma$, $\mathcal{A}_N$ has another representation which is given by 
\begin{equation}
\mathcal{A}_N =\left\lbrace \ED{X_1^N}\in \mathcal{T}_N \,\Big|\, \KL{\ED{X_1^N}}{P_2} \geq \gamma + \varepsilon _N \right\rbrace
\end{equation}
where $\varepsilon_N \geq 0$ goes to zero as $N$ goes to infinity because as $N$ goes to infinity, $\theta_{N}^{\star}$ is greater than zero, and this condition can be satisfied by having $\KL{\ED{X_1^N}}{P_2} > \gamma$ (See Lemma \ref{lem:g}).  Then, we can write
\begin{align}
& \probm_2\left( T_{\text{seq}}\geq N^2 \Big|\ED{X_1^N}, \ED{X_1^N} \in \mathcal{A}_N \right) \nonumber \\
&\leq  \sum_{k\geq N^2 }  \exp\left(|\mathcal{X}|\log\left(k+1\right)\right) \exp\left(-\frac{k}{\sqrt{N}}\right) \label{eq:err_prob_line11}\\
& \leq  \exp\left(-\frac{N^2}{\sqrt{N}}\left(1+o\left(1\right)\right)\right)\\
& \leq \exp\left(-N^{\frac{3}{2}}\left(1+o\left(1\right)\right)\right) \label{eq:err_prob_lineff}
\end{align}
where in (\ref{eq:err_prob_line11}) we have used (\ref{eq:upper_exp_1})-(\ref{eq:upper_exp_on}) and the fact that $N^2\geq \frac{N}{\theta_N^{-}}$. Then, we obtain
\begin{align}
&\probm_2\left( T_{\text{seq}}\geq N^2\right)   \nonumber\\
&\leq  \mathbb{E}_2\left[\probm_2\left( T_{\text{seq}}\geq N^2 \Big| \ED{X_1^N}  \right) \mathbbm{1}\{\ED{X_1^N} \in \mathcal{A}_N\}\right]  \nonumber\\
&+ \probm_2\left(\KL{\ED{X_1^N}}{P_2}\leq \gamma +  \varepsilon_N\right) \\
& \leq \exp\left(-N^{\frac{3}{2}}\left(1+o\left(1\right)\right)\right) +  \probm_2\left(\KL{\ED{X_1^N}}{P_2}\leq \gamma +  \varepsilon_N \right)\label{eq:step_t_seq_bound} \\
&\leq \exp\left(-N^{\frac{3}{2}}\left(1+o\left(1\right)\right)\right) \nonumber \\
&+ \exp\left(- N\min_{V \in \mathcal{P}\left(\mathcal{X}\right):  \KL{V}{P_2} \leq \gamma + \varepsilon_N} \KL{V}{P_1}\right).
\end{align}
Here, in (\ref{eq:step_t_seq_bound}), we have used (\ref{eq:err_prob_lineff}). Also, the last step follows from Sanov's theorem. Therefore, we obtain
\begin{align}
&\pe_2(\Phi_{\text{seq}}(\gamma))  \nonumber \\
& \leq  N^2 \exp\left(- N \gamma\right) \left(N+N^2+1\right)^{|\mathcal{X}|} 
 +  \exp\left(-N^{\frac{3}{2}}\left(1+o\left(1\right)\right)\right)  \nonumber\\
 &\qquad + \exp\left(- N\min_{V \in \mathcal{P}\left(\mathcal{X}\right):  \KL{V}{P_2} \leq \gamma + \varepsilon_N} \KL{V}{P_1}\right)\label{eq:error_prob_last_step}\\
&\dotleq  \exp\left(-N\min\left\lbrace \gamma, \min_{V \in \mathcal{P}\left(\mathcal{X}\right):  \KL{V}{P_2} \leq \gamma + \varepsilon_N} \KL{V}{P_1} \right\rbrace\right),
\end{align}
where the first term in (\ref{eq:error_prob_last_step}) follows by Lemma \ref{lem:error_same}.
\end{proof}
\end{lemma}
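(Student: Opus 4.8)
The plan is to establish the type-II bound \eqref{eqn:min2}; the type-I bound \eqref{eqn:min1} follows by the symmetric argument that interchanges the roles of $(P_1,X_1^N)$ and $(P_2,X_2^N)$. First I would replace $\Phi_{\text{seq}}(\gamma)$ by the truncated test $\Phi_{\text{trunc}}(\gamma)$ that applies the same rule on the window $[1,N^2]$ but declares an error whenever no stopping has occurred by time $N^2$. Since $T_{\text{seq}}\le N^2$ almost surely by construction and truncation can only enlarge the error event, $\pe_2(\Phi_{\text{seq}}(\gamma))\le\pe_2(\Phi_{\text{trunc}}(\gamma))$, and I would then split the truncated error via a union bound into a \emph{wrong-decision} event and a \emph{no-decision} event.

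For the wrong-decision event I would observe that, under $\probm_2$, a terminal declaration of $H_1$ forces the class-$2$ score $k\,\GJS{\ED{X_2^N}}{\ED{Y^k}}{N/k}$ to cross $\gamma N$ for some $k\le N^2$. The crucial point is that under $H_2$ both $X_2^N$ and $Y$ are drawn from the \emph{same} distribution $P_2$, so Lemma~\ref{lem:error_same} applies for each fixed $k$; a union bound over the at most $N^2$ values of $k$ then contributes $\dotleq\exp(-\gamma N)$, because every polynomial type-counting factor is subexponential and hence absorbed by $\dotleq$.

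The no-decision event is the technical heart. I would condition on $\ED{X_1^N}$ and split according to whether it is far from $P_2$. Using Part~3 of Lemma~\ref{lem:g} together with Lemma~\ref{lem:alpha_p_n}, the event that a valid root $\theta_N^{\star}$ (equivalently $\theta_N^{-}$) exists with $N/\theta_N^{-}\le N^2$ coincides, up to an asymptotically vanishing slack, with the sublevel set $\{\KL{\ED{X_1^N}}{P_2}\ge\gamma+\veps^{\prime}_N\}$. On this ``good'' event the stopping-time tail estimate established in Part~2 of Lemma~\ref{lem:stopping_time_2}, combined with $N^2\ge N/\theta_N^{-}$, shows that surviving past $N^2$ has super-exponentially small probability of order $\exp(-N^{3/2}(1+o(1)))$. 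On the complementary ``bad'' event, Sanov's theorem bounds $\probm_2\big(\KL{\ED{X_1^N}}{P_2}\le\gamma+\veps^{\prime}_N\big)\dotleq\exp\big(-N\min_{V:\KL{V}{P_2}\le\gamma+\veps^{\prime}_N}\KL{V}{P_1}\big)$, since $X_1^N$ is drawn from $P_1$.

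Finally I would collect the three contributions — the $\exp(-\gamma N)$ from wrong decisions, the negligible $\exp(-N^{3/2})$ from surviving truncation on the good event, and the Sanov exponent from the bad event — and retain the dominant (smallest) exponent, yielding $\pe_2(\Phi_{\text{seq}}(\gamma))\dotleq\exp\big(-N\min\{\gamma,\ \min_{V\in\mathcal{P}(\mathcal{X}):\,\KL{V}{P_2}\le\gamma+\veps^{\prime}_N}\KL{V}{P_1}\}\big)$, as claimed. I expect the main obstacle to be the no-decision step: rigorously matching the root-existence set to the divergence sublevel set while controlling the vanishing slack $\veps^{\prime}_N$ through the continuity of the GJS root (Lemma~\ref{lem:conv_root}), and simultaneously importing the uniform-in-$k$ tail bound on $T_{\text{seq}}$ from Lemma~\ref{lem:stopping_time_2} to handle the survival-past-$N^2$ probability.
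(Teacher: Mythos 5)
Your proposal is correct and follows essentially the same route as the paper's proof: the same truncation to $\Phi_{\text{trunc}}(\gamma)$, the same split into wrong-decision (handled via Lemma~\ref{lem:error_same} and a union bound over $k\le N^2$) and no-decision events, the same conditioning on $\ED{X_1^N}$ with the root-existence set rewritten as the sublevel set $\{\KL{\ED{X_1^N}}{P_2}\ge\gamma+\varepsilon_N\}$, the stopping-time tail bound giving $\exp(-N^{3/2}(1+o(1)))$, and Sanov's theorem on the complementary event. The only cosmetic difference is that you invoke Lemma~\ref{lem:conv_root} by name for the continuity of the root, whereas the paper argues the continuity of $\theta_N^{\star}$ in $\gamma$ directly; the substance is identical.
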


Equipped  with the analysis of the stopping time and error probability, we conclude the proof of Theorem \ref{thm:main_b} by deriving the desired achievable error exponent. We write
\begin{align}
&\textsf{e}_2\left(\Phi_{\text{seq}}(\gamma)\right)=\liminf\limits_{N\to \infty}\frac{-\log \pe_2(\Phi_{\text{seq}}(\gamma)) }{\mathbb{E}_2\left[T_{\text{seq}}\right]}\\
&\geq  \theta^{\star} \liminf\limits_{N\to \infty} \ \min \big\{ \gamma, 
\min_{V \in \mathcal{P}\left(\mathcal{X}\right):  \KL{V}{P_2} \leq \gamma + \varepsilon_N} \KL{V}{P_1} \big\} \label{eq:ashish_com}  \\
&= \theta^{\star} \min \big\{ \gamma, 
\min_{V \in \mathcal{P}\left(\mathcal{X}\right):  \KL{V}{P_2} \leq \gamma} \KL{V}{P_1} \big\}\label{eq:vincent_com}\\
&= \theta^{\star} \gamma  \label{eq:condition}\\
&= \mathrm{GJS}\left(P_1,P_2,\theta^{\star}_\gamma\right). \label{eqn:last_step2}
\end{align}
Here, in (\ref{eq:ashish_com}), we have used Lemmas \ref{lem:stopping_time_2} and \ref{lem:error_2}. The equality in~(\ref{eq:vincent_com}) follows from the continuity of the optimal value of the optimization problem with respect to $\varepsilon_N$ \cite[Sec 5.6]{boyd}. In (\ref{eq:condition}), we   used the fact that 
\begin{equation}
\label{eq:chernoof_binary}
\big\{ \gamma\,\big|\, \min_{V \in \mathcal{P}\left(\mathcal{X}\right):  \KL{V}{P_2} \leq \gamma} \KL{V}{P_1} \geq \gamma \big\} = [0, C\left(P_1,P_2\right)].
\end{equation}
Finally, the last step in~\eqref{eqn:last_step2} is obtained due to the defintion of $\theta_\gamma^\star$ in (\ref{eq:fixed2}). Note that the extension of the results here to the type-I error exponent can be readily done which leads to the statement in Theorem \ref{thm:main_b}.

\subsection{Proof of Theorem \ref{thm:comp_gut}}
\label{subsec:proof_of_comp}
In this part, we denote $\Phi_{\text{GUT},1}$ to denote the test described in (\ref{eq:gutman_test_tweeked}). The subscript $1$ in $\Phi_{\text{GUT},1}$ represents the fact that the test uses the first training sequence. In this subsection, we prove Theorem \ref{thm:comp_gut} which states that the proposed test outperforms the Gutman's test in terms of Bayesian error exponent defined in (\ref{eq:bayesian_exponent}). We first begin with proving a property of the Gutman's test which will be used in the main proof.

For the test described in (\ref{eq:gutman_test_tweeked}), we have 
\begin{align}
 \mathsf{E}_1\left(\Phi_{\text{GUT},1}\right) &\triangleq \liminf_{N \to \infty} \frac{-\log \pe_1(\Phi_{\text{GUT,1}})}{N} \geq \lambda,\quad\mbox{and} \\
 \mathsf{E}_2\left(\Phi_{\text{GUT},1}\right)&\triangleq\liminf_{N \to \infty} \frac{-\log \pe_2(\Phi_{\text{GUT,1}})}{N}\geq F_1\left(\alpha,\lambda\right).
\end{align}
  A schematic of $\min\{  \mathsf{E}_1\left(\Phi_{\text{GUT},1}\right),  \mathsf{E}_2\left(\Phi_{\text{GUT},1}\right)\}$ versus $\lambda$ is depicted in Figure \ref{fig:gut_1}. Two important observations are in order.
\begin{itemize}
\item For $\lambda \geq \frac{1}{\alpha}{\GJS{P_1}{P_2}{\alpha}}$, we have $\min\{ \mathsf{E}_1\left(\Phi_{\text{GUT},1}\right),\mathsf{E}_2\left(\Phi_{\text{GUT},1}\right)\}=0$ as a consequence of (\ref{eq:opt_gut_N}) .
\item  $\lambda_1^{\star}$ in Fig. \ref{fig:gut_1} denotes the maximum achievable Bayesian error exponent as defined in (\ref{eq:bayes_1}). 
\end{itemize}
\begin{figure}[t]
\centering
\begin{tikzpicture}[scale=1.2]
    \draw [<->,thick] (0,1.5) node (yaxis) [above] {$\min\{  \mathsf{E}_1\left(\Phi_{\text{GUT},1}\right),  \mathsf{E}_2\left(\Phi_{\text{GUT},1}\right)\}$}
        |- (3.4,0) node (xaxis) [right] {$ \mathsf{E}_1\left(\Phi_{\text{GUT},1}\right)$};
    \draw[black, line width = 0.50mm] (0,0) coordinate (a_1) -- (1,1) coordinate (a_2);
    \draw[black, line width = 0.50mm]   plot[smooth,domain=1:3] (\x, {0.25*(\x-3)^2});
    \coordinate (c) at (1,1);
	\coordinate (d) at (3,0);    
    \node[circle,inner sep=1.2pt,fill=black,label=below:{$\frac{\mathrm{GJS}\left(P_1,P_2,\alpha\right)}{\alpha}$}] at (3,0) {};
    \draw[dashed] (yaxis |- c) node[left] {$\lambda_1^{\star}$}
        -| (xaxis -| c) node[below] {$\lambda_1^{\star}$};
\end{tikzpicture}
\caption{ The performance of the Gutman's test when the \textit{first} training sequence is used. }
\label{fig:gut_1}
\end{figure}
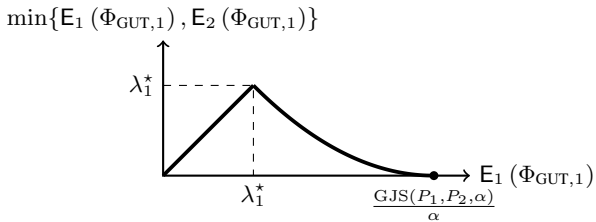
It is important to note that although two training sequences are
produced, only one of them  $X_1^N$  is used in (\ref{eq:gutman_test_tweeked}). One can suggest the following test which resembles the one in (\ref{eq:gutman_test_tweeked}) but uses the second training sequence as
\begin{equation} \label{eq:gutman_test_2}
\Phi_\text{GUT,2}=
\begin{cases} 
  H_2 & \text{if } \GJS{\ED{X_2^N}}{\ED{Y^n}}{\alpha} \leq \lambda \alpha, \\
   H_1       & \text{if } \GJS{\ED{X_2^N}}{\ED{Y^n}}{\alpha} \geq \lambda \alpha ,
\end{cases}.
\end{equation}
Note that the $\Phi_{\text{GUT,1}}$ and $\Phi_{\text{GUT,2}}$ depend on $\alpha$ and $\lambda$, but we do not want to show the dependence due to the notational convenience. The extension of the Gutman's main theorem to the test in (\ref{eq:gutman_test_2}) is given by the following lemma.
\begin{lemma} \label{lem:gut_v2}
Among all decision rules  $\Phi$
such that for all pairs of distribution $(P_1, P_2)\in \mathcal{P}\left(\mathcal{X}\right)^2$, 
\begin{equation}
\liminf_{N \to \infty} \frac{-\log \pe_2(\Phi_\text{GUT,2}(\lambda,\alpha))}{N}\geq \lambda,
\end{equation}
 the test $\Phi_\text{GUT,2}$ in (\ref{eq:gutman_test_2}) satisfies 
 \begin{equation}
 \liminf_{N \to \infty} \frac{-\log \pe_1(\Phi_{\text{GUT,2}})}{N}\geq \liminf_{N \to \infty} \frac{-\log \pe_1(\Phi)}{N}.
 \end{equation} Also, given that $\alpha=\frac{N}{n}$, we obtain 
\begin{align}
&\mathsf{E}_1\left(\Phi_{\text{GUT},2}\right)=\liminf_{N \to \infty} \frac{-\log \pe_1(\Phi_{\text{GUT,2}})}{N} \geq F_2\left(\alpha, \lambda\right),\label{eq:typeI_gut_2}\\
&\mathsf{E}_2\left(\Phi_{\text{GUT},2}\right)=\liminf_{N \to \infty} \frac{-\log \pe_2(\Phi_{\text{GUT,2}})}{N} \geq \lambda, 
\end{align}
where 
\begin{equation}\label{eq:opt_gut_2}
\begin{aligned}
F_2(\alpha, \lambda)\triangleq & \underset{\left(V_1, V_2\right)\in \mathcal{P}\left(\mathcal{X}\right)^2}{\text{min}}
& & \KL{V_1}{P_2} + \frac{1}{\alpha} \KL{V_2}{P_1} \\
& \text{subject to}
& & \frac{1}{\alpha}\mathrm{GJS}\left(V_1, V_2, \alpha \right)\leq \lambda.
\end{aligned}
\end{equation}
\end{lemma}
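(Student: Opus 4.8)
The plan is to prove Lemma~\ref{lem:gut_v2} by a relabeling symmetry that reduces it entirely to the already-established properties of the ``first-sequence'' test $\Phi_{\text{GUT},1}$ in~(\ref{eq:gutman_test_tweeked}). The key observation is that $\Phi_{\text{GUT},2}$ in~(\ref{eq:gutman_test_2}) is nothing but $\Phi_{\text{GUT},1}$ applied to the classification problem in which the two indices are interchanged. Concretely, I would introduce the relabeled problem with distributions $(\tilde{P}_1,\tilde{P}_2)=(P_2,P_1)$, training sequences $(\tilde{X}_1^N,\tilde{X}_2^N)=(X_2^N,X_1^N)$, and hypotheses $\tilde{H}_1=H_2$, $\tilde{H}_2=H_1$. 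Under this relabeling, the first-sequence rule, which declares $\tilde{H}_1$ exactly when $\GJS{\ED{\tilde{X}_1^N}}{\ED{Y^n}}{\alpha}\le\lambda\alpha$, coincides verbatim with~(\ref{eq:gutman_test_2}): it outputs $H_2$ when $\GJS{\ED{X_2^N}}{\ED{Y^n}}{\alpha}\le\lambda\alpha$ and $H_1$ otherwise. I would first record this correspondence at the level of the decision rule, the error events, and the product measures $\probm_i$.

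Granting the correspondence, each conclusion about $\Phi_{\text{GUT},1}$ transports through the swap. The first-sequence test's type-1 exponent (at least $\lambda$) becomes the type-2 exponent of $\Phi_{\text{GUT},2}$, giving $\mathsf{E}_2(\Phi_{\text{GUT},2})\ge\lambda$. Its type-2 exponent (at least $F_1(\alpha,\lambda)$, but now evaluated at the relabeled pair) becomes the type-1 exponent of $\Phi_{\text{GUT},2}$. Substituting $(\tilde{P}_1,\tilde{P}_2)=(P_2,P_1)$ into the definition~(\ref{eq:opt_gut_N}) of $F_1$ converts the objective $\KL{Q_1}{\tilde{P}_1}+\tfrac{1}{\alpha}\KL{Q_2}{\tilde{P}_2}$ into $\KL{V_1}{P_2}+\tfrac{1}{\alpha}\KL{V_2}{P_1}$ under the identification $(Q_1,Q_2)=(V_1,V_2)$, while the feasibility constraint $\tfrac{1}{\alpha}\GJS{Q_1}{Q_2}{\alpha}\le\lambda$ is untouched; this is exactly $F_2(\alpha,\lambda)$ in~(\ref{eq:opt_gut_2}), so $\mathsf{E}_1(\Phi_{\text{GUT},2})\ge F_2(\alpha,\lambda)$.

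The optimality assertion follows in the same way from the optimality half of Theorem~\ref{thm:gut1}: among all non-adaptive rules whose type-1 exponent (in the relabeled problem) is at least $\lambda$ for every pair of distinct distributions, $\Phi_{\text{GUT},1}$ maximizes the type-2 exponent. Reading this statement back through the index interchange says precisely that, among all $\Phi$ with $\liminf_{N\to\infty}\frac{-\log\pe_2(\Phi)}{N}\ge\lambda$, the test $\Phi_{\text{GUT},2}$ attains the largest value of $\liminf_{N\to\infty}\frac{-\log\pe_1(\Phi)}{N}$, which is the first claim of the lemma.

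The argument produces no new large-deviations estimates, so the main point of care is not analytic but structural: I must verify that the symmetry is exact at the level of the probability model. Because both training sequences are always generated, with $X_1^N\sim P_1^N$ and $X_2^N\sim P_2^N$, and only the law of the test sequence changes across hypotheses, I need to check that interchanging $(P_1,P_2)$ simultaneously in the training-sequence laws, in the test-sequence law, and in the hypothesis labels leaves the joint law invariant under swapping the index, so that the type-1 and type-2 error probabilities correspond term by term to the type-2 and type-1 error probabilities of $\Phi_{\text{GUT},1}$ on the relabeled problem. Once this bijection of error events is confirmed, every quantitative bound in the lemma is inherited from~(\ref{eq:gutman_test_tweeked}) and no separate Sanov computation is needed.
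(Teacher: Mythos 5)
Your proposal is correct and matches the paper's intent exactly: the paper states Lemma~\ref{lem:gut_v2} without proof, presenting it as ``the extension of Gutman's main theorem'' to the test in~(\ref{eq:gutman_test_2}), i.e., precisely the relabeling symmetry you spell out. Your explicit verification that swapping $(P_1,P_2)$ turns $F_1$ of~(\ref{eq:opt_gut_N}) into $F_2$ of~(\ref{eq:opt_gut_2}), and that the error events and the optimality claim transport through the swap, is the argument the paper leaves implicit.
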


\begin{figure}[t]
\centering
\begin{tikzpicture}[scale=1.7]
    \draw [<->,thick] (0,1.5) node (yaxis) [above] {$\min\{ \mathsf{E}_1\left(\Phi_{\text{GUT},2}\right),\mathsf{E}_2\left(\Phi_{\text{GUT},2}\right)\}$}
        |- (2.4,0) node (xaxis) [right] {$\mathsf{E}_2\left(\Phi_{\text{GUT},2}\right)$};
    \draw[black, line width = 0.50mm] (0,0) coordinate (a_1) -- (1,1) coordinate (a_2);
    \draw[black, line width = 0.50mm]   plot[smooth,domain=1:2] (\x, {1*(\x-2)^2});
    \coordinate (c) at (1,1);
	\coordinate (d) at (2,0);    
    \node[circle,inner sep=1.2pt,fill=black,label=below:{$\frac{\mathrm{GJS}\left(P_2,P_1,\alpha\right)}{\alpha}$}] at (2,0) {};
    \draw[dashed] (yaxis |- c) node[left] {$\lambda_2^{\star}$}
        -| (xaxis -| c) node[below] {$\lambda_2^{\star}$};
\end{tikzpicture}
\caption{The performance of the Gutman's test when the \textit{second} training sequence is used. }
\label{fig:gut_2}
\end{figure}
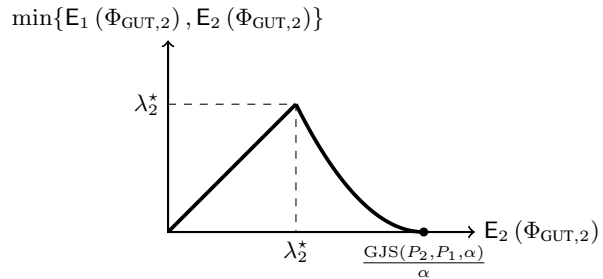
In Figure \ref{fig:gut_2}, we show a schematic plot of $\min\{ \mathsf{E}_1\left(\Phi_{\text{GUT},2}\right),\mathsf{E}_2\left(\Phi_{\text{GUT},2}\right)\}$ versus  $\mathsf{E}_2\left(\Phi_{\text{GUT},2}\right)$. Similar to Figure \ref{fig:gut_1}, we observe that
\begin{itemize}
\item For $\lambda \geq \frac{1}{\alpha}{\GJS{P_2}{P_1}{\alpha}}$, we have $\min\{ \mathsf{E}_1\left(\Phi_{\text{GUT},2}\right),\mathsf{E}_2\left(\Phi_{\text{GUT},2}\right)\}=0$.
\item  Also, $\lambda_2^{\star}$ in Fig. \ref{fig:gut_2} depicts the maximum achievable Bayesian error exponent of $\Phi_{\text{GUT},2}$.
\end{itemize} 
\begin{lemma} \label{lem:bayes_agn}
The maximum achievable Bayesian error exponents of $\Phi_{\text{GUT},1}$ and  $\Phi_{\text{GUT},2}$ are equal. 
\end{lemma}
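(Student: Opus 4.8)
The plan is to show that $\lambda_1^{\star}$ and $\lambda_2^{\star}$ are two expressions for one and the same quantity, namely the best Bayesian error exponent $E^{\star}$ attainable by \emph{any} non-adaptive test at the fixed pair $(P_1,P_2)$. First I would record the elementary fact that, since $\pi_1,\pi_2$ are constants, $\pe(\Phi)$ and $\max\{\pe_1(\Phi),\pe_2(\Phi)\}$ decay at the same exponential rate, so the Bayesian exponent of any test is $\min\{\mathsf{E}_1(\Phi),\mathsf{E}_2(\Phi)\}$, independent of the prior. Combining this with \eqref{eq:bayes_1} and Fig.~\ref{fig:gut_1} gives the representation $\lambda_1^{\star}=\max_{\lambda\ge 0}\min\{\lambda,F_1(\alpha,\lambda)\}$, while the dual statement in Lemma~\ref{lem:gut_v2} together with Fig.~\ref{fig:gut_2} gives $\lambda_2^{\star}=\max_{\lambda\ge 0}\min\{F_2(\alpha,\lambda),\lambda\}$; in both cases the maximum is the crossing point of the increasing diagonal and the decreasing curve $F_i(\alpha,\cdot)$, the monotonicity of $F_i(\alpha,\cdot)$ being clear since increasing the threshold only relaxes the GJS constraint in \eqref{eq:opt_gut_N} and \eqref{eq:opt_gut_2}.

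The heart of the proof is a matched achievability/converse pair for $\lambda_1^{\star}$. Achievability is immediate: $\Phi_{\text{GUT},1}$ is itself non-adaptive, so $E^{\star}\ge\lambda_1^{\star}$. For the converse I would fix an arbitrary non-adaptive test $\Phi$ and put $a:=\mathsf{E}_1(\Phi)$. Then $\Phi$ meets the hypothesis $\mathsf{E}_1(\Phi)\ge a$, so the optimality clause of Theorem~\ref{thm:gut1} gives $\mathsf{E}_2(\Phi)\le\mathsf{E}_2\big(\Phi_{\text{GUT},1}(a,\alpha)\big)=F_1(\alpha,a)$. Consequently the Bayesian exponent of $\Phi$ satisfies
\begin{equation*}
\min\{\mathsf{E}_1(\Phi),\mathsf{E}_2(\Phi)\}\le\min\{a,F_1(\alpha,a)\}\le\max_{\lambda\ge 0}\min\{\lambda,F_1(\alpha,\lambda)\}=\lambda_1^{\star}.
\end{equation*}
Taking the supremum over $\Phi$ yields $E^{\star}\le\lambda_1^{\star}$, and hence $E^{\star}=\lambda_1^{\star}$.

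I would then repeat the argument verbatim with the two hypotheses interchanged, now invoking the optimality clause of Lemma~\ref{lem:gut_v2}: for a non-adaptive $\Phi$ with $b:=\mathsf{E}_2(\Phi)$ one gets $\mathsf{E}_1(\Phi)\le F_2(\alpha,b)$, so $\min\{\mathsf{E}_1(\Phi),\mathsf{E}_2(\Phi)\}\le\lambda_2^{\star}$; combined with the achievability of $\lambda_2^{\star}$ by the non-adaptive test $\Phi_{\text{GUT},2}$ this gives $E^{\star}=\lambda_2^{\star}$. Chaining the two identities produces $\lambda_1^{\star}=E^{\star}=\lambda_2^{\star}$, which is the assertion of the lemma.

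The step I expect to require the most care is the matching of quantifiers. The optimality clauses in Theorem~\ref{thm:gut1} and Lemma~\ref{lem:gut_v2} are stated for tests whose type-I (respectively type-II) exponent bound holds \emph{uniformly over all distinct pairs} $(P_1,P_2)$, whereas $E^{\star}$ is an optimum at a single fixed pair. I would therefore define $E^{\star}$ precisely as the supremum of the Bayesian exponent over exactly the class of non-adaptive tests to which both optimality clauses apply, so that the two converse inequalities are legitimately invoked against the same competitor class; once this bookkeeping is in place, the remaining manipulations are the routine monotonicity and max--min facts noted above.
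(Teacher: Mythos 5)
Your proposal follows essentially the same route as the paper's proof: the paper argues by contradiction (if $\lambda_1^{\star}<\lambda_2^{\star}$, pick $\lambda^{+}$ with $F_2(\alpha,\lambda^{+})=\lambda_1^{\star}$ and observe that $\Phi_{\text{GUT},2}(\lambda^{+},\alpha)$ would then beat $\Phi_{\text{GUT},1}(\lambda_1^{\star},\alpha)$, contradicting the optimality clause of Theorem~\ref{thm:gut1}, and symmetrically via Lemma~\ref{lem:gut_v2}), and your two-sided sandwich through $E^{\star}$ is just the direct, contrapositive-free packaging of exactly that pair of inequalities; both arguments stand or fall with the optimality clauses.

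However, the step you defer as routine bookkeeping is in fact the crux, and the repair you sketch does not go through. The optimality clause of Theorem~\ref{thm:gut1} quantifies only over tests whose type-I exponent is at least $a$ \emph{for every pair of distinct distributions}; setting $a:=\mathsf{E}_1(\Phi)$ at the fixed pair does not place $\Phi$ in that class. Your proposed fix, defining $E^{\star}$ over ``the class of tests to which both optimality clauses apply,'' cannot work: that class would have to consist of tests whose fixed-pair type-I and type-II exponents are simultaneously uniform-over-pairs guarantees, and the very tests needed for the achievability halves of the sandwich are excluded from it. Concretely, the type-I exponent of $\Phi_{\text{GUT},2}(\lambda,\alpha)$ at a pair $(P_1',P_2')$ is $F_2(\alpha,\lambda)$ evaluated at that pair, which equals zero whenever $\GJS{P_2'}{P_1'}{\alpha}<\lambda\alpha$; hence its uniform type-I exponent is $0$, it lies in no admissible class with positive threshold, and the inequality $E^{\star}\geq\lambda_2^{\star}$ loses its witness (symmetrically, $\Phi_{\text{GUT},1}(\lambda_1^{\star},\alpha)$ has uniform type-II exponent $0$, so the other achievability half fails too). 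To be fair, the paper's own proof silently commits the same sin: its competitor $\Phi_{\text{GUT},2}(\lambda^{+},\alpha)$ does not satisfy the uniform type-I hypothesis required to invoke Theorem~\ref{thm:gut1} against it. So you have reproduced the paper's argument at the paper's level of rigor, and you deserve credit for flagging the quantifier mismatch that the paper never mentions; but your closing claim that only routine quantifier-matching remains is wrong---closing that gap genuinely requires a stronger converse (per-pair, or with a pair-dependent threshold) than the one stated in Theorem~\ref{thm:gut1} and Lemma~\ref{lem:gut_v2}.
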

Hence, the Bayesian error exponent of   Gutman's test is agnostic to which training sequence is being used. 
\begin{proof}
Here, we want to prove that $\lambda_1^{\star}=\lambda_2^{\star}$. The proof is by contradiction. Assume that $\lambda_1^{\star}<\lambda_2^{\star}$. Consider the tradeoff of type-I and type-II error exponents in Figure \ref{fig:gut_2}. Then, denote $\lambda^{+}$ as the solution to $F_2\left(\alpha,\lambda^{+}\right)=\lambda_1^{\star}$. Since $\lambda_1^{\star} < \lambda_2^{\star}$ and $F_2\left(\alpha,\lambda\right)$ is decreasing function in $\lambda$, it can be verified that $\lambda^{+}\in \left(\lambda_2^{\star},\frac{1}{\alpha}\GJS{P_2}{P_1}{\alpha}   \right)$. Therefore, we have $\lambda^{+} > \lambda_2 ^{\star}>\lambda_1^{\star}$. Here, we want to prove that $\lambda^{+}$ being greater than $\lambda_1^{\star}$  contradicts with optimality of Gutman's test described in Theorem \ref{thm:gut1}. Assume that $\lambda^+>\lambda_1^\ast$, we can argue that the test based on the second training sequence achieves the type-I error exponent equal to $\lambda_1^{\star}$ while its type-II error exponent is $\lambda^{+}> \lambda_1^{\star}$. This contradicts with the fact that among all tests that achieve the same type-I error exponent, Gutman's test has the largest type-II exponent. By the same argument, it can be shown that $\lambda_2^\ast<\lambda_1^\ast$ contradicting Lemma \ref{lem:gut_v2}. Thus, we have $\lambda_1^{\star}$ in Fig.~\ref{fig:gut_1} is equal to $\lambda_2^{\star}$ in Fig.~\ref{fig:gut_2}.
\end{proof}
Now, the result of Lemma \ref{lem:bayes_agn} allows us to prove Theorem \ref{thm:comp_gut}. Consider  the following two scenarios separately.
\begin{enumerate}
\item \underline{$\theta^{\star}_\gamma\leq \beta^{\star}_\gamma$}: Given that $\theta^{\star}\leq \beta^{\star}$, we have $\gamma=\frac{1}{\theta^{\star}}{\GJS{P_1}{P_2}{\theta^{\star}}}$ as shown in Theorem \ref{thm:main_b}, and $\gamma$ is the maximum achiavable exponent of $\Phi_\text{seq}(\gamma)$. Considering $\alpha=\theta^{\star}$ for Gutman's test, 
$$
\lambda_1^\star \stackrel{(a)}{<} \GJS{P_1}{P_2}{\alpha}/\alpha\stackrel{(b)}{=} \GJS{P_1}{P_2}{\theta^{\star}_\gamma }/\theta^{\star}_\gamma \stackrel{(c)}{=}\gamma.
$$
Here, $(a)$ is by Figure \ref{fig:gut_1}, $(b)$ follows since $\alpha=\min\{\theta^{\star}_\gamma, \beta^{\star}_{\gamma}\}=\theta^\star_\gamma$, and $(c)$ is due to Theorem \ref{thm:main_b}. 
\item \underline{$\theta^{\star}_\gamma > \beta^{\star}_\gamma$}: In this case, for the sequential test we have  $\gamma=\frac{1}{\beta^{\star}}{\GJS{P_2}{P_1}{\beta^{\star}}}$, and . 
$$
\lambda_2^\star \stackrel{(a)}{<} \GJS{P_2}{P_1}{\alpha}/\alpha\stackrel{(b)}{=} \GJS{P_2}{P_1}{\beta^{\star}_\gamma }/\beta^{\star}_\gamma \stackrel{(c)}{=}\gamma.
$$
Here, $(a)$ is by Figure \ref{fig:gut_2}, $(b)$ follows since $\alpha=\min\{\theta^{\star}_\gamma, \beta^{\star}_{\gamma}\}=\beta^\star_\gamma$, and $(c)$ is due to Theorem \ref{thm:main_b}. 
 Then, using the fact that $\lambda_2^{\star}=\lambda_1^{\star}=\textsf{e}^{\pi}_{\text{Bayesian}}\left(\Phi_{\text{GUT}}(\lambda^\star,\alpha)\right)$, the claim stated in Theorem \ref{thm:comp_gut} is proved. 
\end{enumerate}
\subsection{Proof of the results for multi-class classification problem}
\label{sec:proof_multi}
This section consists of three parts: stopping time analysis, derivation of the error probability, and finally characterizing the achievable error exponent.

Our main result on the expected value $T_{\text{seq}}^{(M)}$ is presented in the next lemma.
\begin{lemma}
\label{lem:stop_time_multi}
Denote $\theta^{\star}_{i\left(j\right),\gamma}$ as the solution of the equation
\begin{equation} \label{eq:fix_point_multi}
\GJS{P_j}{P_i}{\theta^{\star}_{i\left(j\right),\gamma}}=\gamma \theta^{\star}_{i\left(j\right),\gamma}, \quad  j \in \left[ M \right] , i\neq j.
\end{equation}
Then, the expected value of $T_{\text{seq}}^{(M)}$ satisfies
\begin{equation}
\mathbb{E}_i\left[T_{\text{seq}}^{(M)}\right]=\frac{N}{\min_{j\in \left[M\right],j\neq i}\{\theta^{\star}_{i(j),\gamma}\}}\left(1+o\left(1\right)\right),
\end{equation}
for all $i \in \{1,\hdots,M\}$.
\begin{proof}
Let us assume that the test sequence generated from $P_1$, i.e., belongs to Class $1$. The extension to other cases is straightforward. Define the set
\begin{equation}
\begin{aligned}
    \mathcal{S}_{1}^{(M)} \triangleq  &\bigg\{\left( \ED{X_2^N}, \hdots,\ED{X_M^N} \right) \bigg| \left( \ED{X_2^N}, \hdots,\ED{X_M^N} \right) \in \\ 
    &\prod_{i=2}^{M} \left\{\ED{X_i^N} \in \mathcal{T}_{N} \,\Big| \,\KL{\ED{X_i^N}}{P_1}\geq \gamma \right\} \bigg\} 
\end{aligned}
\end{equation}
Conditioned on $\mathcal{S}_{1}^{(M)}$, we can find  $\theta_{N,1(j)}^{\star}$ such that $\theta_{N,1(j)}^{\star}$ satisfies
\begin{equation}
\GJS{\ED{X_j^N}}{P_1}{\theta_{N,1(j)}^{\star}}=\gamma \theta_{N,1(j)}^{\star} \quad j \in \{2,\hdots,M\}.
\end{equation}
Also, define
\begin{align} 
\theta^{\star}_{N,1} &\triangleq \min_{j\in \{2,\hdots,M\}}\{ \theta_{N,1(j)}^{\star} \},  \quad\mbox{and}\label{eq:def_stop_fixed_train}\\
j^{\star}&\triangleq \argmin_{j\in \{2,\hdots,M\}}\{ \theta_{N,1(j)}^{\star} \}. \label{eq:def_stop_fixed_train_indx}
\end{align}
In addition, we substitute $P_2$ with $P_1$ and $\ED{X_1^N}$ with $\ED{X^N_{j^{\star}}}$ in Lemma \ref{lem:alpha_p_n} to obtain  $\theta_{N,1(j^{\star})}^{+}$ and $\theta_{N,1(j^{\star})}^{-}$ following the same procedure as described in Lemma \ref{lem:alpha_p_n}.  
We start with providing a lower bound on the expected value of the stopping time. We can write
\begin{align}
&\mathbb{E}_1\left[T_{\text{seq}}^{(M)}\big| \{\ED{X_j^N}\}_{2\leq j\leq M},  \mathcal{S}_{1}^{(M)} \right]\nonumber\\
&=\sum_{k=1}^{\infty}\probm_1\left(T_{\text{seq}}^{(M)}\geq k\big|  \{\ED{X_j^N}\}_{2\leq j\leq M},\mathcal{S}_{1}^{(M)} \right)\\
&\geq \sum_{k=1}^{\frac{N}{\theta_{N,1(j^{\star})}^{+}}}\probm_1\left(T_{\text{seq}}^{(M)}\geq k\big| \{\ED{X_j^N}\}_{2\leq j\leq M},  \mathcal{S}_{1}^{(M)}\right)\\
& \geq \frac{N}{\theta_{N,1(j^{\star})}^{+}}   \big(1-\probm_1\big(1\leq T_{\text{seq}}^{(M)}\leq \frac{N}{\theta_{N,1(j^{\star})}^{+}}\big| \{\ED{X_j^N}\}_{2\leq j\leq M},  \mathcal{S}_{1}^{(M)} \big)\big)\\
&= \frac{N}{\theta_{N,1(j^{\star})}^{+}} \big(1- \nonumber \\
&  \probm_1\big( \big(\frac{\gamma}{2\log 2}\big)^2 N \leq T_{\text{seq}}^{(M)}\leq \frac{N}{\theta_{N,1(j^{\star})}^{+}}\big|  \{\ED{X_j^N}\}_{2\leq j\leq M}, \mathcal{S}_{1}^{(M)}\big)\big) \label{eq:lower_stop_m_1}				
\end{align}
where in the last step we have used Lemma \ref{lem:lower_val_stop}. Moreover, define
\begin{align}
\tau_i^{(M)}&= \inf \left\lbrace n\geq 1 ~:~ n \GJS{\ED{X_i^N}}{\ED{Y^n}}{\frac{N}{n}} \geq \gamma N \right\rbrace
\end{align} 
as the time that empirical GJS divergence between the test sequences and the $i$-th training sequence exceeds the threshold. 
 Then, we upper bound the probability term in \eqref{eq:lower_stop_m_1} as shown on the top of next page in \eqref{eq:first-step-auxiallry}-\eqref{eq:stop_m_final_lower}.
\begin{figure*}[!t]
\normalsize
\setcounter{MYtempeqncnt}{\value{equation}}
\setcounter{equation}{133}
\begin{small}

\begin{align}
&\probm_1\big( (\frac{\gamma}{2\log 2})^2 N \leq T_{\text{seq}}^{(M)} \leq \frac{N}{\theta_{N,1(j^{\star})}^{+}} \bigg| \{\ED{X_j^N}\}_{2\leq j\leq M},  \mathcal{S}_{1}^{(M)} \big) \nonumber\\
&\leq \probm_1\big( (\frac{\gamma}{2\log 2})^2 N \leq \max\{\tau^{(M)}_2,\hdots,\tau^{(M)}_M\} \leq \frac{N}{\theta_{N,1(j^{\star})}^{+}},\bigcap\limits_{n=\left(\frac{\lambda}{2\log 2}\right)^2 N}^{ \frac{N}{\theta_{N,1(j^{\star})}^{+}}}\left\lbrace n \GJS{\ED{X_1^N}}{\ED{Y^n}}{\frac{N}{n}} \leq \gamma N\right)\big| \{\ED{X_j^N}\}_{2\leq j\leq M},  \mathcal{S}_{1}^{(M)} \big)\nonumber\\
&\qquad+ \probm_1\big(\bigcup\limits_{n=\left(\frac{\gamma}{2\log 2}\right)^2 N}^{ \frac{N}{\theta_{N,1(j^{\star})}^{+}}} \left\lbrace n\GJS{\ED{X_1^N}}{\ED{Y^n}}{\frac{N}{n}}\geq \gamma N\right)\big| \{\ED{X_j^N}\}_{2\leq j\leq M},  \mathcal{S}_{1}^{(M)}  \big) \label{eq:first-step-auxiallry} \\
&\leq \probm_1\left( \left(\frac{\gamma}{2\log 2}\right)^2 N \leq \tau^{(M)}_{j^{\star}} \leq \frac{N}{\theta_{N,1(j^{\star})}^{+}}\bigg| \{\ED{X_j^N}\}_{2\leq j\leq M},  \mathcal{S}_{1}^{(M)}\right) + \probm_1\big(\bigcup\limits_{n=\left(\frac{\gamma}{2\log 2}\right)^2 N}^{ \frac{N}{\theta_{N,1(j^{\star})}^{+}}} \big\{ n\GJS{\ED{X_1^N}}{\ED{Y^n}}{\frac{N}{n}}\geq \gamma N \big\} \big)\\
&\leq \left(\frac{N}{\theta_{N,1(j^{\star})}^{+}}+1\right)^{|\mathcal{X}|+1} \exp\left(- \left(\frac{\gamma}{2\log2}\right)^2 \sqrt{N}\right) +  \frac{N}{\theta_{N,1(j^{\star})}^{+}} \exp\left(-\gamma N\right) \left( \frac{N}{\theta_{N,1(j^{\star})}^{+}}  +N+1\right)^{|\mathcal{X}|} \label{eq:stop_m_before} \\ 
&= o\left(1\right). \label{eq:stop_m_final_lower}
\end{align}
\end{small}
\setcounter{equation}{\value{MYtempeqncnt}}
\hrulefill
\vspace*{-4pt}
\end{figure*}
\addtocounter{equation}{4} 
Here, the first term on the LHS of (\ref{eq:stop_m_before}) is obtained by the same  reasons  as those for   (\ref{eq:stop_time_lower_step_1})-(\ref{eq:stop_time_lower_step_final}). Also, the second term on the LHS of (\ref{eq:stop_m_before}) follows from Lemma \ref{lem:error_same}. Thus, we conclude from (\ref{eq:lower_stop_m_1}) and (\ref{eq:stop_m_final_lower}) that
\begin{equation}
\mathbb{E}_1\left[T_{\text{seq}}^{(M)}\big|  \{\ED{X_j^N}\}_{2\leq j\leq M}, \mathcal{S}_{1}^{(M)}  \right] \geq \frac{N}{\theta_{N,1(j^{\star})}^{+}}\left(1-o\left(1\right)\right).
\end{equation} 
In the top of next page in \eqref{eq:stop-upper-bound-start}-\eqref{eq:stop_m_upper_bound_last}, an upper bound on the tail probability of $T_{\text{seq}}^{(M)}$ is derived which leads to an upper bound on the expected value of $T_{\text{seq}}^{(M)}$.
\begin{figure*}[!t]
\normalsize
\setcounter{MYtempeqncnt}{\value{equation}}
\setcounter{equation}{138}
\begin{small}

\begin{align}
&\probm_1\left(T_{\text{seq}}^{(M)}> \frac{N}{\theta_{N,1(j^{\star})}^{-}}\bigg|   \{\ED{X_j^N}\}_{2\leq j\leq M}, \mathcal{S}_{1}^{(M)} \right) =\sum_{k= \frac{N}{\theta_{N,1(j^{\star})}^{-}}}^{\infty} \probm_1\left(T_{\text{seq}}^{(M)}=k+1\bigg|  \{\ED{X_j^N}\}_{2\leq j\leq M},  \mathcal{S}_{1}^{(M)} \right)\label{eq:stop-upper-bound-start}\\
&\leq  \sum_{k= \frac{N}{\theta_{N,1(j^{\star})}^{-}}}^{\infty} \sum\limits_{ \left(i_1,i_2\right) \in \left[M\right]^2,i_1\neq i_2}\probm_1\left( \tau_{i_1}^{(M)}>k,\tau_{i_2}^{(M)}>k \bigg|  \{\ED{X_j^N}\}_{2\leq j\leq M}, \mathcal{S}_{1}^{(M)}  \right) \label{eq:stop_m_upper_bound_} \\
&\leq \sum\limits_{ \left(i_1,i_2\right) \in \left[M\right]^2,i_1\neq i_2,i_1=1} \sum_{k= \frac{N}{\theta_{N,1(j^{\star})}^{-}}}^{\infty}   \probm_1\left(k \GJS{\ED{X_{i_2}^N}}{\ED{Y^k}}{\frac{N}{k}}\leq \gamma  N\bigg|  \{\ED{X_j^N}\}_{2\leq j\leq M},\mathcal{S}_{1}^{(M)}  \right) \nonumber \\
&\qquad+  \sum\limits_{ \left(i_1,i_2\right) \in \left[M\right]^2,i_1\neq i_2,i_1 \neq 1} \sum_{k= \frac{N}{\theta_{N,1(j^{\star})}^{-}}}^{\infty}  \probm_1\left(k \GJS{\ED{X_{i_1}^N}}{\ED{Y^k}}{\frac{N}{k}} \leq \gamma N\bigg|  \{\ED{X_j^N}\}_{2\leq j\leq M},  \mathcal{S}_{1}^{(M)} \right)\\
&\leq   \frac{M\left(M-1\right)}{2} \exp\left(-\frac{\sqrt{N}}{\theta_{N,1(j^{\star})}^{-}}\left(1+o(1)\right)\right) \label{eq:stop_m_upper_bound_last}.
\end{align}
\end{small}
\setcounter{equation}{\value{MYtempeqncnt}}
\hrulefill
\vspace*{-4pt}
\end{figure*}
\addtocounter{equation}{4} 
Here, (\ref{eq:stop_m_upper_bound_}) is obtained using the fact that the event $ \{ T_{\text{seq}}^{(M)}=k+1 \}$ has the same probability as the event that there exists at least two indices $\left(i,j\right) \in \left[M\right]^2$ such that $\tau_i^{(M)}>k$ and $\tau_j^{(M)}>k$.  Then, (\ref{eq:stop_m_upper_bound_last}) follows from the definition of $j^{\star}$ in (\ref{eq:def_stop_fixed_train_indx}) which attains the minima. Then, following the same line of reasoning as (\ref{eq:exp_upper_bound}) we obtain
\begin{equation}
\mathbb{E}_1\left[T^{(M)}_{\text{seq}}\,\Big|\,  \{\ED{X_j^N}\}_{2\leq j\leq M},  \mathcal{S}_{1}^{(M)}  \right]\leq \frac{N}{\theta_{N,1(j^{\star})}^{-}}\left(1+o\left(1\right)\right)
\end{equation} 
Finally, note that as it was proved in Lemma \ref{lem:conv_root}, $\theta^{\star}_{N,1(j)}$ converges in probability to $\theta_{1(j)}^{\star}$  for $j \in \{2,\hdots,M\}$ as $N$ goes to infinity. Also, since $\min$ function is continuous in its argument, the continuous mapping theorem 
\cite{billingsley2008probability} implies that $\theta^{\star}_{N,1}$ converges in probability to $\min_{j\in \left[M\right],j\neq 1}\{\theta^{\star}_{1(j),\gamma}\}$. To conclude the proof, we write
\begin{align}
&\mathbb{E}_1\left[T_{\text{seq}}^{(M)}\right]=\nonumber\\
&\mathbb{E}_1 [\mathbb{E}_1[T_{\text{seq}}^{(M)}\,|\, \{\ED{X_j^N}\}_{2\leq j\leq M} ]  \mathbbm{1}\{\{\ED{X_j^N}\}_{2\leq j\leq M}\in \mathcal{S}_{1}^{(M)} \}] + \nonumber\\
& \mathbb{E}_1[\mathbb{E}_1[T_{\text{seq}}\,|\,  \{\ED{X_j^N}\}_{2\leq j\leq M}] \mathbbm{1}\{\{\ED{X_j^N}\}_{2\leq j\leq M}\notin \mathcal{S}_{1}^{(M)} \}] \\
& \leq  \frac{N}{\min_{j\in \left[M\right],j\neq 1}\{\theta^{\star}_{1(j),\gamma}\}} \left(1+o(1)\right) + N^2\mathrm{P}_1\left(\ED{X_1^N}\in \mathcal{S}_{1}^{(M)} \right) \label{eq:cov-prob-to-expect-multi}\\
&\leq  \frac{N}{\min_{j\in \left[M\right],j\neq 1}\{\theta^{\star}_{1(j),\gamma}\}} \left(1+o(1)\right) + o(1)
\end{align}
Note in the second term of the final step we have used (\ref{eq:chernoof_multi}) to write 
\begin{equation}
1-\probm_1\left(\{\ED{X_j^N}\}_{2\leq j\leq M} \in \mathcal{S}^{(M)}_1\right) \dotleq  \exp\left(-N\gamma\right)
\end{equation}
and $T_{\text{seq}}^{M}$ is, almost surely, at most $N^2$, which is subexponential.
\end{proof}
\end{lemma}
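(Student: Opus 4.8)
The plan is to reduce to the case $i=1$ without loss of generality and to exploit that, under $H_1$, the test stops exactly when all $M-1$ incorrect classes have accumulated sufficient evidence. For a wrong class $j\neq 1$ the score $k\,\GJS{\ED{X_j^N}}{\ED{Y^k}}{\frac{N}{k}}$ concentrates around $k\,\GJS{P_j}{P_1}{\frac{N}{k}}$, which first reaches the level $\gamma N$ at $k\approx N/\theta^\star_{1(j),\gamma}$, since the substitution $k=N/\theta$ turns the crossing condition into precisely the fixed-point equation $\GJS{P_j}{P_1}{\theta}=\gamma\theta$. Because a larger $\theta^\star$ corresponds to an earlier crossing, the last wrong class to cross---and hence the one that dictates the stopping time---is the one with the smallest $\theta^\star$, which is why the answer is $N/\min_{j\neq 1}\theta^\star_{1(j),\gamma}$ rather than a maximum.

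To make this rigorous I would condition on the event $\mathcal{S}_1^{(M)}$ that every empirical training type satisfies $\KL{\ED{X_j^N}}{P_1}\geq\gamma$, so that by Part 3 of Lemma \ref{lem:g} each empirical crossing exponent $\theta^\star_{N,1(j)}$, defined by $\GJS{\ED{X_j^N}}{P_1}{\theta^\star_{N,1(j)}}=\gamma\theta^\star_{N,1(j)}$, exists. Writing $\theta^\star_{N,1}=\min_{j\neq 1}\theta^\star_{N,1(j)}$ with minimizer $j^\star$, I would invoke Lemma \ref{lem:alpha_p_n} on the pair $(\ED{X_{j^\star}^N},P_1)$ to obtain sandwiching exponents $\theta^+_{N,1(j^\star)}$ and $\theta^-_{N,1(j^\star)}$ with the stated $O(\log N/N^{1/4})$ and $O(1/N^{1/4})$ gaps. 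The conditional lower bound $\frac{N}{\theta^+_{N,1(j^\star)}}(1-o(1))$ then mirrors Part 1 of Lemma \ref{lem:stopping_time_2}: the probability of stopping before $N/\theta^+_{N,1(j^\star)}$ is split into the event that the correct class crosses prematurely, bounded by Lemma \ref{lem:error_same}, and the event that the slowest wrong class crosses prematurely, controlled by the sandwiching property of $\theta^+_{N,1(j^\star)}$ in Lemma \ref{lem:alpha_p_n}, both combined with the almost-sure floor $(\gamma/(2\log 2))^2N$ from Lemma \ref{lem:lower_val_stop}.

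The main obstacle is the conditional upper bound, where the multi-class combinatorics first appear. The crucial observation is that $\{T^{(M)}_{\text{seq}}=k+1\}$ forces $|\Psi_k|\leq M-2$, so at least two classes have not crossed by time $k$; introducing the per-class crossing times $\tau^{(M)}_i=\inf\{n:\,n\,\GJS{\ED{X_i^N}}{\ED{Y^n}}{\frac{N}{n}}\geq\gamma N\}$, I would bound $\probm_1(T^{(M)}_{\text{seq}}=k+1)$ by a union bound over the $\binom{M}{2}$ pairs $(i_1,i_2)$ of $\probm_1(\tau^{(M)}_{i_1}>k,\,\tau^{(M)}_{i_2}>k)$. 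Every such pair contains at least one wrong class, and its survival probability $\probm_1(k\,\GJS{\ED{X_i^N}}{\ED{Y^k}}{\frac{N}{k}}\leq\gamma N)$ I would control by the method of types, using the $\theta^-$ property of Lemma \ref{lem:alpha_p_n} together with monotonicity of $k\mapsto k\,\GJS{P}{Q}{\frac{N}{k}}$ to obtain decay of order $\exp(-k/\sqrt{N})$ for $k\geq N/\theta^-_{N,1(j^\star)}$. Summing this tail over $k$ gives the conditional upper bound $\frac{N}{\theta^-_{N,1(j^\star)}}(1+o(1))$, exactly as in Part 2 of Lemma \ref{lem:stopping_time_2}; the only new feature is the $M(M-1)/2$ pairing factor, which is polynomial and therefore invisible at the exponential scale.

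Finally I would remove the conditioning. By Lemma \ref{lem:conv_root} each $\theta^\star_{N,1(j)}$ converges in probability to $\theta^\star_{1(j),\gamma}$, and since the minimum is continuous the continuous mapping theorem yields $\theta^\star_{N,1}\to\min_{j\neq 1}\theta^\star_{1(j),\gamma}$ in probability; the $O(N^{-1/4})$ gaps guarantee that $\theta^+_{N,1(j^\star)}$ and $\theta^-_{N,1(j^\star)}$ share the same limit. Converting this convergence in probability into convergence of the normalized conditional expectation as in Part 3 of Lemma \ref{lem:stopping_time_2}, and bounding the complementary contribution by the truncation at $N^2$ together with a Sanov/Chernoff estimate $\probm_1\big((\mathcal{S}_1^{(M)})^c\big)\dotleq\exp(-N\gamma)$---valid because $\gamma\le C(P_j,P_1)$ for every $j\neq 1$, so that $N^2\probm_1\big((\mathcal{S}_1^{(M)})^c\big)=o(1)$---I would conclude $\mathbb{E}_1[T^{(M)}_{\text{seq}}]=\frac{N}{\min_{j\neq 1}\theta^\star_{1(j),\gamma}}(1+o(1))$, with the general index $i$ following by relabeling.
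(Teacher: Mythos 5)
Your proposal is correct and follows essentially the same route as the paper's proof: the same conditioning on $\mathcal{S}_1^{(M)}$, the same sandwiching via Lemma \ref{lem:alpha_p_n} applied to $(\ED{X_{j^\star}^N},P_1)$, the same pairwise union bound exploiting that $\{T^{(M)}_{\text{seq}}=k+1\}$ forces two classes to survive past $k$, and the same deconditioning via Lemma \ref{lem:conv_root}, the continuous mapping theorem, and the $N^2\exp(-N\gamma)=o(1)$ estimate. No gaps to report.
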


\begin{lemma}
\label{lem:error_multi}
The error probability of the test $\Phi_{\text{seq}}^{(M)}(\gamma)$ is given by
\begin{equation}
\begin{aligned}
&\pe_i(\Phi^{(M)}_{\text{seq}}(\gamma)) \dotleq \\
&\exp\big(-N\min\big\{ \gamma, \min_{j \in [M],j\neq i}  \min_{V:  \KL{V}{P_j} \leq \gamma + \varepsilon_{j,N}} \KL{V}{P_i} \big\}\big),
\end{aligned}
\end{equation}
where $\varepsilon_{j,N}\geq 0 $ is a sequence for each $j\in \{1,...,M\}$ converging to zero as $N$ tends to infinity for all $i \in \left[M\right]$.
\begin{proof}
We define a test $\Phi_{\text{trunc}}^{(M)}$ to be a truncated version of $\Phi_{\text{seq}}^{(M)}$ in an exactly similar way as we defined $\Phi_{\text{trunc}}^{(M)}$ in the proof of Lemma \ref{lem:error_2}.  Then, we can write
\begin{align}
 &\pe_1(\Phi_{\text{seq}}^{(M)})\leq \pe_1(\Phi_{\text{trunc}}^{(M)})\\
&\leq \probm_1\left( \bigcup\limits_{n=1}^{N^2}\left\lbrace n\GJS{\ED{X_1^N}}{\ED{Y^n}}{\frac{N}{n}}\geq \gamma N\right\rbrace \right) \nonumber \\
&\hspace{1.2cm}+ \probm_1\left( T^{(M)}_{\text{seq}}\geq N^2 \right)\label{eq:error_multi_22}.
\end{align}
Note that the first and the second term in (\ref{eq:error_multi_22}) correspond to the event ``wrong decision" and the ``no decision".   Following the same line of reasoning as in the proof of Lemma \ref{lem:error_2}, we consider the event $ \bigcap\limits_{i=2}^{M} \big\{\KL{\ED{X_i^N}}{P_1}> \gamma + \varepsilon_{i,N} \big\}$ where $\varepsilon_{i,N} \geq 0$ is a sequence goes to zero as $N$ goes to infinity. Conditioned on this event we can conclude that there exists  $\theta_{N,1\left(i\right)}^{\star}$  which satisfies equation $\GJS{\ED{X_i^N}}{P_1}{\theta_{N,1\left(i\right)}^{\star}}=\gamma \theta_{N,1\left(i\right)}^{\star}$ for $i \in \{2,\hdots,M\}$ by Part 3 of Lemma \ref{lem:g} . Define $\theta_{N,1\left(i^{\star}\right)}^{-}$ following the method described in Lemma \ref{lem:alpha_p_n}. Introducing $\varepsilon_{i,N}$ let us have $N^2\geq {N}/{\theta_{N,1(i)}^{-}}$ for $i \in \{2,\hdots,M\}$. Also, let $\theta^{\star}_{N,1\left(i^{\star}\right)} \triangleq \min_{i\in \{2,\hdots,M\}}\{ \theta_{N,1(i)}^{\star} \}$.  Then, we can write
\begin{align}
&\probm_1\big( T^{(M)}_{\text{seq}}\geq N^2 \big|  \{\ED{X_j^N}\}_{2\leq j\leq M},\bigcap_{i=2}^{M} \big\{ \KL{\ED{X_i^N}}{P_1}> \gamma + \varepsilon_{i,N} \big\} \big) \nonumber \\
 &\leq \sum_{k\geq N^2 }  \exp\left(|\mathcal{X}|\log\left(k+1\right)\right) \exp\left(-\frac{k}{\sqrt{N}}\right) \label{eq:err_prob_line1}\\
& \leq \frac{M\left(M-1\right)}{2} \exp\left(-\frac{N^2}{\sqrt{N}}\left(1+o\left(1\right)\right)\right)\\
& = \frac{M\left(M-1\right)}{2} \exp\left(-N^{\frac{3}{2}}\left(1+o\left(1\right)\right)\right) \label{eq:err_prob_linef}
\end{align}
where in (\ref{eq:err_prob_line1}) we have used (\ref{eq:stop-upper-bound-start})-(\ref{eq:stop_m_upper_bound_last}) and the fact that $\theta_{N,1\left(i^{\star}\right)}^{-}\geq {N}/{N^2}$.  We obtain
\begin{align}
&\probm_1\left(T^{(M)}_{\text{seq}}\geq N^2\right)\leq\nonumber\\
& \mathbb{E}_1\big[\probm_1\left( T^{(M)}_{\text{seq}}\geq N^2 \Big|   \{\ED{X_j^N}\}_{2\leq j\leq M} \right)\times \nonumber \\
& \hspace{2cm} \mathbbm{1}\lbrace\bigcap_{i=2}^{M} \left\lbrace \KL{\ED{X_i^N}}{P_1}> \gamma + \varepsilon_{i,N} \right\rbrace \rbrace\big]  \nonumber \\
&+ \probm_1\left( \bigcup_{i=2}^{M} \left\lbrace \KL{\ED{X_i^N}}{P_1}\leq\gamma + \varepsilon_{i,N} \right\rbrace\right) \nonumber \\
& \leq  \frac{M\left(M-1\right)}{2} \exp\left(-N^{\frac{3}{2}}\left(1+o\left(1\right)\right)\right) + \nonumber \\
&\sum_{i=2}^{M} \exp\left(- N\min_{V \in \mathcal{P}\left(\mathcal{X}\right):  \KL{V}{P_1} \leq \gamma + \varepsilon_{i,N}} \KL{V}{P_i}\right) \label{eq:error_multi_2}
\end{align}
Plugging (\ref{eq:error_multi_2}) into (\ref{eq:error_multi_22}), we get
\begin{align}
&\pe_1(\Phi_{\text{seq}}^{(M)}(\gamma)) 
 \leq  N^2 \exp\left(-\gamma N\right) \left(N+N^2+1\right)^{|\mathcal{X}|} \nonumber \\
& +   \frac{M\left(M-1\right)}{2} \exp\left(-N^{\frac{3}{2}}\left(1+o\left(1\right)\right)\right) \nonumber\\
 &\qquad  +  \sum_{i=2}^{M} \exp\left(-N \min_{V \in \mathcal{P}\left(\mathcal{X}\right):  \KL{V}{P_1} \leq \gamma + \varepsilon_{i,N}}  \KL{V}{P_i}\right)\\
&\dotleq  \exp\big(-N\min\big\{ \gamma, \min_{i \in [M]\setminus\{1\}}  \min_{V \in \mathcal{P}\left(\mathcal{X}\right):  \KL{V}{P_1} \leq \gamma + \varepsilon_{i,N}} \KL{V}{P_i} \big\}\big)
\end{align}
\end{proof}
\end{lemma}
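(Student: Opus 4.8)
The plan is to mirror the two-stage decomposition developed for the binary case in Lemma~\ref{lem:error_2}. First I would replace $\Phi_{\text{seq}}^{(M)}(\gamma)$ by a truncated test $\Phi_{\text{trunc}}^{(M)}(\gamma)$ that runs the identical rule on the window $[1,N^2]$ but declares an error if no terminal decision has been made by time $N^2$; since truncation can only increase the error probability, it suffices to bound $\pe_i(\Phi_{\text{trunc}}^{(M)})$. Assuming without loss of generality that $H_1$ is the true hypothesis, the error event splits into a \emph{wrong-decision} event, in which the correct index $1$ enters $\Psi_n$ for some $n\le N^2$ (equivalently $n\GJS{\ED{X_1^N}}{\ED{Y^n}}{N/n}\ge\gamma N$), and a \emph{no-decision} event $\{T_{\text{seq}}^{(M)}\ge N^2\}$. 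The union bound then gives
\begin{align*}
\pe_1(\Phi_{\text{trunc}}^{(M)})\le \probm_1\Big(\bigcup_{n=1}^{N^2}\big\{n\GJS{\ED{X_1^N}}{\ED{Y^n}}{N/n}\ge\gamma N\big\}\Big)+\probm_1\big(T_{\text{seq}}^{(M)}\ge N^2\big).
\end{align*}

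For the wrong-decision term I would use that under $H_1$ the sequences $X_1^N$ and $Y^n$ are drawn from the \emph{same} law $P_1$, so Lemma~\ref{lem:error_same} bounds each summand by $(n+N+1)^{|\mathcal{X}|}\exp(-\gamma N)$; summing over the $N^2$ indices yields a contribution of order $N^2(N+N^2+1)^{|\mathcal{X}|}\exp(-\gamma N)\dotleq\exp(-\gamma N)$, which supplies the $\gamma$ entry of the final exponent.

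The no-decision term is where the real work lies, and I expect it to be the main obstacle. The strategy is to condition on the separation event $\bigcap_{i=2}^{M}\{\KL{\ED{X_i^N}}{P_1}>\gamma+\varepsilon_{i,N}\}$, on which Part~3 of Lemma~\ref{lem:g} guarantees that each equation $\GJS{\ED{X_i^N}}{P_1}{\theta}=\gamma\theta$ has a unique root $\theta^\star_{N,1(i)}$; the slack $\varepsilon_{i,N}$ is chosen precisely so that the corresponding $\theta^{-}_{N,1(i)}$ from Lemma~\ref{lem:alpha_p_n} obeys $N/\theta^{-}_{N,1(i)}\le N^2$. Writing $T_{\text{seq}}^{(M)}$ as the $(M-1)$-th order statistic of the individual crossing times $\tau_i^{(M)}$, a no-decision by time $N^2$ forces at least two of the $\tau_i^{(M)}$ to exceed $N^2$; reusing the tail estimate \eqref{eq:stop-upper-bound-start}--\eqref{eq:stop_m_upper_bound_last} from the stopping-time analysis of Lemma~\ref{lem:stop_time_multi} bounds the conditioned probability by a quantity of order $\tfrac{M(M-1)}{2}\exp(-N^{3/2})$, which decays faster than any $\exp(-cN)$. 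The complementary event, where some $\KL{\ED{X_i^N}}{P_1}\le\gamma+\varepsilon_{i,N}$, is controlled by Sanov's theorem applied to the training type $\ED{X_i^N}$ (which is generated from $P_i$ even under $H_1$), contributing $\sum_{i=2}^{M}\exp\!\big(-N\min_{V:\KL{V}{P_1}\le\gamma+\varepsilon_{i,N}}\KL{V}{P_i}\big)$.

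Collecting the three contributions and passing to the $\dotleq$ limit, the $\exp(-N^{3/2})$ piece is asymptotically negligible, so the exponent is dictated by the minimum of $\gamma$ and the Sanov exponents, giving
\begin{align*}
\pe_1(\Phi_{\text{seq}}^{(M)}(\gamma))\dotleq\exp\!\Big(-N\min\Big\{\gamma,\ \min_{j\in[M]\setminus\{1\}}\ \min_{V:\KL{V}{P_1}\le\gamma+\varepsilon_{j,N}}\KL{V}{P_j}\Big\}\Big),
\end{align*}
which is the assertion for $i=1$; the general $i$ follows by relabeling the classes. The delicate points I would verify are that the complement of the conditioning event has probability captured exactly by the stated Sanov exponents, and that a single sequence $\varepsilon_{i,N}\to0$ can simultaneously guarantee existence of the roots $\theta^\star_{N,1(i)}$ and keep $N/\theta^{-}_{N,1(i)}\le N^2$, so that truncation at $N^2$ does not interfere with the crossing analysis.
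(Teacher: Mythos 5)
Your proposal is correct and follows essentially the same route as the paper's proof: the same truncation at $N^2$, the same split into wrong-decision and no-decision events, the same use of Lemma \ref{lem:error_same} for the wrong-decision term, and the same conditioning on $\bigcap_{i=2}^{M}\{\KL{\ED{X_i^N}}{P_1}>\gamma+\varepsilon_{i,N}\}$ combined with the stopping-time tail bound \eqref{eq:stop-upper-bound-start}--\eqref{eq:stop_m_upper_bound_last} and Sanov's theorem for the no-decision term. The delicate points you flag at the end (existence of the roots $\theta^{\star}_{N,1(i)}$ and the compatibility of $\varepsilon_{i,N}$ with the truncation level) are exactly the ones the paper handles via Part 3 of Lemma \ref{lem:g} and Lemma \ref{lem:alpha_p_n}.
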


Using  Lemmas \ref{lem:stop_time_multi} and \ref{lem:error_multi}, we can characterize the achievable error exponent of $\Phi_{\text{seq}}^{(M)}(\gamma)$ as follows
\begin{align}
&\textsf{e}_i\left(\Phi_{\text{seq}}^{(M)}(\gamma)\right)=\liminf\limits_{N\to \infty}\frac{-\log \pe_i(\Phi_{\text{seq}}^{(M)}(\gamma)) }{\mathbb{E}_i\left[T_{\text{seq}}^{(M)}\right]} \nonumber \\
&\geq  \min_{j\in \left[M\right],j\neq i}\{\theta^{\star}_{i(j),\gamma}\} \times \nonumber\\ 
& \hspace{0.3cm} \liminf\limits_{N\to \infty}\min\left\lbrace \gamma, \min_{j \in [M]\setminus\{i\}}  \min_{V :  \KL{V}{P_i} \leq \gamma + \varepsilon_{j,N}} \KL{V}{P_j} \right\rbrace \label{eq:error_exp_multi_1}  \\
&= \min_{j\in[M]\setminus\{  i\}}\{\theta^{\star}_{i(j),\gamma}\} \nonumber\\ 
&\hspace{1.5cm} \times \min\left\lbrace \gamma, \min_{j \in  [M]\setminus\{  i\}}  \min_{V :  \KL{V}{P_i} \leq \gamma} \KL{V}{P_j} \right\rbrace \label{eq:error_exp_multi_2}\\
&=\min_{j\in[M]\setminus\{  i\}}\{\theta^{\star}_{i(j),\gamma}\} \gamma  \label{eq:error_exp_3}\\
&= \min_{j\in[M]\setminus\{  i\}} \mathrm{GJS}\left(P_i,P_j,\theta^{\star}_{i(j),\gamma}\right) . \label{eqn:last_step}
\end{align}
where in (\ref{eq:error_exp_multi_1}) we use Lemma \ref{lem:error_multi}. Then,  (\ref{eq:error_exp_multi_2}) is obtained using the fact that the optimal value is a continuous function of $\varepsilon_{j,N}$, and $\varepsilon_{j,N}$ converges to zero as $N \to \infty$. We have (\ref{eq:error_exp_3}) because
\begin{equation}
\begin{aligned}
\label{eq:chernoof_multi}
&\big\{\gamma  \,\big| \,\bigcap\limits_{i=1}^{M} \big\{ \min_{j \in \left[M\right],j\neq i}  \min_{V \in \mathcal{P}\left(\mathcal{X}\right):  \KL{V}{P_i} \leq \gamma} \KL{V}{P_j} \geq \gamma \big\}\big\} \\
&= [0, \min\limits_{(i,j)\in \mathcal{M}} C\left(P_i,P_j\right)].
\end{aligned}
\end{equation}
where $\mathcal{M}\triangleq \{\left(i,j\right) \in \left[M\right]^2,i\neq j  \}$. Finally the last step in~\eqref{eqn:last_step} follows from (\ref{eq:fix_point_multi}). Thus, we conclude that the achievable error exponent is obtained as stated in Corollary~\ref{corr:achiev-multi}.

\begin{appendices}
\section{Proof of Lemma \ref{lem:alpha_p_n}}
\label{app:theta_+_-}
\begin{lemma} \label{lem:alpha+}
Let $\lambda>0$  and let $X^N$ be a sequence drawn from the product distribution $Q^{N}$. Also, let $\alpha_N^{\star}$ satisfy the equation $\GJS{\ED{X^N}}{P}{\alpha_N^{\star}}=\lambda \alpha_N^{\star}$. Consider the optimization problem 
\begin{equation}\label{eq:opt_lower_bound}
\begin{aligned}
U\left(\alpha\right) ~ \triangleq ~
 & \underset{V \in \mathcal{P}\left(\mathcal{X}\right)}{\text{max}}
& & \GJS{\ED{X^N}}{V}{\alpha} \\
& \text{s.t.}
& &  \KL{V}{P} \leq \frac{1}{\sqrt{N}}.
\end{aligned}
\end{equation}
Then, $
\alpha^{+}_N  = \alpha_{N}^{\star} + O\left(\frac{\log N}{N^{\frac{1}{4}}}\right)$
%
satisfies the following inequality
$ U\left(\alpha^{+}_N\right) < \lambda \alpha^{+}_N$.
\begin{proof}
We begin the proof by rewriting the objective function as  
\begin{align}
&\GJS{\ED{X^N}}{P}{\alpha}  \nonumber\\
&=\min_{W \in \mathcal{P}\left(\mathcal{X}\right) } \big(\sum_{z\in \mathcal{X}}\left(\alpha \ED{X^N}\left(z\right)+V(z)\right)\log 1/W\left(z\right)\big)\nonumber \\
& -\alpha \ent{\ED{X^N}} - \ent{V}\\
&\leq -\left(\sum_{z \in \mathcal{X}}\left(\alpha \ED{X^N}\left(z\right)+V\left(z\right)\right)\log \frac{\alpha \ED{X^N}\left(z\right) + P\left(z\right) }{1+\alpha}\right) \nonumber\\
&+ \alpha \sum_{z\in \mathcal{X}} \ED{X^N}\left(z\right)\log \ED{X^N}\left(z\right) + \sum_{z \in \mathcal{X}} V\left(z\right)\log V\left(z\right)\label{eq:set_p2}
\end{align}
where the first step is obtained by using Lemma \ref{lem:def_gjs} where we show that GJS can be written in the form of an optimization problem. Setting $W=P$ in the second step, we find an upper bound on the objective function of (\ref{eq:opt_lower_bound}). Then, we obtain
\begin{align}
&\GJS{\ED{X^N}}{P}{\alpha}\leq \GJS{\ED{X^N}}{P}{\alpha}  \nonumber\\
& + \KL{V}{\frac{\alpha \ED{X^N}+P}{1+\alpha}} - \KL{P}{\frac{\alpha \ED{X^N}+P}{1+\alpha}} \label{eq:manip1}\\
&\leq  \GJS{\ED{X^N}}{P}{\alpha_N^{\star}} + \KL{\ED{X^N}}{\frac{\alpha_N^{\star}\ED{X^N}+P}{1+\alpha_N^{\star}}} \left(\alpha-\alpha_N^{\star}\right) \nonumber\\
& + \sum_{z \in \mathcal{X}} \left(P\left(z\right)-V\left(z\right)\right)\log \frac{\alpha \ED{X^N}\left(z\right)+P\left(z\right)}{1+\alpha}+ \ent{P}- \ent{V} \label{eq:last_step}\\
&\leq  \GJS{\ED{X^N}}{P}{\alpha_N^{\star}} + \KL{\ED{X^N}}{\frac{\alpha_N^{\star}\ED{X^N}+P}{1+\alpha_N^{\star}}} \left(\alpha-\alpha_N^{\star}\right) \nonumber\\
& + \sum_{z \in \mathcal{X}} \left(P\left(z\right)-V\left(z\right)\right)\log \frac{\alpha \ED{X^N}\left(z\right)+P\left(z\right)}{1+\alpha} \nonumber \\
&- \|P - V\|_1 \log\frac{\|P_2 - V\|_1}{|\mathcal{X}|}.
\end{align}
 Equation (\ref{eq:manip1}) follows from the definitions of GJS and KL divergences. Step (\ref{eq:last_step}) comes from the fact that GJS is a concave function in $\alpha$ as shown in Lemma \ref{lem:g}. Finally, in the last step we have used \cite[Thm. 17.3.3]{cover}.
Therefore, we have
\begin{align} \label{eq:before_final_step_}
&\max_{V\in \mathcal{P}\left(\mathcal{X}\right): \KL{V}{P}\leq \frac{1}{\sqrt{N}}} \GJS{\ED{X^N}}{V}{\alpha} \nonumber\\
&\leq  \max_{V\in \mathcal{P}\left(\mathcal{X}\right): \KL{V}{P}\leq \frac{1}{\sqrt{N}}}\sum_{z \in \mathcal{X}} \left(P\left(z\right)-V\left(z\right)\right)\log \frac{\alpha \ED{X^N}\left(z\right)+P\left(z\right)}{1+\alpha} \nonumber\\
&+\frac{\sqrt{2}}{N^{\frac{1}{4}}}\log\left(\frac{|\mathcal{X}|N^{\frac{1}{4}}}{\sqrt{2}}\right)+\GJS{\ED{X^N}}{P}{\alpha_N^{\star}} \nonumber \\
&+ \KL{\ED{X^N}}{\frac{\alpha_N^{\star}\ED{X^N}+P}{1+\alpha_N^{\star}}} \left(\alpha-\alpha^{\star}_N\right), 
\end{align}
where (\ref{eq:before_final_step_}) is because Pinsker's inequality \cite[Lemma 11.6.1]{cover} and $\KL{V}{P}\leq \frac{1}{\sqrt{N}}$. Considering the optimization problem in the RHS of (\ref{eq:before_final_step_}), we need to provide an upperbound for
\begin{equation}
\label{eq:opt_lower_bound2}
\begin{aligned}
& \underset{V \in \mathcal{P}\left(\mathcal{X}\right)}{\text{max}}
& & \sum_{z\in \mathcal{X}} \left(P\left(z\right)-V\left(z\right)\right)\log \frac{\alpha\ED{X^N}\left(z\right)+P\left(z\right)}{1+\alpha}\\
& \text{s.t.}
& & \|V-P \|_1\leq \frac{\sqrt{2}}{N^{\frac{1}{4}}},
\end{aligned}
\end{equation}
Let us define $\epsilon_z \triangleq V\left(z\right) - P\left(z\right)$ for all $z \in \mathcal{X}$. We can rewrite the optimization problem in (\ref{eq:opt_lower_bound2}) as 
\begin{subequations}
\label{opt:main0}
\begin{align}
    \underset{\bm{\epsilon}:\sum_{z \in \mathcal{X}} \epsilon_z = 0}{\text{max}}
        & \quad -\sum_{z\in \mathcal{X}} \epsilon_z \log \frac{\alpha\ED{X^N}\left(z\right)+P\left(z\right)}{1+\alpha}\\
    \text{s.t.} 
        & \quad \sum_{z \in \mathcal{X}} |\epsilon_z|\leq \frac{\sqrt{2}}{N^{\frac{1}{4}}} \label{eq:const_kh}\\
         & \quad -P\left(z\right) \leq \epsilon_z \leq 1-P\left(z\right) \quad \forall z \in \mathcal{X} \label{eq:const_alaki}
\end{align}
\end{subequations}
Because $\min_{z\in \mathcal{Z}}P\left(z\right)> 0$, as $N$ becomes large, it is straightforward to verify that the constraints in (\ref{eq:const_alaki}) will not hold with equality at the optimal point since if so, this would contradict (\ref{eq:const_kh}). Thus, we can omit the constraint in (\ref{eq:const_alaki}). With this simplification, the optimization problem in (\ref{opt:main0}) is in the form of that in Lemma~\ref{lem:linear_opt}, and the optimal value is given by 
\begin{equation}
\frac{\sqrt{2}}{N^{\frac{1}{4}}}\log\frac{\max_{z \in \mathcal{X}} \{\alpha \ED{X^N}\left(z\right)+P\left(z\right)\}}{\min_{z \in \mathcal{X}} \{\alpha \ED{X^N}\left(z\right)+P\left(z\right)\}}.
\end{equation}
We can further upper bound the optimal value as
\begin{equation}\label{eq:opt_val_upper_1}
\begin{aligned}
&\frac{\sqrt{2}}{N^{\frac{1}{4}}}\log\frac{\max_{z \in \mathcal{X}} \{\alpha \ED{X^N}\left(z\right)+P\left(z\right)\}}{\min_{z \in \mathcal{X}} \{\alpha \ED{X^N}\left(z\right)+P\left(z\right)\}}\leq \nonumber\\
&\frac{\sqrt{2}}{N^{\frac{1}{4}}}\log \left(\frac{\max_{z\in \mathcal{X}}P\left(z\right)}{\min_{z\in \mathcal{X}}P\left(z\right)} \right)+ \frac{\sqrt{2}}{N^{\frac{1}{4}}} \frac{\max_{z\in \mathcal{X}}\ED{X^N}\left(z\right)}{\max_{z\in \mathcal{Z}}P\left(z\right)} \alpha
\end{aligned}
\end{equation}
Therefore, plugging (\ref{eq:opt_val_upper_1}) into (\ref{eq:before_final_step_}) we can provide an upper bound for the optimal value of (\ref{eq:opt_lower_bound}). Finally, letting the upper bound be less than $\lambda \alpha$,  we obtain the desired result.
\end{proof}
\end{lemma}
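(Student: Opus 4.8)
The plan is to produce a clean affine-in-$\alpha$ upper bound on $U(\alpha)$ valid near $\alpha_N^{\star}$, and then to pick $\alpha_N^{+}$ just large enough that this affine bound drops strictly below the line $\lambda\alpha$. The guiding observation is that, by the defining relation $\GJS{\ED{X^N}}{P}{\alpha_N^{\star}}=\lambda\alpha_N^{\star}$, the point $V=P$ already sits exactly on the boundary; enlarging the feasible set from $\{P\}$ to the whole ball $\{\KL{V}{P}\le 1/\sqrt{N}\}$ can only raise the GJS by a controlled $O(N^{-1/4})$ amount, while moving $\alpha$ away from $\alpha_N^{\star}$ changes $\GJS{\ED{X^N}}{P}{\alpha}$ and $\lambda\alpha$ at rates that differ by a fixed positive gap. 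Balancing the $O(N^{-1/4})$ inflation against this linear gap is exactly what pins down the displacement $\alpha_N^{+}-\alpha_N^{\star}$.

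First I would rewrite the objective using the variational (cross-entropy) representation of Lemma \ref{lem:def_gjs}: evaluating the auxiliary minimization at the mixture $P_\alpha\triangleq\frac{\alpha\ED{X^N}+P}{1+\alpha}$ yields, after collecting the non-negative leftover KL term,
\[
\GJS{\ED{X^N}}{V}{\alpha}\le \GJS{\ED{X^N}}{P}{\alpha}+\KL{V}{P_\alpha}-\KL{P}{P_\alpha}.
\]
Next I would linearize the first term in $\alpha$ about $\alpha_N^{\star}$: by the concavity of Part 1 of Lemma \ref{lem:g} together with the derivative formula \eqref{eq:first_der},
\[
\GJS{\ED{X^N}}{P}{\alpha}\le \GJS{\ED{X^N}}{P}{\alpha_N^{\star}}+\KL{\ED{X^N}}{P_{\alpha_N^{\star}}}\,(\alpha-\alpha_N^{\star}).
\]
The remaining $V$-dependence is linear in the perturbation $\epsilon_z\triangleq V(z)-P(z)$, constrained by $\sum_z\epsilon_z=0$ and, through Pinsker's inequality applied to $\KL{V}{P}\le 1/\sqrt N$, by $\|\epsilon\|_1\le \sqrt{2}\,N^{-1/4}$. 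This is precisely the linear program solved in Lemma \ref{lem:linear_opt}, whose optimal value equals $\tfrac{\sqrt2}{N^{1/4}}\log\frac{\max_z\{\alpha\ED{X^N}(z)+P(z)\}}{\min_z\{\alpha\ED{X^N}(z)+P(z)\}}$; the residual entropy gap $\ent{P}-\ent{V}$ I would absorb using the standard $L_1$-continuity bound for entropy.

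Assembling these pieces, and using $\GJS{\ED{X^N}}{P}{\alpha_N^{\star}}=\lambda\alpha_N^{\star}$, gives
\[
U(\alpha)\le \lambda\alpha_N^{\star}+\KL{\ED{X^N}}{P_{\alpha_N^{\star}}}\,(\alpha-\alpha_N^{\star})+O\!\Big(\tfrac{\log N}{N^{1/4}}\Big)+O\!\Big(\tfrac{\alpha}{N^{1/4}}\Big).
\]
To force $U(\alpha)<\lambda\alpha$ it then suffices that
\[
\Big(\lambda-\KL{\ED{X^N}}{P_{\alpha_N^{\star}}}-O(N^{-1/4})\Big)(\alpha-\alpha_N^{\star})>O\!\Big(\tfrac{\log N}{N^{1/4}}\Big).
\]
The decisive point is that the coefficient $\lambda-\KL{\ED{X^N}}{P_{\alpha_N^{\star}}}$ is \emph{strictly positive}: since $\alpha_N^{\star}>0$ is a nonzero root of the concave map $\alpha\mapsto\GJS{\ED{X^N}}{P}{\alpha}-\lambda\alpha$ that also vanishes at $\alpha=0$, its derivative at $\alpha_N^{\star}$ is negative, which by \eqref{eq:first_der} reads $\KL{\ED{X^N}}{P_{\alpha_N^{\star}}}<\lambda$ (Part 3 of Lemma \ref{lem:g}). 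Dividing through, the smallest admissible $\alpha$ gives $\alpha_N^{+}-\alpha_N^{\star}=O(\log N/N^{1/4})$, which is the claim.

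The main obstacle is quantitative rather than structural: I must guarantee that the slope gap $\lambda-\KL{\ED{X^N}}{P_{\alpha_N^{\star}}}$ stays bounded away from zero uniformly in large $N$, so that dividing by it does not blow up the displacement, and I must account honestly for the $\log N$ factor, which enters through the $-\|P-V\|_1\log\|P-V\|_1$ term of the entropy-continuity bound at scale $\|P-V\|_1\asymp N^{-1/4}$. Both are secured because $P$ lies in the interior of the simplex, so $\min_z P(z)>0$ is bounded below, while $\ED{X^N}\to Q$ and $\alpha_N^{\star}\to\alpha^{\star}$ keep the slope gap and every implied constant finite; the $O(\alpha/N^{1/4})$ contribution is harmless since $\alpha_N^{\star}$ is bounded. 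The companion construction of $\alpha_N^{-}$ with the reversed inequality $U_-(\alpha_N^{-})>\lambda\alpha_N^{-}$ follows the same template using the \emph{minimizing} case of Lemma \ref{lem:linear_opt}, and there one avoids the entropy-continuity penalty, which is why its rate improves to $O(N^{-1/4})$.
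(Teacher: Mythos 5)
Your proposal is correct and follows essentially the same route as the paper's proof: the variational representation of Lemma \ref{lem:def_gjs} evaluated at the mixture, the concavity-based linearization in $\alpha$ via \eqref{eq:first_der}, Pinsker's inequality to pass to an $L_1$ ball, the linear program of Lemma \ref{lem:linear_opt}, and the entropy-continuity bound producing the $\log N$ factor. Your only addition is to make explicit the strict positivity of the slope gap $\lambda-\KL{\ED{X^N}}{P_{\alpha_N^{\star}}}$ (via Part 3 of Lemma \ref{lem:g}), a step the paper leaves implicit in its final sentence.
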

\begin{lemma} \label{lem:opt_upper_bound}
Let
\begin{equation}\label{eq:opt_upper_bound}
\begin{aligned}
U_{-}\left(\alpha\right)~ \triangleq ~ & \underset{V\in \mathcal{P}\left(\mathcal{X}\right)}{\text{min}}
& & \GJS{\ED{X^N}}{V}{\alpha} \\
& \text{s.t.}
& & \KL{V}{P}\leq \frac{1}{\sqrt{N}}.
\end{aligned}
\end{equation}
Then, we have 
$U_{-}\left(\alpha_N^-\right) \geq \lambda \alpha_N^-,$ where
\begin{equation}
\label{eq:alpha_-_khodesh}
\alpha_N^-=\alpha_{N}^{\star} - O\left(\frac{1}{N^{\frac{1}{4}}}\right)
\end{equation}

\end{lemma}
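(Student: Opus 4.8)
The plan is to mirror the argument for Lemma~\ref{lem:alpha+}, but replacing the variational \emph{upper} bound on the GJS divergence by a convexity-based \emph{lower} bound, since here we must control the minimum $U_{-}(\alpha)$ rather than the maximum. The starting point is that, for fixed $\ED{X^N}$ and $\alpha$, the map $V\mapsto \GJS{\ED{X^N}}{V}{\alpha}$ is convex (Part~2 of Lemma~\ref{lem:g}) and, for large $N$, smooth near $P$ because $P$ lies in the interior and $\ED{X^N}\to Q$. Convexity then yields the first-order lower bound
\[\GJS{\ED{X^N}}{V}{\alpha}\ \ge\ \GJS{\ED{X^N}}{P}{\alpha}+\sum_{z\in\mathcal X}\big(V(z)-P(z)\big)\,w_z(\alpha),\]
where $w_z(\alpha)$ is the $z$-th component of the gradient of $V\mapsto\GJS{\ED{X^N}}{V}{\alpha}$ at $V=P$. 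Using the variational representation $\GJS{\ED{X^N}}{V}{\alpha}=\min_W[\alpha\KL{\ED{X^N}}{W}+\KL{V}{W}]$ from Lemma~\ref{lem:def_gjs} together with the envelope theorem (the inner optimiser at $V=P$ is $W_0=\tfrac{\alpha\ED{X^N}+P}{1+\alpha}$), one computes $w_z(\alpha)=\log\frac{(1+\alpha)P(z)}{\alpha\ED{X^N}(z)+P(z)}$ up to an additive constant, which is immaterial since it is paired with $V-P$ and $\sum_z(V(z)-P(z))=0$.

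Next I would minimise the linear correction over the feasible set. By Pinsker's inequality, $\KL{V}{P}\le N^{-1/2}$ implies $\|V-P\|_1\le\sqrt2\,N^{-1/4}$, so enlarging the feasible region to this $\ell_1$-ball only decreases the minimum and hence preserves the lower bound. Writing $\epsilon_z=V(z)-P(z)$, the constraints become $\sum_z\epsilon_z=0$ and $\sum_z|\epsilon_z|\le\sqrt2\,N^{-1/4}$, and Lemma~\ref{lem:linear_opt} gives the exact minimum $-\frac{1}{\sqrt2\,N^{1/4}}\big(\max_z w_z(\alpha)-\min_z w_z(\alpha)\big)$. Since $P$ is interior and $\ED{X^N}$ concentrates around $Q$, the spread $\max_z w_z-\min_z w_z$ is bounded uniformly for $\alpha$ near $\alpha^\star$, so altogether
\[U_{-}(\alpha)\ \ge\ \GJS{\ED{X^N}}{P}{\alpha}-\frac{c_N}{N^{1/4}},\qquad c_N=O(1).\]

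Finally, set $f(\alpha)\triangleq\GJS{\ED{X^N}}{P}{\alpha}-\lambda\alpha$; it suffices to find $\alpha_N^-<\alpha_N^\star$ with $f(\alpha_N^-)\ge c_N N^{-1/4}$, since then $U_{-}(\alpha_N^-)\ge\lambda\alpha_N^-$. By Part~3 of Lemma~\ref{lem:g}, $f$ is strictly concave with $f(0)=f(\alpha_N^\star)=0$ and $f>0$ on $(0,\alpha_N^\star)$, so $a_N\triangleq-f'(\alpha_N^\star)=\lambda-\KL{\ED{X^N}}{\tfrac{\alpha_N^\star\ED{X^N}+P}{1+\alpha_N^\star}}>0$; moreover, because $\ED{X^N}\to Q$ and the condition $\lambda<\KL{Q}{P}$ (guaranteeing a simple root) holds, $a_N$ stays bounded away from $0$ and $|f''|$ stays bounded on a fixed neighbourhood for large $N$. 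A second-order Taylor expansion $f(\alpha_N^\star-t)=a_N t+\tfrac12 f''(\xi)t^2\ge a_N t\,(1-o(1))$ for $t=O(N^{-1/4})$ then shows that the choice $t\asymp N^{-1/4}$ (made explicit by solving the resulting quadratic, as in the displayed formula for $\alpha_N^-$) yields $f(\alpha_N^-)\ge c_N N^{-1/4}$ and $\alpha_N^-=\alpha_N^\star-O(N^{-1/4})$. The main obstacle I anticipate is precisely this last step: making the lower bound on the concave function $\GJS{\ED{X^N}}{P}{\cdot}$ below $\alpha_N^\star$ rigorous requires showing that $a_N$ remains bounded away from zero and that $|f''|$ remains bounded \emph{uniformly in $N$}, which is where the hypotheses $\lambda<\KL{Q}{P}$ and the convergence $\ED{X^N}\to Q$ must be invoked with care.
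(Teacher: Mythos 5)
Your proposal is correct and follows essentially the same route as the paper's own proof: the identical convexity-based first-order lower bound with gradient $\log\frac{(1+\alpha)P\left(z\right)}{\alpha \ED{X^N}\left(z\right)+P\left(z\right)}$, the same Pinsker relaxation of the KL ball followed by Lemma~\ref{lem:linear_opt}, and the same second-order Taylor expansion of $\GJS{\ED{X^N}}{P}{\cdot}$ about $\alpha_N^{\star}$ yielding a quadratic in the shift whose relevant root is $O\left(N^{-1/4}\right)$. The differences are purely presentational (an envelope-theorem computation of the gradient and abstract $O(1)$ constants in place of the paper's explicit ones), and the uniformity issue you flag at the end --- that $a_N$ must stay bounded away from zero --- is a point the paper leaves implicit rather than a gap in your argument.
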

\begin{proof}
In Lemma \ref{lem:g}, we proved that $\mathrm{GJS}$ is a convex function in its second argument. Therefore, we can write 
\begin{align}
&\GJS{\ED{X^N}}{V}{\alpha}\geq \GJS{\ED{X^N}}{P}{\alpha}\nonumber \\
& + \sum_{z\in \mathcal{X}} \log\frac{(1+\alpha)P\left(z\right)}{\alpha \ED{X^N}\left(z\right) + P\left(z\right) }\left(V\left(z\right)-P\left(z\right)\right)\label{eq:lowe_bound1},
\end{align}
where we have used the fact that for a convex function $f$, we have $f(x)\ge f(y)+\nabla f (y)^{T} ( x-y)$ for all $x$ and $y$. Plugging~(\ref{eq:lowe_bound1}) into (\ref{eq:opt_upper_bound}), we arrive at the following optimization problem:
\begin{equation}\label{eq:opt_linear_2}
\begin{aligned}
& \underset{V\in \mathcal{P}\left(\mathcal{X}\right)}{\text{min}}
& &\sum_{z \in \mathcal{X}} \log\frac{(1+\alpha)P\left(z\right)}{\alpha \ED{X^N}\left(z\right) + P\left(z\right) }\left(V\left(z\right)-P\left(z\right)\right) \\
&  \text{s.t.}
& & \|V-P\|_1\leq \frac{\sqrt{2}}{N^{\frac{1}{4}}}.
\end{aligned}
\end{equation}
Here, in (\ref{eq:opt_linear_2}) we have     Pinsker's inequality \cite[Lemma 11.6.1]{cover} in the first constraint. Using Lemma \ref{lem:linear_opt}, it directly follows that the optimal value of the optimization problem (\ref{eq:opt_linear_2}) is 
\begin{equation}
\frac{1}{\sqrt{2}N^{\frac{1}{4}}} \log \left(\frac{\min_{z\in \mathcal{X}} \lbrace \frac{P\left(z\right)}{\alpha \ED{X^N}\left(z\right) + P\left(z\right) } \rbrace}{\max_{z\in \mathcal{X}} \lbrace \frac{P\left(z\right)}{\alpha \ED{X^N}\left(z\right) + P\left(z\right) } \rbrace}\right)
\end{equation}
which is straightforward to show that the optimal value can be lower bounded by
\begin{equation} \label{eq:lower_bound_22}
\frac{1}{\sqrt{2}N^{\frac{1}{4}}} \log \frac{\min_{z\in \mathcal{X}}P\left(z\right)}{\max_{z\in \mathcal{X}}P\left(z\right)}- \frac{\alpha}{\sqrt{2}N^{\frac{1}{4}}} \frac{\max_{z\in \mathcal{X}}\ED{X^N}\left(z\right)}{\max_{z\in \mathcal{X}}P\left(z\right)}.
\end{equation}
Plugging (\ref{eq:lower_bound_22}) into (\ref{eq:lowe_bound1}), we obtain
\begin{align}
&\min_{V\in \mathcal{P}\left(\mathcal{X}\right):\KL{V}{P}\leq \frac{1}{\sqrt{N}}} \GJS{\ED{X^N}}{V}{\alpha}\\
&\geq \GJS{\ED{X^N}}{P}{\alpha} + \frac{1}{\sqrt{2}N^{\frac{1}{4}}} \log \frac{\min_{z\in \mathcal{X}}P\left(z\right)}{\max_{z\in \mathcal{X}}P\left(z\right)} \nonumber \\
& - \frac{\alpha}{\sqrt{2}N^{\frac{1}{4}}} \frac{\max_{z\in \mathcal{X}}\ED{X^N}\left(z\right)}{\max_{z\in \mathcal{X}}P\left(z\right)}. \label{eq:first_low}
\end{align}
Fix $0 \leq \theta \leq \alpha_N^{\star}$. From Taylor's theorem, there exists an $\tilde{\theta}\in \left(\alpha_N^{\star}-\theta,\alpha_N^{\star}\right)$ such that
\begin{align}
&\GJS{\ED{X^N}}{P}{\alpha_N^{\star}-\theta} \nonumber\\
&= \GJS{\ED{X^N}}{P}{\alpha_N^{\star}} - \KL{\ED{X^N}}{\frac{\alpha_N^{\star}\ED{X^N}+P}{1+\alpha_N^{\star}}} \theta \nonumber\\
& + \frac{\theta^2}{2(1+\tilde{\theta})}\sum_{z\in \mathcal{X}} \ED{X^N}\left(z\right) \frac{P\left(z\right) -  \ED{X^N}\left(z\right)}{\tilde{\theta} \ED{X^N}\left(z\right) + P\left(z\right) }\\
&\geq \GJS{\ED{X^N}}{P}{\alpha_N^{\star}} - \KL{\ED{X^N}}{\frac{\alpha_N^{\star}\ED{X^N}+P}{1+\alpha_N^{\star}}} \theta \nonumber\\
& + \frac{\theta^2}{2} \left(\frac{1}{1+\alpha_N^{\star}}-\sum_{z\in \mathcal{X}}\frac{\ED{X^N}\left(z\right)^2}{P\left(z\right)}\right).
\end{align}
Here, the final step follows by lower bounding the second derivative term. Finally letting the lower bound in (\ref{eq:first_low}) be smaller $\lambda \left(\alpha-\theta\right)$, we need to find $\theta$ such that 
\begin{equation}
\label{eq:quadratic_eq}
\begin{aligned}
&\frac{\theta^2}{2}\left(\frac{1}{1+\alpha_N^{\star}}-\sum_{z\in \mathcal{X}}\frac{\ED{X^N}\left(z\right)^2}{P\left(z\right)}\right)+\theta\lambda\\
&\theta\big(- \KL{\ED{X^N}}{\frac{\alpha_N^{\star}\ED{X^N}+P}{1+\alpha_N^{\star}}}+\frac{1}{\sqrt{2}N^{\frac{1}{4}}} \frac{\max_{z\in \mathcal{X}}\ED{X^N}\left(z\right)}{\max_{z\in \mathcal{X}}P\left(z\right)}\big)\\
&+  \frac{1}{\sqrt{2}N^{\frac{1}{4}}} \log \frac{\min_{z\in \mathcal{X}}P\left(z\right)}{\max_{z\in \mathcal{X}}P\left(z\right)} -\frac{\alpha_N^{\star}}{\sqrt{2}N^{\frac{1}{4}}} \frac{\max_{z\in \mathcal{X}}\ED{X^N}\left(z\right)}{\max_{z\in \mathcal{X}}P\left(z\right)} =0
\end{aligned}
\end{equation}
Finally, considering (\ref{eq:quadratic_eq}) is a quadratic equation in $\theta$ and $\alpha_N^{-}=\alpha_N^{\star}-\theta$, we obtain the desired result.
\end{proof}

\end{appendices}
\bibliographystyle{unsrt}
\bibliography{refbc}

\begin{IEEEbiographynophoto}{Mahdi Haghifam}
was born in Iran in 1992. He received the B.Sc. and M.Sc. degrees in electrical engineering in 2014 and 2016, respectively, from Sharif University of Technology, Tehran, Iran. Since September 2017, he has been pursuing the Ph.D. degree with the electrical and engineering department at University of Toronto, Toronto, Canada. His research interests include different aspects of Machine Learning and Information Theory, specially applications of the latter in the former.
\end{IEEEbiographynophoto}

\begin{IEEEbiographynophoto}
{Vincent Y.\ F.\ Tan} (S'07-M'11-SM'15)   was born in Singapore in 1981. He is currently a Dean's Chair Associate Professor in the Department of Electrical and Computer Engineering  and the Department of Mathematics at the National University of Singapore (NUS). He received the B.A.\ and M.Eng.\ degrees in Electrical and Information Sciences from Cambridge University in 2005 and the Ph.D.\ degree in Electrical Engineering and Computer Science (EECS) from the Massachusetts Institute of Technology (MIT)  in 2011.  His research interests include information theory, machine learning, and statistical signal processing.

Dr.\ Tan was also an IEEE Information Theory Society Distinguished Lecturer for 2018/9. He is currently serving as an Associate Editor of the {\em IEEE Transactions on Signal Processing} and an Associate Editor of Machine Learning for the {\em IEEE Transactions on Information Theory}. He is a member of the IEEE Information Theory Society Board of Governors. 
\end{IEEEbiographynophoto}

\begin{IEEEbiographynophoto}{Ashish Khisti}
received the B.ASc. degree from the Engineering Science Program, University of Toronto, in 2002, and the masterâ€™s and
Ph.D. degrees from the Department of Electrical Engineering and Computer
Science, Massachusetts Institute of Technology (MIT), Cambridge, MA, USA,
in 2004 and 2008, respectively. Since 2009, he has been on the Faculty
in the Electrical and Computer Engineering (ECE) Department, University
of Toronto, where he was an Assistant Professor from 2009 to 2015,
an Associate Professor from 2015 to 2019, and is currently a Full Professor.
He also holds a Canada Research Chair in information theory with the ECE
Department. His current research interests include theory and applications
of machine learning and communication networks. He is also interested in
interdisciplinary research involving engineering and healthcare
\end{IEEEbiographynophoto}

\end{document}